\documentclass{article}
\usepackage{iclr2026_conference,times}


\usepackage{amsmath,amsfonts,bm}









\def\eqref#1{equation~\ref{#1}}









\def\1{\bm{1}}










\DeclareMathAlphabet{\mathsfit}{\encodingdefault}{\sfdefault}{m}{sl}
\SetMathAlphabet{\mathsfit}{bold}{\encodingdefault}{\sfdefault}{bx}{n}











\newcommand{\Var}{\mathrm{Var}}



\DeclareMathOperator*{\argmin}{arg\,min}

\usepackage{hyperref}
\usepackage{url}

\usepackage{multirow}
\usepackage{subcaption}
\usepackage{graphicx}
\usepackage{pgffor}
\usepackage{animate}
\usepackage{wrapfig}
\usepackage{caption}
\usepackage{array}

\usepackage{enumitem}
\usepackage{amsmath}
\usepackage{amssymb}
\usepackage{amsthm}
\usepackage{mathtools}

\usepackage{comment}

\usepackage{thmtools}
\usepackage{thm-restate}
\usepackage{cleveref}

\newtheorem{lemma}{Lemma}

\newtheorem{assumption}{Assumption}

\newtheorem{principle}{Principle}

\usepackage{algorithm}
\usepackage{algorithmic}

\usepackage{booktabs}
\usepackage{threeparttable}


\usepackage{lipsum}
\setlist[itemize]{leftmargin=2em}
\setlist[enumerate]{leftmargin=2em}

\title{Incomplete Data, Complete Dynamics: \\A Diffusion Approach}

\author{Zihan Zhou$^{1, 2}$, Chenguang Wang$^{1, 2}$, Hongyi Ye$^1$, Yongtao Guan$^1$, Tianshu Yu$^{1, 2}$\thanks{corresponding author} \\
$^1$School of Data Science, The Chinese University of Hong Kong, Shenzhen \\
$^2$Shanghai Artificial Intelligence Laboratory\\
\texttt{\{zihanzhou1, chenguangwang, hongyiye\}@link.cuhk.edu.cn} \\
\texttt{\{guanyongtao, yutianshu\}@cuhk.edu.cn}
}

%

\iclrfinalcopy 
\begin{document}

\maketitle

\begin{abstract}

Learning physical dynamics from data is a fundamental challenge in machine learning and scientific modeling. Real-world observational data are inherently incomplete and irregularly sampled, posing significant challenges for existing data-driven approaches. 
In this work, we propose a principled diffusion-based framework for learning physical systems from \textit{incomplete training samples}.  
To this end, our method strategically partitions each such sample into observed context and unobserved query components through a carefully designed splitting strategy, then trains a conditional diffusion model to reconstruct the missing query portions given available contexts. This formulation enables accurate imputation across arbitrary observation patterns without requiring complete data supervision. Specifically, we provide theoretical analysis demonstrating that our diffusion training paradigm on incomplete data achieves asymptotic convergence to the true complete generative process under mild regularity conditions. Empirically, we show that our method significantly outperforms existing baselines on synthetic and real-world physical dynamics benchmarks, including fluid flows and weather systems, with particularly strong performance in limited and irregular observation regimes.
These results demonstrate the effectiveness of our theoretically principled approach for learning and imputing partially observed dynamics.
\end{abstract}

\section{Introduction}
Learning physical dynamics from observational data represents a cornerstone challenge in machine learning and scientific computing, with applications spanning weather forecasting~\citep{conti2024artificial, zhang2025machine}, fluid dynamics~\citep{wang2024recent, brunton2024promising}, biological systems modeling~\citep{qi2024machine, goshisht2024machine}, and beyond. Classical physics-based approaches require explicit specification of governing equations and boundary conditions, while data-driven methods offer the promise of discovering hidden dynamics directly from observations~\citep{luo2025physics, meng2025physics}. However, a fundamental bottleneck persists: real-world observational data are inherently incomplete, irregularly sampled, and subject to various forms of missing information, making it difficult for existing approaches to learn accurate representations of the underlying dynamics.

\paragraph{Inherent sparsity of physical measurements.} Physical science data fundamentally differs from typical computer vision datasets. Unlike natural images, where complete pixel grids are the norm, real-world physical measurements are inherently sparse and incomplete. Sensor networks provide observations only at discrete spatial locations, satellite imagery suffers from cloud occlusion, and experimental measurements are constrained by instrumental limitations. This incompleteness is not a temporary inconvenience to be resolved through better data collection—it is an intrinsic characteristic of how we observe physical systems.
\paragraph{Structured observation patterns.} Prior approaches to learning from incomplete physical data have largely adopted simplistic assumptions about observation patterns. Most existing methods assume pixel-level independent and identically distributed (i.i.d.) missing patterns, where each spatial location has an equal probability of being observed~\citep{daras2023ambient, dai2024sadi, simkus2025cfmi}. While some recent works have explored alternative missing patterns in their experimental evaluations, such as row/column missing for tabular data, they still employ the same training strategies regardless of the observation structure~\citep{ouyang2023missdiff}. This one-size-fits-all approach fails to leverage the specific characteristics of different mask distributions. In reality, observation patterns exhibit strong spatial structure: weather stations capture measurements within their local coverage areas, creating contiguous blocks of observations; satellite instruments observe swaths determined by orbital paths; underwater sensor arrays monitor volumes dictated by acoustic propagation. These structured patterns fundamentally differ from random pixel dropout and demand specialized training strategies. Our work addresses this gap by developing context-query partitioning strategies specifically tailored to the underlying mask distribution, ensuring effective learning across diverse observation patterns.
\paragraph{Lack of theoretical foundations.} While recent works have proposed various heuristic approaches for handling missing data in generative modeling, they lack rigorous theoretical foundations. Existing methods typically rely on empirical design choices without providing convergence guarantees or understanding of learning dynamics~\citep{ouyang2023missdiff, daras2023ambient, dai2024sadi, barth2024ensemble, simkus2025cfmi, majid2026ambient}. Moreover, some theoretically-motivated approaches suffer from prohibitive computational costs, requiring multiple complete model retraining cycles or complex importance weighting schemes that limit their applicability to low-dimensional toy problems~\citep{chen2024rethinking, givens2025score, zhang2025diffputer}. 




\paragraph{Our solution.} To address these challenges, we develop a theoretically principled diffusion-based framework that provides rigorous convergence guarantees while maintaining computational efficiency for high-dimensional physical dynamics problems. Our approach answers critical questions about whether diffusion models trained solely on incomplete data can recover complete data distributions, how observation patterns affect diffusion training efficiency, and under what conditions successful reconstruction of unobserved regions is guaranteed. In summary, our contributions are:
\begin{itemize}
\item \textbf{Methodical design}: We propose a novel conditional diffusion training paradigm that works directly with incomplete training samples, featuring a strategically designed context-query partitioning scheme tailored for physical dynamics.
\item \textbf{Theoretical guarantee}: We provide the first theoretical analysis proving that diffusion-based training on incomplete data with our paradigm asymptotically recovers the true complete dynamical process under mild regularity conditions.
\item \textbf{Strong results}: We conduct comprehensive experiments on both synthetic and real-world datasets, demonstrating substantial improvements in imputation accuracy over competitive baselines, particularly in challenging sparse observation regimes.
\end{itemize}

\section{Preliminaries}
In Appendix~\ref{app: related work}, we present a review of imputation methods and generative modeling approaches for missing data, which provides the broader context for our contributions. We also provide a detailed introduction to diffusion models in Appendix~\ref{app: Data matching diffusion models}, covering both standard \emph{noise matching} and the \emph{data matching} formulation that our method primarily employs. In this section, we formally define the problem of learning physical dynamics from incomplete observations. We establish the mathematical framework and notation that will be used throughout the paper.

We formalize the problem as follows. Let $\mathcal{X} \subset \mathbb{R}^d$ denote the space of complete data samples following an unknown distribution $p_{\text{data}}(\bm{x}_0)$. Binary masks $\bm{M} \in \{0, 1\}^d$ are drawn from distribution $p_{\text{mask}}(\bm{M})$, where 1 indicates observed elements. We assume that masks are conditionally independent of the data given the observation process: $p_{\text{mask}}(\bm{M} \mid \bm{x}_0) = p_{\text{mask}}(\bm{M})$. In practice, we have prior knowledge about the mask distribution $p_{\text{mask}}(\bm{M})$ based on the data collection process (e.g., sensor placement patterns, measurement protocols).

For each training instance $i$, we have $\bm{x}^{(i)}_{\text{obs}} = \bm{M}^{(i)} \odot \bm{x}_0^{(i)}$ denoting partially observed data and $\bm{x}^{(i)}_{\text{unobs}} = (1 - \bm{M}^{(i)}) \odot \bm{x}_0^{(i)}$ representing missing values. Crucially, our training dataset $\mathcal{D} = \{(\bm{x}^{(i)}_{\text{obs}}, \bm{M}^{(i)})\}_{i=1}^N$ contains only partial observations, no complete samples $\bm{x}_0^{(i)}$ are available during training. This setting reflects realistic scenarios where complete ground truth is unavailable. The objective is to learn a conditional generative model $p_\theta(\bm{x}_0 \mid \bm{x}_{\text{obs}}, \bm{M})$ that generates complete samples consistent with the observed elements, despite being trained solely on incomplete data.

\section{Method}
In this section, we present our approach for learning physical dynamics directly from incomplete observations using diffusion models. Our method addresses the fundamental challenge of training generative models when both training and test data are partially observed, without access to complete ground truth during training.
Our approach consists of three key components: \textbf{(1) Denoising data matching on incomplete training data} (Sec.~\ref{sec: method training}): We formulate a theoretically grounded training loss and establish conditions under which the model learns meaningful conditional expectations for all dimensions. \textbf{(2) Strategic context-query partitioning} (Sec.~\ref{sec: method partitioning}): We develop a principled strategy for partitioning incomplete samples into context and query components, enabling reconstruction of originally missing dimensions. \textbf{(3) Ensemble sampling for complete data reconstruction} (Sec.~\ref{sec: method sampling}): We bridge the gap between training on context masks and inference on full observations through ensemble averaging with theoretical convergence guarantees.
This unified framework enables robust learning from incomplete observations while providing strong theoretical foundations for reconstructing complete data from partial observations.

\subsection{Denoising data matching on incomplete training data} \label{sec: method training}
The fundamental challenge in learning from incomplete data is ensuring that training on incomplete observations enables the model to recover the complete underlying data distribution, despite never having access to complete samples during training. To address this, we formulate a training objective that learns a conditional generative model $p_{\bm{\theta}} \left( \bm{x}_0 \mid \bm{x}_{\text{obs}}, \bm{M} \right)$\footnote{For clarity, we note that in the full diffusion imputation setting, the ideal model would output $\mathbb{E}[\bm{x}_0 \mid \bm{x}_t, \bm{x}_{\text{obs}}, \bm{M}]$, conditioning on both the noisy state $\bm{x}_t$ and observations $\bm{x}_{\text{obs}}$. Our single-step sampling approach (Sec.~\ref{sec: method sampling}) simplifies this by using minimal noise ($t = \delta \approx 0$), effectively approximating $\mathbb{E}[\bm{x}_0 \mid \bm{x}_{\text{obs}}, \bm{M}]$. For cases requiring diversity generation, we provide a multi-step sampling procedure in Appendix~\ref{app: Multi-step sampling} that combines both sources of information through weighted averaging.} that generates complete samples consistent with the observed elements, despite being trained solely on incomplete data. Our approach strategically partitions incomplete samples and deliberately withholds information during training through a theoretically grounded framework.



Our key insight is to reframe the learning problem through a hierarchical masking strategy. For each incomplete sample $(\bm{x}_{\text{obs}}, \bm{M})$ in our training dataset, we treat the partially observed data $\bm{x}_{\text{obs}}$ as ``complete'' within the scope of available observations. We then sample context masks $\bm{M}_{\text{ctx}} \subseteq \bm{M}$ to represent ``observable'' portions and query masks $\bm{M}_{\text{qry}} \subseteq \bm{M}$ to represent ``query'' portions relative to $\bm{x}_{\text{obs}}$. Given the noisy sample $\bm{x}_{\text{obs}, t} = \bm{M} \odot \left( \alpha_t \bm{x}_{\text{obs}} + \sigma_t \bm{\epsilon} \right)$ at time $t$ in the diffusion process, we train the neural network $\bm{x}_{\bm{\theta}}$ to predict the \emph{complete} clean data $\bm{x}_{0}$ from the timestep $t$, context-masked noisey observations $\bm{M}_{\text{ctx}} \odot \bm{x}_{\text{obs}, t}$, and context mask $\bm{M}_{\text{ctx}}$, with our training loss is formulated as:
\begin{equation}\label{eq:loss func}
    \mathcal{L}(t, \bm{x}_{\text{obs}}, \bm{M}_{\text{ctx}}, \bm{M}_{\text{qry}}) = \| \bm{M}_{\text{qry}} \odot (\bm{x}_{\bm{\theta}}(t, \bm{M}_{\text{ctx}} \odot \bm{x}_{\text{obs}, t}, \bm{M}_{\text{ctx}}) - \bm{x}_{\text{obs}}) \|^2
\end{equation}
The key architectural choice is that the model receives only the context mask $\bm{M}_{\text{ctx}}$ and corresponding observed values, trying to provide the best estimate for the queried dimensions. A natural question arises: 

\textit{Since we train exclusively on incomplete data, how can this training loss enable the model to predict the originally missing portions of the data—regions that were never observed during training? }

Through the following theoretical analysis, we provide key insights into how training on incomplete observations can still lead to models capable of reconstructing complete data distributions.


\begin{restatable}
[Optimal solution under context masking without query information]{theorem}{osucmwqi}
\label{thm: osucmwqi}
 Let $\bm{x}_{\bm{\theta}}^*$ be the optimal solution by minimizing the loss in \eqref{eq:loss func}. Under the conditional independence of masks and data,  we have the following results:
    
    (i) Optimal solution: The optimal solution is given by
    \begin{equation}
    \scalebox{0.94}{$\left( \bm{x}_{\bm{\theta}} \left( t, \bm{M}_{\text{ctx}} \odot \bm{x}_{\text{obs}, t}, \bm{M}_{\text{ctx}} \right) \right)_i = \begin{cases}
        \mathbb{E} \left[ \left( \bm{x}_0 \right)_i \mid \bm{M}_{\text{ctx}} \odot \bm{x}_{\text{obs}, t}, \bm{M}_{\text{ctx}} \right], &P( \left(\bm{M}_{\text{qry}}\right)_i = 1 \mid \bm{M}_{\text{ctx}}) > 0 \\
        \text{an arbitrary value}, &P( \left(\bm{M}_{\text{qry}}\right)_i = 1 \mid \bm{M}_{\text{ctx}}) = 0
    \end{cases}$}
    \end{equation}
    where $i$ indicates the $i$-th entry of the vector. Specially, given the context mask $\bm{M}_{\text{ctx}}$, if the union of all possible query mask $\bm{M}_{\text{qry}}$ supports covers all spatial dimensions, we have
    \begin{equation}
        \bm{x}_{\bm{\theta}} \left( t, \bm{M}_{\text{ctx}} \odot \bm{x}_{\text{obs}, t}, \bm{M}_{\text{ctx}} \right) = \mathbb{E} \left[ \bm{x}_0 \mid \bm{M}_{\text{ctx}} \odot \bm{x}_{\text{obs}, t}, \bm{M}_{\text{ctx}} \right]
    \end{equation}
    (ii) Gradient magnitude scaling: The expected squared gradient magnitude with respect to the network output for dimension $i$ scales linearly with the query probability $p_i := P((\bm{M}_{\text{qry}})_i = 1 \mid \bm{M}_{\text{ctx}})$:
    \begin{equation}
        \mathbb{E}\left[\left(\frac{\partial \mathcal{L}}{\partial (\bm{x}_{\bm{\theta}})_i}\right)^2\right] = 4p_i \mathbb{E}\left[ \left((\bm{x}_{\bm{\theta}})_i - (\bm{x}_{\text{obs}})_i \right)^2 \Big| (\bm{M}_{\text{qry}})_i = 1\right]
    \end{equation}
    (iii) Parameter update frequency: The frequency of non-zero parameter updates for dimension $i$ is exactly $p_i$:
    \begin{equation}
        \label{eq: query mask update frequency}
        P(\text{dimension } i \text{ contributes to parameter update}) = P((\bm{M}_{\text{qry}})_i = 1 \mid \bm{M}_{\text{ctx}}) = p_i
    \end{equation}
\end{restatable}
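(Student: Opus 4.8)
The plan is to prove all three parts by exploiting the fact that the loss in \eqref{eq:loss func} decomposes coordinatewise and that the binary query mask acts as a per-coordinate weight. Since $(\bm{M}_{\text{qry}})_i \in \{0,1\}$ satisfies $(\bm{M}_{\text{qry}})_i^2 = (\bm{M}_{\text{qry}})_i$, I would first rewrite the loss as $\mathcal{L} = \sum_i (\bm{M}_{\text{qry}})_i \big( (\bm{x}_{\bm{\theta}})_i - (\bm{x}_{\text{obs}})_i \big)^2$, where $\bm{x}_{\bm{\theta}}$ abbreviates the network output evaluated at the input $\mathcal{I} := (t, \bm{M}_{\text{ctx}} \odot \bm{x}_{\text{obs},t}, \bm{M}_{\text{ctx}})$. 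This coordinatewise form is the common backbone of all three claims.

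For part (i), I treat the population objective $\mathbb{E}[\mathcal{L}]$ and minimize it pointwise over each admissible input $\mathcal{I}$, which is valid under the usual expressiveness assumption that $\bm{x}_{\bm{\theta}}$ can realize any measurable function of $\mathcal{I}$. For a fixed scalar $c := (\bm{x}_{\bm{\theta}}(\mathcal{I}))_i$ the conditional objective is a weighted least-squares problem in $c$; setting its derivative to zero gives $c = \mathbb{E}[(\bm{M}_{\text{qry}})_i (\bm{x}_{\text{obs}})_i \mid \mathcal{I}] \,/\, \mathbb{E}[(\bm{M}_{\text{qry}})_i \mid \mathcal{I}]$ whenever the denominator is positive. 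Two observations then close the argument: (a) on the event $(\bm{M}_{\text{qry}})_i = 1$ the coordinate is observed because $\bm{M}_{\text{qry}} \subseteq \bm{M}$, so $(\bm{M}_{\text{qry}})_i (\bm{x}_{\text{obs}})_i = (\bm{M}_{\text{qry}})_i (\bm{x}_0)_i$; and (b) by the mask--data conditional independence, $(\bm{M}_{\text{qry}})_i$ is independent of $(\bm{x}_0)_i$ given $\bm{M}_{\text{ctx}}$, since $\mathcal{I}$ is a function of $\bm{M}_{\text{ctx}}$ and the context values alone. These factor the numerator as $p_i \, \mathbb{E}[(\bm{x}_0)_i \mid \mathcal{I}]$ and the denominator as $p_i$, giving $c = \mathbb{E}[(\bm{x}_0)_i \mid \mathcal{I}]$ when $p_i > 0$; when $p_i = 0$ the weight is almost surely zero, the objective is flat in $c$, and any value is optimal. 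The covering corollary follows by assembling the per-coordinate conditional expectations into the vector identity once every coordinate has positive query probability.

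For part (ii), I differentiate the coordinatewise loss to get $\partial \mathcal{L} / \partial (\bm{x}_{\bm{\theta}})_i = 2 (\bm{M}_{\text{qry}})_i \big( (\bm{x}_{\bm{\theta}})_i - (\bm{x}_{\text{obs}})_i \big)$, square using $(\bm{M}_{\text{qry}})_i^2 = (\bm{M}_{\text{qry}})_i$, and apply the tower property conditioning on $(\bm{M}_{\text{qry}})_i$ at fixed $\bm{M}_{\text{ctx}}$; the factor $P((\bm{M}_{\text{qry}})_i = 1 \mid \bm{M}_{\text{ctx}}) = p_i$ then pulls out exactly, yielding the stated identity. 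Part (iii) is the most direct: the per-coordinate gradient vanishes identically whenever $(\bm{M}_{\text{qry}})_i = 0$ and is generically nonzero when $(\bm{M}_{\text{qry}})_i = 1$, so the event ``feature $i$ contributes to the update'' coincides with $\{ (\bm{M}_{\text{qry}})_i = 1 \}$, whose conditional probability is $p_i$ by definition.

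The main obstacle is the factorization in part (i): I must ensure that conditioning on $\mathcal{I}$ does not covertly couple $(\bm{M}_{\text{qry}})_i$ to $(\bm{x}_0)_i$. The resolution is that $\mathcal{I}$ is measurable with respect to $\bm{M}_{\text{ctx}}$ and the context observations, and the sampling of $\bm{M}_{\text{qry}}$ depends only on the mask structure and never on the data values; combined with $p_{\text{mask}}(\bm{M} \mid \bm{x}_0) = p_{\text{mask}}(\bm{M})$ this supplies the required conditional independence. The substitution $(\bm{x}_{\text{obs}})_i = (\bm{x}_0)_i$ on the query support --- the step that converts an estimate of the partially observed surrogate into an estimate of the true complete data --- likewise deserves explicit justification through $\bm{M}_{\text{qry}} \subseteq \bm{M}$.
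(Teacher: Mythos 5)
Your proposal is correct and follows essentially the same route as the paper's proof: a coordinatewise decomposition of the loss, the weighted least-squares ratio formula (which the paper packages as its Lemma~\ref{lemma:optimal_g} and you derive inline), the substitution $(\bm{M}_{\text{qry}})_i(\bm{x}_{\text{obs}})_i = (\bm{M}_{\text{qry}})_i(\bm{x}_0)_i$ via $\bm{M}_{\text{qry}} \subseteq \bm{M}$, the mask--data conditional independence to collapse the ratio to $\mathbb{E}[(\bm{x}_0)_i \mid \mathrm{ctx}]$, and the flatness argument for the $p_i = 0$ case, with parts (ii) and (iii) handled identically. The only difference is presentational (lemma-first versus inline), so no substantive comparison is needed.
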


The proof can be found in Appendix~\ref{app: proofs}. This theorem reveals a critical insight: the model learns meaningful conditional expectations $\mathbb{E} \left[ \left( \bm{x}_0 \right)_i \mid \bm{M}_{\text{ctx}} \odot \bm{x}_{\text{obs}, t}, \bm{M}_{\text{ctx}} \right]$ for dimension $i$ only when $P( \left(\bm{M}_{\text{qry}}\right)_i = 1 \mid \bm{M}_{\text{ctx}}) > 0$. When this probability is zero, the model's output for dimension $i$ becomes arbitrary since it never receives gradient updates for that dimension. This theoretical finding directly drives the need for strategic context-query partitioning: given any context mask $\bm{M}_{\text{ctx}}$ (without information of $\bm{M}$), we must ensure that every dimension outside the context, including dimensions that were originally missing in the raw data (i.e., dimensions $i$ where $\bm{M}_i = 0$), has a positive probability of being selected as a query point.
In the following section, we detail how to design the context-query mask sampling strategy to guarantee this requirement.

\subsection{Strategic context-query partitioning} \label{sec: method partitioning}
Building on our theoretical analysis, we now address the crucial question: how should we design the context-query partitioning strategy to ensure effective learning? Our approach is guided by a principled design framework that guarantees positive query probabilities for all observable dimensions.
\paragraph{Design principle.} Based on Theorem~\ref{thm: osucmwqi}, we establish the core design principle for effective context-query partitioning:
\begin{principle}[Principle of uniform query exposure]
\label{thm: puqe}
For effective learning from incomplete data, the context-query partitioning strategy must satisfy:
\begin{enumerate}
    \item \textbf{Non-zero query probability:} For all unobserved dimensions $i$, i.e., $\left(\bm{M}_{\text{ctx}}\right)_i = 0$, 
    \begin{equation}\label{eq: puqe}
        P((\bm{M}_{\text{qry}})_i = 1 \mid \bm{M}_{\text{ctx}}) > 0
    \end{equation}
    \item \textbf{Uniform exposure:} The query probabilities should be approximately uniform across all observed dimensions to achieve balanced learning:
    \begin{equation}
        P((\bm{M}_{\text{qry}})_i = 1 \mid \bm{M}_{\text{ctx}}) \approx P((\bm{M}_{\text{qry}})_j = 1 \mid \bm{M}_{\text{ctx}}) \quad \forall i,j : \left(\bm{M}_{\text{ctx}}\right)_i = \left(\bm{M}_{\text{ctx}}\right)_j = 0
    \end{equation}
\end{enumerate}
\end{principle}
To implement this principle, we can decompose the query probability using the law of total probability over all possible observation masks:
\begin{equation}
P((\bm{M}_{\text{qry}})_i = 1 \mid \bm{M}_{\text{ctx}}) = \sum_{\bm{M}} P((\bm{M}_{\text{qry}})_i = 1 \mid \bm{M}_{\text{ctx}}, \bm{M}) \cdot P(\bm{M} \mid \bm{M}_{\text{ctx}})
\end{equation}
This decomposition reveals that the query probability depends on two factors: (1) the conditional query sampling strategy given both context and observation masks $P((\bm{M}_{\text{qry}})_i = 1 \mid \bm{M}_{\text{ctx}}, \bm{M})$, and (2) the posterior distribution of observation masks given the context $P(\bm{M} \mid \bm{M}_{\text{ctx}})$. 
Since $\bm{M}_{\text{ctx}} \subseteq \bm{M}$ by construction, we can always find observation masks $\bm{M}$ such that $P(\bm{M} \mid \bm{M}_{\text{ctx}}) > 0$. However, $P((\bm{M}_{\text{qry}})_i = 1 \mid \bm{M}_{\text{ctx}}, \bm{M})$ is not guaranteed to be positive for all dimensions $i$. Fortunately, we can strategically design the sampling mechanism for $\bm{M}_{\text{ctx}}$ to ensure that there indeed exist observation masks $\bm{M}$ where both terms are simultaneously positive, thereby guaranteeing $P((\bm{M}_{\text{qry}})_i = 1 \mid \bm{M}_{\text{ctx}}) > 0$ for all observed dimensions. 
To illustrate how different context mask selection strategies affect the query probability $P((\bm{M}_{\text{qry}})_i = 1 \mid \bm{M}_{\text{ctx}})$, we first present a concrete example that demonstrates the critical impact of this design choice.

\begin{figure}[t]
    \centering
    \includegraphics[width=\linewidth]{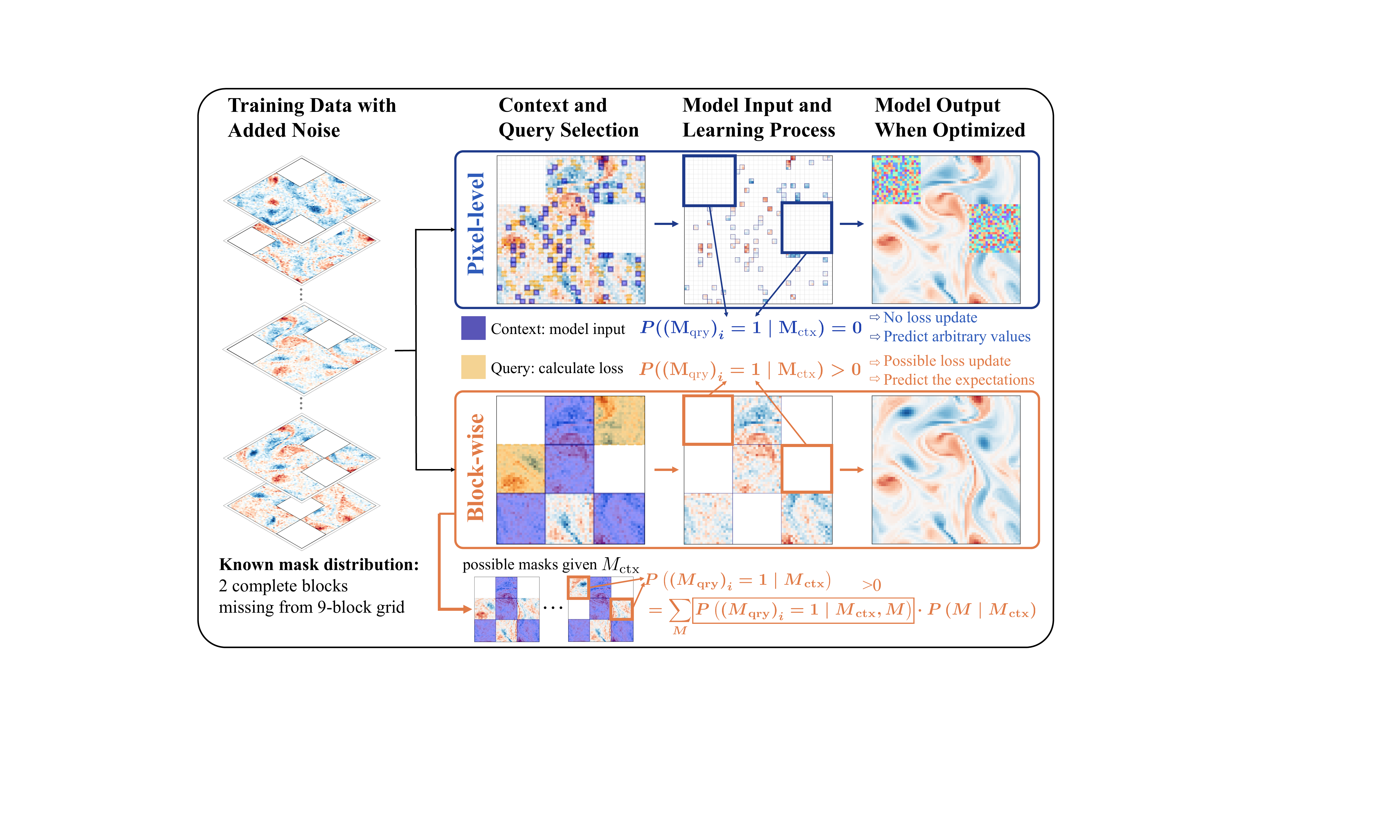}
    \caption{Impact of context-query partitioning strategies on learning effectiveness. Blue regions indicate context (model input), orange regions indicate query (loss calculation). \textbf{Top:} Problematic uniform sampling creates zero-query regions. \textbf{Bottom:} Effective block-structured sampling ensures balanced learning across all dimensions. See Fig.~\ref{fig: baseline fails} for the resulting imputation failures.}
    \label{fig: context query selection}
\end{figure}


\textbf{Illustrate example.} Fig.~\ref{fig: context query selection} demonstrates how different partitioning strategies affect learning effectiveness using block-structured observation patterns. 
Consider a scenario where observation masks $\bm{M}$ randomly mask 2 out of 9 spatial blocks:
\begin{itemize}
    \item {Problematic strategy (top):} For each partially observed sample $\bm{x}_{\text{obs}}$, context points are selected by uniform sampling across all observable dimensions, yielding $\bm{M}_{\text{ctx}}^{\text{uni}}$. This strategy is problematic because for such given $\bm{M}_{\text{ctx}}^{\text{uni}}$, there exists only one observation mask $\bm{M}$ that contains it (i.e., the specific $\bm{M}$ from which $\bm{x}_{\text{obs}}$ was derived). Consequently, for masked dimensions $i$ where $\bm{M}_i = 0$, we have $P((\bm{M}_{\text{qry}})_i = 1 \mid \bm{M}_{\text{ctx}}^{\text{uni}}, \bm{M}) = 0$ because these dimensions are never available for query selection. The regions marked with rectangles represent such zero-probability areas, leading to incomplete learning.

    \item {Effective strategy (bottom):} Context masks are sampled by selecting entire blocks according to the same block-structured pattern as the observation masks, yielding $\bm{M}_{\text{ctx}}^{\text{block}}$ that typically contains 4 complete blocks. Crucially, for a given $\bm{M}_{\text{ctx}}^{\text{block}}$, there exist multiple possible observation masks $\bm{M}$ that contain it (see two visualized possible masks in the figure). This multiplicity ensures that for any masked dimensions $i$, there always exists at least one possible observation mask $\bm{M}$ such that $P((\bm{M}_{\text{qry}})_i = 1 \mid \bm{M}_{\text{ctx}}^{\text{block}}, \bm{M}) > 0$, thereby guaranteeing positive query probabilities across all observable dimensions.
\end{itemize}

\paragraph{Implementation strategy.} Generally speaking, we typically have knowledge about how the data becomes masked (e.g., sensor placement patterns, measurement protocols), which provides us with either explicit estimates or reasonable prior knowledge about $p_{\text{mask}}(\bm{M})$. In practice, to satisfy Principle~\ref{thm: puqe}, a viable strategy is to design the context mask sampling mechanism $\bm{M}_{\text{ctx}}$ based on the observation mask distribution $p_{\text{mask}}(\bm{M})$. Specifically, we sample $\bm{M}_{\text{ctx}}$ from $\bm{M}$ following the same structural pattern as $p_{\text{mask}}(\bm{M})$: for i.i.d. pixel-level observations, we independently sample each observed pixel; for block-structured observations, we sample complete blocks from available blocks in $\bm{M}$. This distribution-preserving strategy ensures that every observed dimension can potentially be excluded from context (and included in query), guaranteeing $P\left( \left( \bm{M}_{\text{qry}}\right)_i = 1 \mid \bm{M}_{\text{ctx}} \right) > 0$ for all $i$. Under this design paradigm, the model learns during training to recover $\bm{x}_0$ from context observations while leveraging the structural knowledge embedded in $p_{\text{mask}}(\bm{M})$. At inference time, the model can then utilize the complete observation $\bm{x}_{\text{obs}}$ along with the same distributional knowledge $p_{\text{mask}}(\bm{M})$ to reconstruct the full data $\bm{x}_0$. 

\begin{wrapfigure}{r}{0.48\textwidth}
\vspace{-10pt}
\begin{minipage}[t]{0.48\textwidth}
\begin{algorithm}[H]
\caption{Diffusion-based training for missing data imputation}
\label{alg: training}
\begin{algorithmic}[1]
\REQUIRE dataset $\mathcal{D} = \{(\bm{x}^{(i)}_{\text{obs}}, \bm{M}^{(i)})\}_{i=1}^N$
\ENSURE trained neural network parameters $\bm{\theta}$
\WHILE{not converged}
    \FOR{each batch $(\bm{x}_{\text{obs}}, \bm{M}) \in \mathcal{D}$}
        \STATE sample $t \sim \text{Uni.}(0, 1)$, $\bm{\epsilon} \sim \mathcal{N}\left( \bm{0}, \bm{I} \right)$
        \STATE $\bm{x}_{\text{obs}, t} \leftarrow \bm{M} \odot \left( \alpha_t \bm{x}_{\text{obs}} + \sigma_t \bm{\epsilon} \right)$
        
        \STATE sample $\bm{M}_{\text{ctx}}, \bm{M}_{\text{qry}} \subseteq \bm{M}$ (Princ.~\ref{thm: puqe})
        
        \STATE $\hat{\bm{x}} \leftarrow \bm{x}_{\bm{\theta}}(t, \bm{M}_{\text{ctx}} \odot \bm{x}_{\text{obs}, t}, \bm{M}_{\text{ctx}})$
        
        \STATE $\mathcal{L} \leftarrow \| \bm{M}_{\text{qry}} \odot (\hat{\bm{x}} - \bm{x}_{\text{obs}}) \|^2$

        \STATE update $\bm{\theta}$ using gradient descent on $\mathcal{L}$
    \ENDFOR
\ENDWHILE
\end{algorithmic}
\end{algorithm}
\end{minipage}
\vspace{-10pt}
\end{wrapfigure}
\paragraph{Training algorithm and practical considerations.} Building on our theoretical foundation and design principles, we present our training algorithm in Alg.~\ref{alg: training}. While ensuring $P((\bm{M}_{\text{qry}})_i = 1 \mid \bm{M}_{\text{ctx}}) > 0$ is necessary, the choice of context-query ratio involves two critical trade-offs:
\begin{itemize}
    \item Information gap trade-off (Theorem~\ref{thm: pomct}): When $\bm{M}_{\text{ctx}}$ contains few observed points relative to $\bm{M}$, the large information gap increases approximation variance and slows down convergence.
    \item Parameter update frequency trade-off (\eqref{eq: query mask update frequency}): When $\bm{M}_{\text{ctx}}$ contains too many observed points, query probabilities $p_i$ become small, leading to infrequent parameter updates for reconstructing missing information.
\end{itemize}

These theoretical considerations suggest that moderate context ratios should achieve optimal performance by balancing both trade-offs. Our experimental results confirm this theoretical prediction, with detailed analysis provided in Appendix~\ref{app: ablation} and Tab.~\ref{tab: ablation context and query mask ratio}.


\subsection{Ensemble sampling for complete data reconstruction} \label{sec: method sampling}
Our trained model approximates the conditional expectation $\mathbb{E}[\bm{x}_0 \mid \bm{M}_{\text{ctx}} \odot \bm{x}_{\text{obs}, t}, \bm{M}_{\text{ctx}}]$ given a randomly sampled context mask, rather than the desired full conditional expectation $\mathbb{E}[\bm{x}_0 \mid \bm{x}_{\text{obs}, t}, \bm{M}]$ that conditions on the complete observation. To bridge this gap and enable complete data reconstruction, we leverage ensemble averaging across multiple context masks. This section presents our sampling procedures and their theoretical guarantees.
\paragraph{Single-step sampling.} 
In many scientific applications, the observed data are sufficiently informative to constrain the solutions to a relatively concentrated region in the solution space~\citep{alberti2021infinite}.
When the posterior distribution $p(\bm{x}_0 \mid \bm{x}_{\text{obs}}, \bm{M})$ is highly concentrated with solutions clustered closely together, it can be well-approximated by a narrow distribution centered at $\bm{x}^*$, where $\bm{x}^*$ represents the mean of the tightly clustered solutions consistent with the observations. In such cases, $\mathbb{E}[\bm{x}_0 \mid \bm{x}_{\text{obs}}, \bm{M}] \approx \bm{x}^*$ provides a good representative solution, reducing the need for extensive iterative denoising steps.

To leverage this property, we implement a single-step sampling procedure. We apply minimal noise at timestep $t = \delta$ where $0 < \delta \ll 1$:
\begin{equation}
    \bm{x}_\delta = \alpha_\delta \bm{x}_{\text{obs}} + \sigma_\delta \bm{\epsilon}, \quad \bm{\epsilon} \sim \mathcal{N}(\bm{0}, \bm{I})
\end{equation}
The small noise level ensures $\bm{M} \odot \bm{x}_\delta \approx \bm{x}_{\text{obs}}$. We then approximate the desired conditional expectation using ensemble averaging over $K$ randomly sampled context masks:
\begin{equation}\label{eq:ensemble approx}
    \bm{x}^* = \mathbb{E}[\bm{x}_0 \mid \bm{x}_{\text{obs}}, \bm{M}] \approx \frac{1}{K} \sum_{k=1}^{K} \bm{x}_\theta\left(\delta, \bm{M}_{\text{ctx}}^{(k)} \odot \bm{x}_{\text{obs},\delta}, \bm{M}_{\text{ctx}}^{(k)}\right),
\end{equation}
where $\bm{M}_{\text{ctx}}^{(1)}, \ldots, \bm{M}_{\text{ctx}}^{(K)} \subseteq \bm{M}$ are conditionally independent given $\bm{M}$.
This approach enables direct reconstruction in a single denoising step, making it particularly suitable for well-posed inverse problems where the observations strongly constrain the solution space.

We then establish the theoretical foundation that justifies our ensemble averaging approach. Let $\text{obs} = [\bm{M} \odot \bm{x}_{\text{obs}, t}, \bm{M}]$ and $\text{ctx} = [\bm{M}_{\text{ctx}} \odot \bm{x}_{\text{obs}, t}, \bm{M}_{\text{ctx}}]$ denote the full observation and context observation respectively.

\begin{restatable}[Ensemble approximation convergence]{theorem}{pomct}
\label{thm: pomct}
Let $\mathbb{E}[\bm{x}_0 \mid \mathrm{obs}] := \mathbb{E}[\bm{x}_0 \mid \bm{x}_{\mathrm{obs}, t}, \bm{M}]$ be the ground truth conditional expectation and $\mathbb{E}[\bm{x}_0 \mid \mathrm{ctx}] = \mathbb{E}[\bm{x}_0 \mid \bm{M}_{\text{ctx}} \odot \bm{x}_{\text{obs}, t}, \bm{M}_{\text{ctx}}]$ be the context-conditioned expectation. The expected squared error between these quantities is:
\begin{equation}
\mathbb{E}\left[\|\mathbb{E}[\bm{x}_0 \mid \mathrm{ctx}] - \mathbb{E}[\bm{x}_0 \mid \mathrm{obs}]\|^2\right] = \mathbb{E}[\Var[\bm{x}_0 \mid \mathrm{ctx}]] - \mathbb{E}[\Var[\bm{x}_0 \mid \mathrm{obs}]] \label{eq: information gap by variance}
\end{equation}
Consider a practical model with output $\bm{x}_{\bm{\theta}}(t, \mathrm{ctx}) = \mathbb{E}[\bm{x}_0 \mid \mathrm{ctx}] + \bm{b}(\mathrm{ctx}) + \bm{\epsilon}_{\text{bias}}(\mathrm{ctx})$, where $\bm{b}(\mathrm{ctx})$ represents context-dependent deterministic bias and $\bm{\epsilon}_{\text{bias}}(\mathrm{ctx})$ is random error with $\mathbb{E}[\bm{\epsilon}_{\text{bias}}] = \bm{0}$. Given the ensemble prediction in \eqref{eq:ensemble approx} as:
\begin{equation}
\hat{\bm{\mu}}_K = \frac{1}{K} \sum_{k=1}^K \bm{x}_{\bm{\theta}}(t, \mathrm{ctx}^{(k)}),
\end{equation}
the expected squared error between the ensemble prediction and ground truth is:
\begin{equation}
\begin{split}
\mathbb{E}\left[\left\|\hat{\bm{\mu}}_K - \mathbb{E}[\bm{x}_0 \mid \mathrm{obs}]\right\|^2\right] &= \| \underbrace{\mathbb{E}[\mathbb{E}[\bm{x}_0 \mid \mathrm{ctx}]] - \mathbb{E}[\bm{x}_0 \mid \mathrm{obs}]}_{\text{information gap}} + \underbrace{\mathbb{E}[\bm{b}(\mathrm{ctx})]}_{\text{model bias}} \|^2 \\
&\quad + \frac{1}{K} ( \underbrace{\Var[\mathbb{E}[\bm{x}_0 \mid \mathrm{ctx}]]}_{\text{data variance}} + \underbrace{\Var[\bm{b}(\mathrm{ctx})] + \Var[\bm{\epsilon}_{\text{bias}}]}_{\text{model variance}} )
\end{split}
\end{equation}
As $K \to \infty$, the ensemble converges to:
\begin{equation}
\lim_{K \to \infty} \mathbb{E}\left[\left\|\hat{\bm{\mu}}_K - \mathbb{E}[\bm{x}_0 \mid \mathrm{obs}]\right\|^2\right] = \left\|\mathbb{E}[\mathbb{E}[\bm{x}_0 \mid \mathrm{ctx}]] - \mathbb{E}[\bm{x}_0 \mid \mathrm{obs}] + \mathbb{E}[\bm{b}(\mathrm{ctx})]\right\|^2 \label{eq: expectation convergence}
\end{equation}
\end{restatable}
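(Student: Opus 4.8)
The plan is to treat the three claims in order, with the first (the variance-difference identity) resting on an orthogonality argument and the remaining two on a standard bias--variance decomposition of a sample mean.

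For the identity in \eqref{eq: information gap by variance}, first I would establish the key structural fact that the context selection carries no information about $\bm{x}_0$ beyond what $\mathrm{obs}$ already provides: since $\bm{M}_{\text{ctx}}$ is sampled from $\bm{M}$ independently of the data given the observation process, we have $\mathbb{E}[\bm{x}_0 \mid \mathrm{obs}, \bm{M}_{\text{ctx}}] = \mathbb{E}[\bm{x}_0 \mid \mathrm{obs}]$. Because $\sigma(\mathrm{ctx}) \subseteq \sigma(\mathrm{obs}, \bm{M}_{\text{ctx}})$, applying the tower property then yields $\mathbb{E}\big[\mathbb{E}[\bm{x}_0 \mid \mathrm{obs}] \mid \mathrm{ctx}\big] = \mathbb{E}[\bm{x}_0 \mid \mathrm{ctx}]$, i.e. the context-conditioned mean is the $L^2$ projection of the obs-conditioned mean onto $\sigma(\mathrm{ctx})$. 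Writing $A := \mathbb{E}[\bm{x}_0 \mid \mathrm{obs}]$ and $B := \mathbb{E}[\bm{x}_0 \mid \mathrm{ctx}]$, the decomposition $\bm{x}_0 - B = (\bm{x}_0 - A) + (A - B)$ has orthogonal summands in $L^2$, since $\mathbb{E}[\bm{x}_0 - A \mid \mathrm{obs}, \bm{M}_{\text{ctx}}] = \bm{0}$ makes $\bm{x}_0 - A$ orthogonal to the $\sigma(\mathrm{obs}, \bm{M}_{\text{ctx}})$-measurable (hence $\sigma(\mathrm{ctx})$-measurable) quantity $A - B$. The Pythagorean identity then gives $\mathbb{E}\|A - B\|^2 = \mathbb{E}\|\bm{x}_0 - B\|^2 - \mathbb{E}\|\bm{x}_0 - A\|^2$, and recognizing each right-hand term as the expected trace of the conditional covariance produces exactly $\mathbb{E}[\Var[\bm{x}_0 \mid \mathrm{ctx}]] - \mathbb{E}[\Var[\bm{x}_0 \mid \mathrm{obs}]]$.

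For the finite-$K$ formula I would condition throughout on $\mathrm{obs}$, so that $A$ is a fixed vector and the $K$ draws $\mathrm{ctx}^{(1)},\dots,\mathrm{ctx}^{(K)}$ are i.i.d. Setting $\bm{g}(\mathrm{ctx}) := \bm{x}_{\bm{\theta}}(t,\mathrm{ctx}) = \mathbb{E}[\bm{x}_0\mid\mathrm{ctx}] + \bm{b}(\mathrm{ctx}) + \bm{\epsilon}_{\text{bias}}(\mathrm{ctx})$, the estimator $\hat{\bm{\mu}}_K$ is the sample mean of i.i.d. copies of $\bm{g}$, with mean $\mathbb{E}[\bm{g}] = \mathbb{E}[\mathbb{E}[\bm{x}_0\mid\mathrm{ctx}]] + \mathbb{E}[\bm{b}(\mathrm{ctx})]$ (using $\mathbb{E}[\bm{\epsilon}_{\text{bias}}]=\bm{0}$). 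The textbook bias--variance split $\mathbb{E}\|\hat{\bm{\mu}}_K - A\|^2 = \|\mathbb{E}[\hat{\bm{\mu}}_K] - A\|^2 + \tfrac1K\Var[\bm{g}]$ then delivers the ``information gap $+$ model bias'' term squared together with the $1/K$ variance term. The remaining task is to split $\Var[\bm{g}]$ into the three labelled pieces; this is where I would invoke that $\bm{\epsilon}_{\text{bias}}$ is mean-zero noise uncorrelated with the $\sigma(\mathrm{ctx})$-measurable components (so $\Var[\bm{\epsilon}_{\text{bias}}]$ separates out) and that the data and model contributions are treated as uncorrelated, giving $\Var[\bm{g}] = \Var[\mathbb{E}[\bm{x}_0\mid\mathrm{ctx}]] + \Var[\bm{b}(\mathrm{ctx})] + \Var[\bm{\epsilon}_{\text{bias}}]$. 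Claim (iii) is then immediate: letting $K\to\infty$ kills the $1/K$ term and leaves the squared bias.

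I expect the main obstacle to be the clean three-way split of $\Var[\bm{g}]$. Only the noise term $\Var[\bm{\epsilon}_{\text{bias}}]$ separates for free, via $\mathbb{E}[\bm{\epsilon}_{\text{bias}}\mid\mathrm{ctx}]=\bm{0}$; the cross-covariance $\Cov(\mathbb{E}[\bm{x}_0\mid\mathrm{ctx}], \bm{b}(\mathrm{ctx}))$ between the data-variance and model-bias terms does not vanish in general, since both are functions of the same random context. I would therefore state explicitly the (mild, idealizing) assumption that these contributions are uncorrelated, or alternatively absorb the cross term and present the identity as the leading-order decomposition. The other point requiring care is purely measure-theoretic: justifying $\mathbb{E}[\bm{x}_0\mid\mathrm{obs},\bm{M}_{\text{ctx}}]=\mathbb{E}[\bm{x}_0\mid\mathrm{obs}]$ from the conditional independence of masks and data, which underpins both the tower property in (i) and the i.i.d. structure in (ii).
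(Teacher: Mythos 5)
Your proposal is correct, and its second half is essentially the paper's own argument: the paper likewise writes the model output as conditional mean plus deterministic bias plus mean-zero noise, applies the bias--variance decomposition to the sample mean $\hat{\bm{\mu}}_K$, and splits the single-draw variance into three terms. The cross-covariance obstacle you flag at the end is not a gap relative to the paper: the paper resolves it exactly as you propose, via an explicit standing hypothesis (Assumption~1, ``uncorrelated decomposition'' in the appendix) that the data component, the bias component, and the random error are mutually uncorrelated, with errors independent across contexts; your instinct to state this as an assumption rather than try to prove it is the right one, since it is genuinely false in general. Where you diverge is the first identity, \eqref{eq: information gap by variance}: the paper proves it through the law of total variance plus a chain of tower-rule equalities reducing $\mathbb{E}\left[\|\bm{\mu}_{\mathrm{c}}-\bm{\mu}\|^2\right]$ to $\mathbb{E}\left[\Var[\bm{\mu}\mid\mathrm{ctx}]\right]$, whereas you view $\mathbb{E}[\bm{x}_0\mid\mathrm{ctx}]$ as the $L^2$ projection of $\mathbb{E}[\bm{x}_0\mid\mathrm{obs}]$ and invoke the Pythagorean identity. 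These are mathematically equivalent, but your route buys something concrete: it forces the justification $\mathbb{E}[\bm{x}_0\mid\mathrm{obs},\bm{M}_{\text{ctx}}]=\mathbb{E}[\bm{x}_0\mid\mathrm{obs}]$, which is genuinely needed because $\sigma(\mathrm{ctx})$ is \emph{not} contained in $\sigma(\mathrm{obs})$ (the context mask carries its own randomness), so the paper's bare ``tower rule'' step is only valid thanks to precisely the conditional-independence fact you isolate. Your explicit conditioning on $\mathrm{obs}$ in the finite-$K$ computation similarly makes rigorous a point the paper leaves implicit: the stated formula only parses if $\mathbb{E}[\bm{x}_0\mid\mathrm{obs}]$ is held fixed while the expectation runs over the context draws and model noise.
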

This theorem demonstrates that ensemble averaging eliminates the variance terms, with the remaining error determined by the information gap between context and full observations, plus any systematic model bias. The proof is provided in Appendix~\ref{app: Proofs of partially observed mask convergence theorem}.

\paragraph{Multi-step sampling.} While single-step sampling suffices when observations nearly determine a unique solution, generating diverse imputed samples or handling cases with significant uncertainty requires a multi-step sampling procedure. This approach follows the standard diffusion sampling process but replaces each denoising step with ensemble averaging over multiple context masks. Note that multi-step sampling involves repeated application of the model to generated content, which can lead to slight accumulation of errors compared to the single-step approach~\citep{xu2023restart}. 
The detailed multi-step sampling algorithm is provided in Appendix~\ref{app: Multi-step sampling}.

\section{Experiments}

\subsection{Baselines}
We compare against established baselines, including traditional imputation methods (Temporal Consistency~\citep{huang2016temporally}, Fast Marching~\citep{telea2004image}, Navier-Stokes inpainting~\citep{bertalmio2001navier}) and recent diffusion-based approaches, MissDiff~\citep{ouyang2023missdiff}, AmbientDiff~\citep{daras2023ambient}. To ensure a fair comparison, we modified MissDiff to use data matching instead of its original noise matching approach, which improves its performance (see Appendix~\ref{app: missdiff analysis} for more detailed discussion). As a result, all three diffusion-based approaches, MissDiff, AmbientDiff, and our method, now employ the data matching paradigm. For all baseline methods, we experimented with both single-step sampling and multi-step sampling strategies, and report the best performance results between these two approaches to ensure optimal baseline comparisons. We also exclude methods by \cite{chen2024rethinking, givens2025score, zhang2025diffputer} due to their computational limitations: these approaches are designed for low-dimensional data and incur prohibitively high training costs when scaled to high dimensions. See Appendix~\ref{app: inapp baselines} for details.

\subsection{Datasets}
We conduct comprehensive experiments to validate our theoretical framework and demonstrate the effectiveness of our approach across diverse scientific domains. Our evaluation encompasses both synthetic PDE datasets: Shallow Water~\citep{Klower2018}, Advection~\citep{Klower2018}, and Navier-Stokes~\citep{cao2024navierstokes}, and real-world climate data (ERA5)~\citep{hersbach2020era5}, under varying levels of data sparsity, ranging from 80\% to as low as 1\% observed points. To simulate realistic scientific measurement scenarios, we construct datasets where each sample contains only a subset of spatial locations with known values, while the remaining locations are permanently unobserved. This reflects the fundamental challenge in scientific applications where complete ground truth data is never available during training, distinguishing our setting from conventional imputation tasks that artificially mask complete observations.

\begin{wrapfigure}{r}{0.45\textwidth}
    \vspace{-3mm}
    \centering
    \begin{subfigure}{\linewidth}
        \centering
        \includegraphics[width=\linewidth]{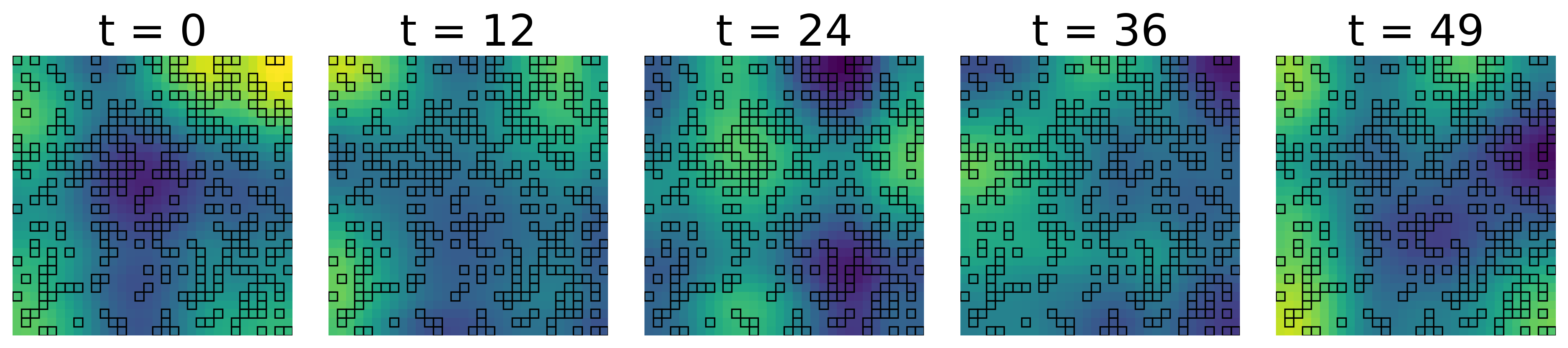} \\
        \includegraphics[width=\linewidth]{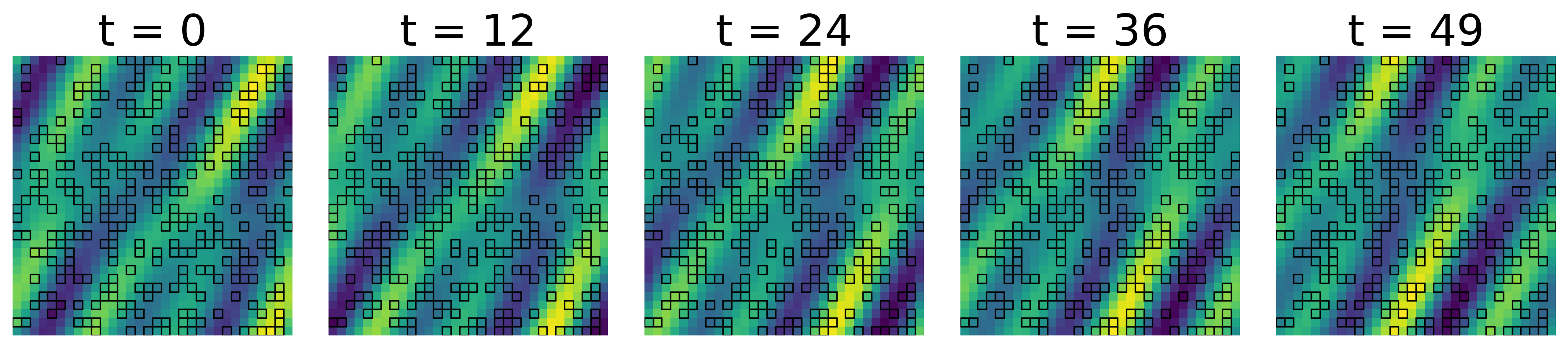}
        \caption{Visualization of data imputation results on Shallow Water and Advection datasets. The training set contains 30\% observed data points. Each example shows a complete sample reconstructed by the model based on partial observations. Observed entries are marked with rectangles, while missing entries are filled in by the model.}
        \label{fig: demo PDE}
    \end{subfigure}
    
    \vspace{3mm}
    
    \begin{subfigure}{\linewidth}
        \centering
        \includegraphics[width=\linewidth]{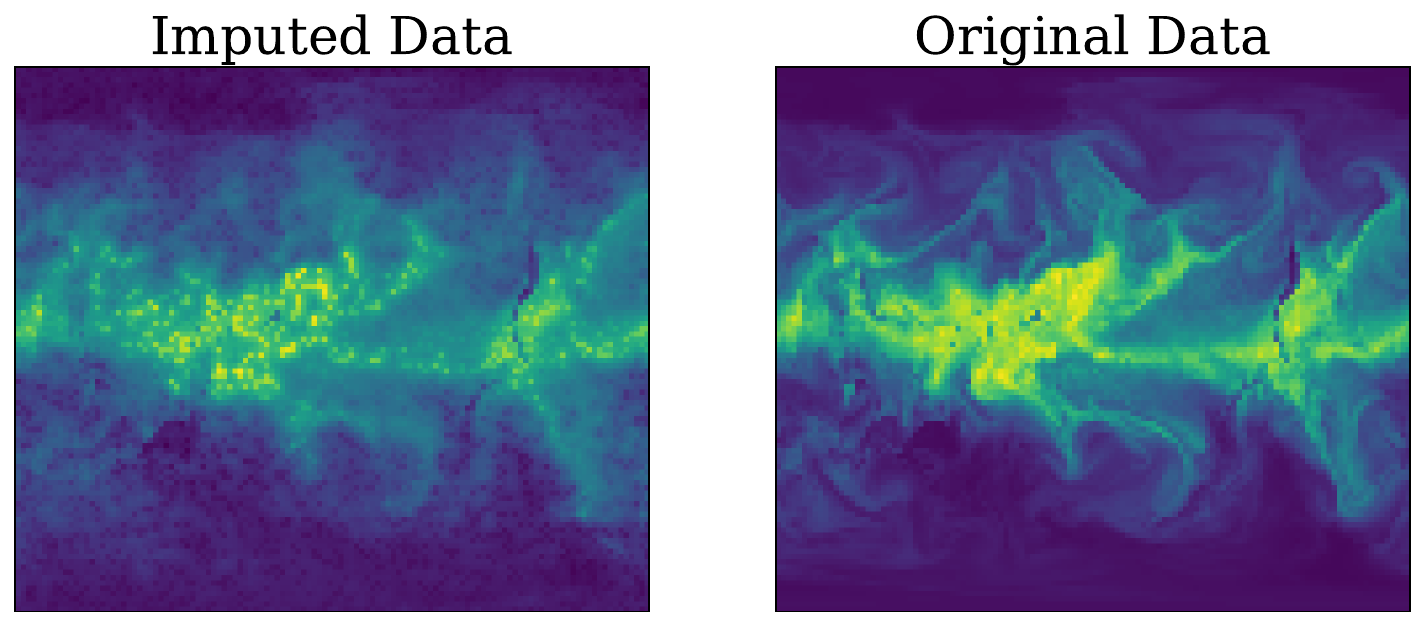}
        \caption{Visualization of data imputation results for the ``total column water vapor'' variable in the ERA5 dataset (20\% observed data points).}
        \label{fig: demo ERA5}
    \end{subfigure}
    \caption{Data imputation visualization} 
    \label{fig: combined demo}
    \vspace{-15pt}
\end{wrapfigure}

\textbf{Shallow Water and Advection Equations.} We consider two fundamental geophysical PDE systems: the shallow water equations governing fluid dynamics with rotation, and the linear advection equation describing scalar transport. Each dataset contains 5k training, 1k validation, and 1k test samples with $32 \times 32$ spatial resolution and 50 temporal frames, generated with randomized physical parameters and initial conditions.

\textbf{Navier-Stokes Equations.} We use incompressible Navier-Stokes simulations of isotropic turbulence, featuring the characteristic Kolmogorov energy cascade. The dataset comprises 1,152 samples at $64 \times 64$ resolution with 100-frame sequences, generated using spectral or finite volume methods.

\textbf{ERA5 Reanalysis.} For real-world evaluation, we utilize the ERA5 atmospheric reanalysis from ECMWF, incorporating nine essential meteorological variables. We process one year of hourly data at $103 \times 120$ spatial resolution, segmented into 3-hour windows with sparse observations.

We assess reconstruction quality using physically meaningful metrics: PDE residual errors for shallow water, forward propagation accuracy for advection, and direct MSE for Navier-Stokes and ERA5. Detailed dataset specifications and evaluation protocols are provided in Appendix~\ref{app: dataset settings}.

\begin{figure}[t]
    \centering
    \includegraphics[width=\textwidth]{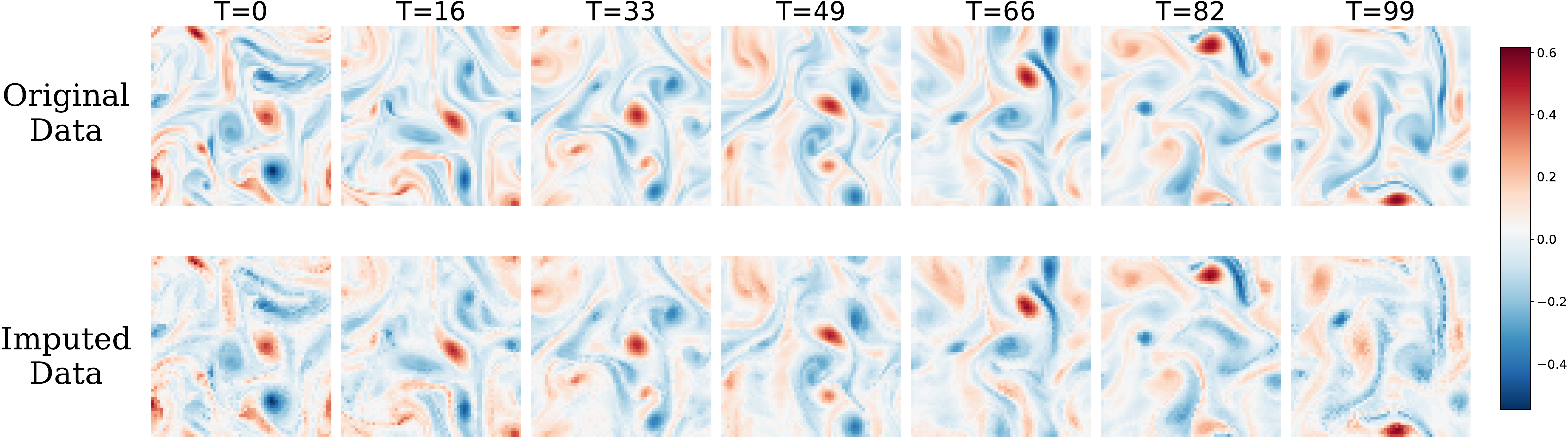} 
    \caption{Comparison of original and imputed data from the Navier-Stokes dataset (60\% observed points). The upper row shows the original data, while the bottom row shows the results after data imputation. Each sample consists of 100 frames at $64 \times 64$ resolution.}
    \label{fig: ns visualization}
    \vspace{-10pt}
\end{figure}

\begin{table}[ht]
\centering
\caption{Performance comparison on physical dynamics imputation tasks, where masks are sampled pixel-wisely. Column headers indicate the percentage of spatial points observed in the dataset.}
\label{tab: NS and ERA5 imputation}
\resizebox{\textwidth}{!}{
\begin{tabular}{ccccccc}
\toprule
\multicolumn{1}{c}{\multirow{2}{*}{\textbf{Method}}} & \multicolumn{3}{c}{\textbf{Navier-Stokes} ($\times 10^{-3}$)}    & \multicolumn{3}{c}{\textbf{ERA5} ($\times 10^{-2}$)}       \\ \cmidrule(lr){2-4} \cmidrule(lr){5-7} 
\multicolumn{1}{c}{} & 80\%  & 60\%   & 20\%  & 20\%  & 10\%  & 1\%    \\ \midrule
Temporal Consistency & 1.341 & 2.709  & 5.709 & 0.967 & 1.179 & 9.735 \\ 
Fast Marching        & 0.486 & 1.220  & 3.737 & 0.710 & 0.978 & 3.053 \\ 
Navier-Stokes        & 0.263 & 0.656  & 2.989 & 0.600 & 0.942 & 3.074 \\ 
MissDiff             & 0.251 $\pm$ {\scriptsize 0.025} & 0.611 $\pm$ {\scriptsize 0.077} & 3.077 $\pm$ {\scriptsize 1.046} & 0.416 $\pm$ {\scriptsize 0.004} & 0.676 $\pm$ {\scriptsize 0.088} & 1.653 $\pm$ {\scriptsize 0.296}	 \\ 
AmbientDiff          & 0.238 $\pm$ {\scriptsize 0.017} & 0.538 $\pm$ {\scriptsize 0.024} & 2.043 $\pm$ {\scriptsize 0.089} & 0.256 $\pm$ {\scriptsize 0.002} & 0.414 $\pm$ {\scriptsize 0.031} & 1.234 $\pm$ {\scriptsize 0.437} \\ 
\midrule
\textbf{Ours}        & \textbf{0.223 $\pm$ {\scriptsize 0.016}} & \textbf{0.507 $\pm$ {\scriptsize 0.026}} & \textbf{1.931 $\pm$ {\scriptsize 0.092}} & \textbf{0.250 $\pm$ {\scriptsize 0.002}} & \textbf{0.408 $\pm$ {\scriptsize 0.030}} & \textbf{1.229 $\pm$ {\scriptsize 0.437}} \\
\bottomrule
\end{tabular}
}
\end{table}

\begin{table}[ht]
\centering
\caption{Performance comparison on PDE imputation tasks, where masks are sampled block-wisely. Column headers indicate the fraction of observed blocks. The last two rows demonstrate our method using incorrect pixel-level versus correct block-wise context-query partitioning strategies.}
\resizebox{\textwidth}{!}{
\begin{tabular}{ccccccc}
\toprule
\multicolumn{2}{c}{\multirow{2}{*}{\textbf{Method}}} & \multicolumn{2}{c}{\textbf{Shallow Water}} & \multicolumn{2}{c}{\textbf{Advection}} & \multicolumn{1}{c}{\textbf{Navier-Stokes}} \\ \cmidrule(lr){3-4} \cmidrule(lr){5-6} \cmidrule(lr){7-7}
                                           && 8/9    & 5/9    & 8/9    & 5/9    & 8/9    \\ \midrule
\multicolumn{2}{c}{Temporal Consistency}  & 0.6974 & 2.5486 & 0.4758 & 1.0940 & 1.4287 \\
\multicolumn{2}{c}{Fast Marching}         & 0.8454 & 3.3718 & 0.5042 & 1.4434 & 1.7391 \\
\multicolumn{2}{c}{Navier-Stokes}         & 0.4753 & 1.7565 & 0.4418 & 1.3594 & 1.7274 \\
\multicolumn{2}{c}{MissDiff}              & 0.0285 $\pm$ {\scriptsize 0.0024} & 0.1166 $\pm$ {\scriptsize 0.0066} & 0.1202 $\pm$ {\scriptsize 0.0047} & 0.1979 $\pm$ {\scriptsize 0.0228} & 1.4357 $\pm$ {\scriptsize 0.1132} \\
\multicolumn{2}{c}{AmbientDiff}           & 0.0217 $\pm$ {\scriptsize 0.0063} & 0.0925 $\pm$ {\scriptsize 0.0017} & 0.1077 $\pm$ {\scriptsize 0.0009} & 0.1524 $\pm$ {\scriptsize 0.0137} & 1.4954 $\pm$ {\scriptsize 0.2609} \\ \midrule
\multirow{2}{*}{\hspace{1.5em} \textbf{Ours}} & pixel-level (incorrect) & 0.0215 $\pm$ {\scriptsize 0.0035} & 0.0989 $\pm$ {\scriptsize 0.0007} & 0.1171 $\pm$ {\scriptsize 0.0041} & 0.1894 $\pm$ {\scriptsize 0.0179} & 1.4925 $\pm$ {\scriptsize 0.2609} \\ 
 & block-wise (correct)  & \textbf{0.0203 $\pm$ {\scriptsize 0.0059}} & \textbf{0.0865 $\pm$ {\scriptsize 0.0014}} & \textbf{0.1065 $\pm$ {\scriptsize 0.0009}} & \textbf{0.1407 $\pm$ {\scriptsize 0.0116}} & \textbf{0.7592 $\pm$ {\scriptsize 0.0386}	} \\
\bottomrule
\end{tabular}
}
\label{tab: PDE dataset block}
\end{table}

\subsection{Experiment results}

\paragraph{Pixel-level observation.} Individual spatial points are randomly masked throughout the domain, simulating sparse sensor networks or measurement failures. We test observation rates from 80\% down to 1\%, challenging the model to reconstruct scattered missing points using local spatial correlations. Tab.~\ref{tab: NS and ERA5 imputation} and Tab.~\ref{tab: PDE dataset} demonstrate that our method achieves superior performance in the majority of evaluation scenarios.

\paragraph{Block-wise observation.} This more challenging setting masks entire contiguous spatial regions, reflecting realistic constraints such as sensor placement limitations, regional measurement failures, or structured occlusions in observational systems. For instance, in the 5/9 block configuration, only 5 out of 9 spatial blocks contain observations, while 4 complete blocks remain entirely unobserved. This requires the model to reconstruct entire spatial regions without any local observations, relying solely on distant context and learned physical priors. Our results, shown in Tab.~\ref{tab: PDE dataset block}, demonstrate that the proposed strategic context-query partitioning, which adapts to the observation pattern during training, is essential. When the partitioning strategy matches the observation structure (block masks), our method effectively learns to reconstruct complete fields. Conversely, mismatched strategies lead to degraded performance, validating our theoretical analysis.

\begin{table}[ht]
\centering
\caption{Cross-distribution generalization on the Navier-Stokes dataset. Each column represents a model trained with a specific observation ratio, and each row represents the test observation ratio. Values indicate MSE between reconstructed and ground truth fields. The diagonal entries represent matched train-test distributions, while off-diagonal entries measure generalization under distribution shift. Models maintain reasonable performance when test-time observations are close to training conditions, but degrade gracefully when trained on fewer observation datasets.}
\begin{tabular}{c|ccc}
\toprule
\textbf{Test Set \textbackslash{} Training Set} & 80\% & 60\% & 20\% \\ \midrule
80\% & 0.2229 $\pm$ {\scriptsize 0.0162} & 0.2362 $\pm$ {\scriptsize 0.0121} & 0.3363 $\pm$ {\scriptsize 0.0082} \\ 
60\% & 0.4990 $\pm$ {\scriptsize 0.0260} & 0.5071 $\pm$ {\scriptsize 0.0257} & 0.6980 $\pm$ {\scriptsize 0.0188} \\ 
20\% & - & - & 1.9315 $\pm$ {\scriptsize 0.0921} \\ 
\bottomrule
\end{tabular}
\label{tab: sampling across datasets}
\end{table}
\paragraph{Cross-distribution generalization.} 
To evaluate the robustness of our method under distribution shift between training and testing, we conduct experiments where the observation ratio differs between training and inference. Specifically, we investigate whether a model trained on data with a certain observation density can generalize to test scenarios with different observation patterns. The key challenge lies in maintaining consistent model behavior when the available information at test time deviates from the training distribution. Our implementation addresses this through adaptive context mask sampling: during training with observation ratio $r_{\text{train}}$ (e.g., 80\%), we sample context masks containing a fraction $\alpha$ of the observed points (e.g., 50\%), resulting in the model receiving $r_{\text{train}} \times \alpha$ of the total pixels as input (e.g., 40\%). At test time with a different observation ratio $r_{\text{test}}$ (e.g., 60\%), we maintain the same effective input ratio by sampling $\frac{r_{\text{train}} \times \alpha}{r_{\text{test}}}$ of the available observations as context (e.g., $\frac{40\%}{60\%} = 66.7\%$). This strategy ensures the model operates within its learned input distribution while adapting to varying observation densities. Tab.~\ref{tab: sampling across datasets} presents results across different train-test observation ratio combinations, demonstrating that our method maintains reasonable performance even under significant distribution shifts, though performance naturally degrades when the test-time observation ratio substantially deviates from the training distribution.


\subsection{Ablation Study}
We conduct comprehensive ablation studies to analyze the contribution of key components in our framework: \textbf{(1)} test-time gap introduced by replacing $\bm{M}_{\text{ctx}}$ with $\bm{M}$, which evaluates the distribution mismatch between training and inference masks and its effect on reconstruction
accuracy; \textbf{(2)} the choice of backbone architecture, where we investigate whether the proposed training paradigm generalizes across different neural architectures; \textbf{(3)} the context and query mask ratio selection guided by our theoretical analysis, examining the trade-off between information
availability and parameter update frequency during training; and \textbf{(4)} the influence of the ensemble size $K$, which controls the variance reduction effect in our ensemble approximation during sampling. Detailed results and analysis are provided in Appendix~\ref{app: ablation}. Through systematic ablation
studies, we validate our key theoretical insights. The results demonstrate that our method consistently outperforms existing approaches while providing theoretical guarantees for convergence to the desired conditional expectations.

\section{Conclusion}
We presented a principled framework for learning physical dynamics from incomplete observations using diffusion models trained directly on partial data through strategic context-query partitioning. Our approach enables diffusion models to learn consistent conditional expectations without requiring access to fully observed training samples, making it particularly suitable for realistic scientific datasets where missing data is unavoidable. Our theoretical analysis proves that training on incomplete data recovers the complete distribution with convergence guarantees, validated empirically with substantial improvements over baselines on synthetic PDEs and ERA5 climate data, especially in sparse regimes (1--20\% coverage). The method's effectiveness across diverse physical systems demonstrates practical applicability for real-world scenarios where complete observations are inherently unavailable. More broadly, our results highlight the potential of diffusion-based generative models as a powerful tool for modeling partially observed physical systems and open promising directions for future research on integrating generative modeling with scientific data analysis.

\subsubsection*{Reproducibility statement}
To ensure the reproducibility of our work, all experimental code and datasets are publicly available at \url{https://github.com/LOGO-CUHKSZ/incomplete-data-complete-dynamics}. The complete implementation, including model architectures, training procedures, and evaluation scripts, is provided in the repository with detailed documentation. All datasets used in our experiments are released, with the exception of the ERA5 dataset, which requires individual registration and download from the official ECMWF website due to licensing restrictions.

\subsubsection*{Acknowledgments}
This work was supported by the National Science and Technology Major Project under Grant 2022ZD0116408.

\bibliography{ref}
\bibliographystyle{iclr2026_conference}

\appendix
\section{Related work} \label{app: related work}
\subsection{Imputation}
Imputation, the task of filling missing or corrupted regions in data with plausible content, has been revolutionized by diffusion models across various modalities~\citep{corneanu2024latentpaint, lemercier2025diffusion, duan2024causal}. Current imputation approaches follow three primary paradigms. Palette~\citep{saharia2022palette} established conditioning on partially observed data by incorporating known regions at each denoising timestep. RePaint~\citep{lugmayr2022repaint} introduced a training-free method that leverages pretrained unconditional models by resampling known regions while generating content only for masked areas. Bridge-based methods~\citep{liu20232, yue2023image, albergo2023stochastic} design specialized diffusion processes between original and masked data distributions, requiring models trained to condition directly on masked inputs. DiffusionPDE~\citep{huang2024diffusionpde} introduces a diffusion model to solve PDEs under partial observation by learning the joint distribution of coefficient and solution spaces. A critical limitation shared by all these approaches is their reliance on complete, unmasked data during training to learn the underlying data distribution before performing inference-time imputation on partially observed inputs. This assumption fundamentally conflicts with scientific applications where training data itself consists only of partial observations.

\subsection{Generative modeling with missing data}
Deep generative models tackle missing data through various approaches, including VAE-based methods~\citep{ipsen2020not, ma2020vaem} and GAN-based methods~\citep{li2019misgan, li2020learning}. Some diffusion-based~\citep{ouyang2023missdiff, daras2023ambient, dai2024sadi, simkus2025cfmi} generative models generate clean samples from missing data, though they rely on heuristic intuition and lack rigorous convergence analysis. DiffLight~\citep{chen2024difflight} leverages a partial rewards conditioned diffusion model to prevent missing rewards from interfering with the learning process. \cite{zhang2025diffputer} presents a theoretically sound framework combining diffusion models with EM algorithm for imputation, but its requirement of multiple complete model retraining cycles limits its scalability to complex and large datasets. More recently, \citet{givens2025score} proposed score matching with missing data, providing theoretical guarantees but facing computational scalability challenges due to high complexity in their importance weighting method and requiring auxiliary network training for their variational approach. Their experimental validation is limited to low-dimensional synthetic data and simple graphical models, raising questions about scalability to high-dimensional real-world scenarios.

\section{Data matching diffusion models} \label{app: Data matching diffusion models}
Diffusion models~\citep{song2019generative, song2020score, ho2020denoising} generate samples from a target data distribution by defining a forward process that gradually adds noise to data $\bm{x}_0 \sim p_0$ according to the stochastic differential equation:
\begin{equation}
\mathrm{d} \bm{x}_t = f(t) \bm{x}_t \mathrm{d} t + g(t) \mathrm{d} \bm{w}_t, \quad \bm{x}_0 \sim p_0,
\end{equation}
where $\bm{w}_t \in \mathbb{R}^d$ represents standard Brownian motion, $f(t)=\frac{\mathrm{d} \log \alpha_t}{\mathrm{d} t}$, $g^2(t)=\frac{\mathrm{d} \sigma_t^2}{\mathrm{d} t}-2 \frac{\mathrm{d} \log \alpha_t}{\mathrm{d} t} \sigma_t^2$, and $\alpha_t$, $\sigma_t$ are predefined time-dependent functions. This process has the analytical solution $p_{t}\left(\bm{x}_t \mid \bm{x}_0\right)=\mathcal{N}\left(\bm{x}_t \mid \alpha_t \bm{x}_0, \sigma_t^2 \bm{I}\right)$. 

The reverse process generates samples by integrating backward from noise to data using:
\begin{equation}
\frac{\mathrm{d} \bm{x}_t}{\mathrm{d} t} = f(t) \bm{x}_t - \frac{1}{2} g^2(t) \nabla_{\bm{x}} \log p_t (\bm{x}_t), \quad \bm{x}_T \sim p_{T}(\bm{x}_T).
\end{equation}
Since the terminal distribution $p_{T}(\bm{x}_T)$ becomes approximately Gaussian through appropriate parameter choices, sampling from it and reversing the process yields samples from $p_0$. The score function $\nabla_{\bm{x}} \log p_t (\bm{x}_t)$ is computationally intractable but can be estimated using neural networks via noise matching and data matching approaches~\citep{zheng2023improved}:
\begin{subequations} \label{eq: diffusion optimal}
   \begin{alignat}{3}
        \mathcal{J}_{\textsubscript{noise}}(\bm{\theta}) &= \mathbb{E}_{t, \bm{x}_0, \bm{\epsilon}}\left[ w(t) \left\| \bm{\epsilon}_{\bm{\theta}}\left(\bm{x}_t, t\right) - \bm{\epsilon} \right\|^2\right], \quad & \bm{\epsilon}_{\bm{\theta}}^*\left(\bm{x}_t, t\right) &= -\sigma_t \nabla_{\bm{x}} \log p_t\left(\bm{x}_t\right); \label{eq: noise predictor} \\
        \mathcal{J}_{\textsubscript{data}}(\bm{\theta}) &= \mathbb{E}_{t, \bm{x}_0, \bm{\epsilon}}\left[ w(t)\left\| \bm{x}_{\bm{\theta}}\left(\bm{x}_t, t\right) - \bm{x}_0\right\|^2\right], \quad & \bm{x}_{\bm{\theta}}^*\left(\bm{x}_t, t\right) &= \frac{1}{\alpha_t} \bm{x}_t+\frac{\sigma_t^2}{\alpha_t} \nabla_{\bm{x}} \log p_t\left(\bm{x}_t\right), \label{eq: data predictor}
    \end{alignat}
\end{subequations}
where $\bm{x}_t = \alpha_t \bm{x}_0 + \sigma_t \bm{\epsilon}$ with $\bm{\epsilon} \sim \mathcal{N}\left( \bm{0}, \bm{I} \right)$, and $w(t)$ is a loss weight function. By Tweedie's formula~\citep{efron2011tweedie, kim2021noise2score, chung2022diffusion}, we also have $\bm{x}_{\bm{\theta}}^*\left(\bm{x}_t, t\right) = \mathbb{E} [\bm{x}_0 \mid \bm{x}_t]$.
\section{Method comparison and theoretical analysis} \label{app: Method comparison and theoretical analysis}
\subsection{Diffusion probabilistic fields}
Several prior studies have proposed training generative models using field representations~\citep{du2021learning, dupont2022data, zhuang2023diffusion}. Similarly, our approach trains a model to predict $\mathbb{E} \left[ \bm{x}_0 \mid \bm{x}_{\text{obs}, t}, \bm{M} \right]$ and applies the loss function to several randomly selected points across the entire set of coordinates. However, three main differences distinguish our approach from DPF: (1) Loss objective: Our method uses data matching while DPF uses noise matching, providing more flexible input requirements; (2) Model architecture: DPF requires both context and query inputs, whereas our method trains using only context, leading to different optimal solutions; (3) Theoretical foundation: DPF relies on heuristic designs without convergence guarantees, while our approach provides rigorous theoretical analysis. As shown in Appendix~\ref{app: Method comparison: diffusion probabilistic fields}, DPF's optimal solution depends on specific context-query mask combinations and may predict values differing from target predictions, whereas our method guarantees convergence to desired objectives through sufficient context mask sampling. 

\subsection{Ambient diffusion}
While ambient diffusion~\citep{daras2023ambient} shares the fundamental principle of our approach—incorporating masks during training to predict clean data—several critical distinctions emerge upon closer examination. First, ambient diffusion lacks a theoretical analysis of how different mask distributions affect the learning dynamics, whereas our work provides a rigorous characterization through Theorem~\ref{thm: osucmwqi} parts (ii) and (iii). The most significant difference lies in the sampling methodology. Ambient diffusion employs a fixed mask sampling strategy and directly approximates $\mathbb{E} \left[ \bm{x}_0 \mid \bm{M}_{\text{ctx}} \odot \bm{x}_{\text{obs}, t}, \bm{M}_{\text{ctx}} \right]$ to approximate $\mathbb{E} \left[ \bm{x}_0 \mid \bm{x}_{\text{obs}, t}, \bm{M} \right]$. As we demonstrate in Theorem~\ref{thm: pomct}, this approximation introduces a distribution gap proportional to the variance of information provided by the conditioning terms, leading to suboptimal sample quality. In contrast, our method leverages the Martingale convergence theorem to approximate the true conditional expectation, providing theoretical convergence guarantees and eliminating the distribution gap inherent in the ambient diffusion approach.

\subsection{Score matching with missing data} \label{app: score matching related work}
Our approach differs from this prior work~\citep{givens2025score} in a key way: we employ data matching rather than score matching due to its superior input flexibility and computational efficiency. In data matching, the optimal solution is $\mathbb{E} \left[ \bm{x}_0 \mid \bm{x}_{\text{obs}, t}, \bm{M} \right]$, which requires only the observed data $\bm{x}_{\text{obs}, t}$ and mask $\bm{M}$ as inputs. By contrast, noise matching and velocity matching require computing the score term $\nabla_{\bm{x}_{\text{unobs}, t}} \log p_t \left( \bm{x}_{\text{unobs}, t} \mid \bm{x}_{\text{obs}}, \bm{M} \right)$, which depends on the unobserved data $\bm{x}_{\text{unobs}, t}$ that is unavailable during training. To address this limitation, the score matching approach relies on a Monte Carlo approximation, significantly increasing computational cost. Additionally, prior work~\citep{zhou2024generating} has shown that data matching achieves superior performance compared to noise matching for PDE solution generation. Therefore, we focus exclusively on the data matching framework.

\subsection{Extension to other score-based models}
Our approach can be extended to other score-based generative models through established theoretical connections. The equivalence between diffusion models and flow matching~\citep{albergo2023stochastic}, as well as between diffusion models and Bayesian flow networks~\citep{xue2024unifying}, provides a foundation for this extension. Through Tweedie's formula~\citep{efron2011tweedie}, we can establish the connection between the score function $\nabla_{\bm{x}} \log p_t \left(\bm{x}_t\right)$ and the conditional expectation $\mathbb{E} \left[ \bm{x}_0 \mid \bm{x}_t \right]$. Leveraging this connection via the parameterization trick, we can train a model to learn $\mathbb{E} \left[ \bm{x}_0 \mid \bm{x}_{\text{obs}, t}, \bm{M} \right]$ using our proposed method, which can then be converted to output the score function for the denoising process~\citep{zhou2024generating}.

\subsection{Comparison to masked self-supervised learning} \label{app: comparison mae}
Our context-query partitioning strategy is conceptually related to masked signal modeling, such as Masked Autoencoders (MAEs)~\citep{he2022masked}. However, our approach differs fundamentally in its problem setting, data assumptions, and theoretical goals.

\begin{itemize}
    \item \textbf{Data Completeness Assumption (Most Critical):} MAEs and related self-supervised methods train on \textit{complete data} that is \textit{artificially} masked. The full ground truth is always available during training. Our framework is designed for a different, and common, scientific scenario: the training data is \textbf{inherently incomplete} (e.g., from sparse sensors or cloud occlusion). No complete ground truth samples exist in our training dataset.

    \item \textbf{Primary Objective:} MAE uses masking as a pretext task to learn \textit{robust representations} for downstream applications~\citep{feichtenhofer2022masked}. Our goal is to learn the \textit{complete generative distribution} $p_{\text{data}}(\bm{x}_0)$ from these partial observations to perform accurate imputation of the true physical fields.

    \item \textbf{Masking Strategy and Theory:} MAE's random masking is an empirical choice for representation learning. Our \textit{strategic} context-query partitioning is a direct consequence of our theoretical analysis in Theorem.~\ref{thm: osucmwqi}. It is specifically designed to solve the core challenge of our setting: how to ensure that dimensions \textit{permanently missing} in the training set still receive a positive query probability ($P((\bm{M}_{\text{qry}})_{i}=1 \mid \bm{M}_{\text{ctx}})>0$) and meaningful gradient updates. As shown in Theorem.~\ref{thm: osucmwqi}, without this, the model learns arbitrary values for these unobserved regions. This is a problem MAE does not encounter, as it always has access to the complete ground truth for its loss calculation.
\end{itemize}

\subsection{Method comparison and theoretical analysis} \label{app: our method comparison}
We provide additional method comparisons in Tab.~\ref{tab: method comparison} and summarize the key feature comparisons with the most related methods, contrasting our approach with three existing methods: (1) AmbientDiff~\citep{daras2023ambient}, (2) Diffusion Probabilistic Fields (DPF)~\citep{zhuang2023diffusion}, and (3) Score Matching with Missing Data~\citep{givens2025score}.

\begin{table}[htbp]
\centering
\caption{Comparison of the most related methods.}
\label{tab: method comparison}
\resizebox{\textwidth}{!}{%
\begin{tabular}{@{}lllll@{}}
\toprule
\textbf{Aspect} & \textbf{Ours} & \textbf{Ambient Diffusion} & \textbf{DPF} & \textbf{MissDiff} \\
\midrule
\textbf{Training Objective} & Data matching & Data matching  & Noise matching  & Noise matching \\
\addlinespace[0.2em]
\textbf{Model Input} & Context only  & Fixed mask sampling & Both context and query & Masked tabular data \\
\addlinespace[0.2em]
\textbf{Query Mask Usage} & Hidden during training & Hidden during training & Provided to model & Provided to model \\
\addlinespace[0.2em]
\textbf{Mask Sampling} & Random subsets: $\bm{M}_{\text{ctx}} \subseteq \bm{M}$ & Fixed distribution & Both $\bm{M}_{\text{ctx}}$ and $\bm{M}_{\text{qry}}$ & $\bm{M}_{\text{ctx}} = \bm{M}_{\text{qry}} = \bm{M}$ \\
\addlinespace[0.2em]
\textbf{Expectation Approx.} & Ensemble: $\frac{1}{n}\sum_i \bm{x}_\theta(t, \text{ctx})$ & Direct: $\mathbb{E}[\bm{x}_0 \mid \text{ctx}]$ & Incorrect & Direct: $\mathbb{E}[\bm{x}_0 \mid \text{obs}]$ \\
\addlinespace[0.2em]
\textbf{Theoretical Guarantees} & \checkmark~Convergence proofs (Thm.~\ref{thm: osucmwqi}\& \ref{thm: pomct}) & \texttimes~Lacks rigorous analysis & \texttimes~Heuristic design & \texttimes~Lacks rigorous analysis \\
\addlinespace[0.2em]
\textbf{Distribution Gap} & \checkmark~Minimized via ensemble & \texttimes~Gap $\propto$ Var[conditioning] & \texttimes~Not addressed & \texttimes~Not addressed \\
\addlinespace[0.2em]
\textbf{Learning Dynamics} & \checkmark~Gradient scaling analysis & \texttimes~No analysis & \texttimes~No analysis & \texttimes~No analysis \\
\bottomrule
\end{tabular}%
}
\end{table}

\section{Method comparison: diffusion probabilistic fields} \label{app: Method comparison: diffusion probabilistic fields}
We summarize the training and sampling algorithms for diffusion probabilistic fields (DPF)~\citep{zhuang2023diffusion} in Alg.~\ref{alg: DPF training},~\ref{alg: DPF sampling}. We cannot directly adopt DPF for our dynamic completion tasks because we lack access to the ground truth value of $\text{qry}$ during training. However, we can still compare several high-level ideas with those in DPF. 

\resizebox{\textwidth}{!}{
\begin{minipage}[c]{0.47\textwidth}
\begin{algorithm}[H]
\caption{DPF training process~\citep{zhuang2023diffusion}}
\label{alg: DPF training}
\begin{algorithmic}[1]
\REPEAT
    \STATE $\bm{x}_0 \sim p_{\text{data}}, t \sim \text{Uniform}\left(0, 1\right)$
    \STATE $\bm{\epsilon}_{\text{ctx}} \sim \mathcal{N}(\bm{0}, \mathbf{I}), \bm{\epsilon}_{\text{qry}} \sim \mathcal{N}(\bm{0}, \bm{I})$
    \STATE Sample $\bm{M}_{\text{ctx}}, \bm{M}_{\text{qry}}$
    \STATE $\text{ctx} = \, \left[ \bm{M}_{\text{ctx}}, \, \bm{M}_{\text{ctx}} \odot \left( \alpha_t \bm{x}_0 + \sigma_t \bm{\epsilon}_{\text{ctx}} \right) \right]$
    \STATE $\text{qry} = \left[ \bm{M}_{\text{qry}}, \bm{M}_{\text{qry}} \odot \left( \alpha_t \bm{x}_0 + \sigma_t \bm{\epsilon}_{\text{qry}} \right) \right]$
    \STATE Optimize the loss function \\
    $\mathcal{L} = \| \bm{M}_{\text{qry}} \odot \left( \bm{\epsilon}_{\bm{\theta}}(t, \text{ctx}, \text{qry}) - \bm{\epsilon}_{\text{qry}} \right) \|^2$
\UNTIL{converged}
\end{algorithmic}
\end{algorithm}
\end{minipage}
\hspace{0.5cm}
\begin{minipage}[c]{0.62\textwidth}
\begin{algorithm}[H]
\caption{DPF sampling process~\citep{zhuang2023diffusion}}
\label{alg: DPF sampling}
\begin{algorithmic}[1]
    \STATE Sample $\bm{M}_{\text{ctx}} \subseteq \bm{M}_{\text{qry}}$
    \STATE $\bm{x}_T \sim \mathcal{N}(\bm{0}, \bm{I})$
    \STATE $\text{ctx} = \left[ \bm{M}_{\text{ctx}}, \, \bm{M}_{\text{ctx}} \odot \bm{x}_T \right]$
    \STATE $\text{qry} = \left[ \bm{M}_{\text{qry}}, \bm{M}_{\text{qry}} \odot \bm{x}_T \right]$
    \FOR{$t = T, \ldots, 1$}
        \STATE $\bm{x}_{t-1} = \operatorname{ProbabilityFlowODE}(t, \bm{x}_{t},  \bm{\epsilon}_{\bm{\theta}}( t, \text{ctx}, \text{qry})) )$
        \STATE $\text{ctx} = \left[ \bm{M}_{\text{ctx}}, \, \bm{M}_{\text{ctx}} \odot \bm{x}_{t-1} \right]$
        \STATE $\text{qry} = \left[ \bm{M}_{\text{qry}}, \bm{M}_{\text{qry}} \odot \bm{x}_{t-1} \right]$
    \ENDFOR
    \RETURN sample values evaluated on $\bm{M}_{\text{qry}}$
\end{algorithmic}
\end{algorithm}
\end{minipage}
}

There are three main difference between our proposed method and DPF:
\begin{enumerate}
    \item Diffusion loss objective: our methods use data matching while DPF uses noise matching. Using data matching to predict conditional expectation has a more flexible requirement on model input (see Sec.~\ref{app: our method comparison} for details).
    \item The DPF method suggests we should take $\bm{M}_{\text{qry}}$ to be the full observed mask, which is impossible to implement in our task.
    \item DPF trains a model that takes both $\text{ctx}$ and $\text{qry}$ as inputs. In contrast, our method trains a model using only $\text{ctx}$ as input. This difference leads the model to converge to a different optimal solution. 
\end{enumerate}
In the following analysis, we will assume that we have the fully observed sample during the training session. We will analyze the output of the model optimized by Alg.~\ref{alg: DPF training} and demonstrate that it does not yield the desired solution for the denoising process. 

We reparameterize the loss function as 
\begin{equation}
    \mathcal{L}\left( \bm{\theta} \right) = \| \bm{M}_{\text{qry}} \odot \left( \bm{x}_{\bm{\theta}}(t, \text{ctx}, \text{qry}) - \bm{x}_{0} \right) \|^2 \label{eq: dpf loss data matching}
\end{equation}
When optimized, 
\begin{equation}
    \bm{M}_{\text{qry}} \odot \left( \alpha_t \bm{x}_{\bm{\theta}}^{*}\left(t, \text{ctx}, \text{qry}\right) + \sigma_t \bm{\epsilon}_{\bm{\theta}}^{*}\left(t, \text{ctx}, \text{qry}\right) - \text{qry}[1] \right) = \bm{0}
\end{equation}
In the following, we will analyze the optimal solution given by \eqref{eq: dpf loss data matching}. The conditional expectation $\mathbb{E}\left[\bm{x}_0 \mid \text{ctx}, \text{qry}\right]$ minimizes the expected squared error~\citep{bishop2006pattern}. Hence, we have 
\begin{equation}
    \bm{M}_{\text{qry}} \odot \bm{x}_{\bm{\theta}}^{*}(t, \text{ctx}, \text{qry}) = \bm{M}_{\text{qry}} \odot \mathbb{E}\left[\bm{x}_0 \mid \text{ctx}, \text{qry}\right]
\end{equation}
For simplicity, we consider a simple distribution $\bm{x}_0 \sim p_{\text{data}} =\mathcal{N}(\bm{\mu}, \bm{\Sigma})$. The noisy observations are defined as: $\bm{z}_{\text{ctx}} = \alpha_t \bm{x}_0 + \sigma_t \bm{\epsilon}_{\text{ctx}}, \bm{z}_{\text{qry}} = \alpha_t \bm{x}_0 + \sigma_t \bm{\epsilon}_{\text{qry}}$. We have a joint Gaussian distribution:
\begin{equation}
    \begin{bmatrix} \bm{x}_0 \\ \bm{z}_{\text{ctx}} \\ \bm{z}_{\text{qry}} \end{bmatrix} \sim \mathcal{N} \left( 
    \begin{bmatrix} \bm{\mu} \\ \alpha_t \bm{\mu} \\ \alpha_t \bm{\mu} \end{bmatrix},
    \begin{bmatrix}
    \bm{\Sigma} & \alpha_t \bm{\Sigma} & \alpha_t \bm{\Sigma} \\
    \alpha_t \bm{\Sigma} & \alpha_t^2 \bm{\Sigma} + \sigma_t^2 \bm{I} & \alpha_t^2 \bm{\Sigma} \\
    \alpha_t \bm{\Sigma} & \alpha_t^2 \bm{\Sigma} & \alpha_t^2 \bm{\Sigma} + \sigma_t^2 \bm{I}
    \end{bmatrix}
    \right)
\end{equation}
Let $\bm{y}$ denote the observed entries selected by the masks: $\bm{y} = \bm{S} \begin{bmatrix} \bm{z}_{\text{ctx}} \\ \bm{z}_{\text{qry}} \end{bmatrix}$, where $\bm{S}$ is the selection matrix that extracts the masked entries from $\bm{z}_{\text{ctx}}$ and $\bm{z}_{\text{qry}}$. 
By the property of linear transformation of multivariate Gaussian, suppose we have a joint Gaussian distribution of the form:
\begin{equation}
\begin{bmatrix}
\bm{x}_0 \\
\bm{z}
\end{bmatrix}
\sim \mathcal{N} \left(
\begin{bmatrix}
\bm{\mu}_x \\
\bm{\mu}_z
\end{bmatrix},
\begin{bmatrix}
\bm{\Sigma}_{xx} & \bm{\Sigma}_{xz} \\
\bm{\Sigma}_{zx} & \bm{\Sigma}_{zz}
\end{bmatrix}
\right)
\end{equation}
then the joint distribution of $\bm{x}_0$ and $\bm{y}$ is given by
\begin{equation}
\begin{bmatrix}
\bm{x}_0 \\
\bm{y}
\end{bmatrix}
=
\begin{bmatrix}
\bm{x}_0 \\
\bm{S}\bm{z}
\end{bmatrix}
\sim \mathcal{N} \left(
\begin{bmatrix}
\bm{\mu}_x \\
\bm{S}\bm{\mu}_z
\end{bmatrix},
\begin{bmatrix}
\bm{\Sigma}_{xx} & \bm{\Sigma}_{xz} \bm{S}^\top \\
\bm{S} \bm{\Sigma}_{zx} & \bm{S} \bm{\Sigma}_{zz} \bm{S}^\top
\end{bmatrix}
\right)
\end{equation}
Using the above property, the joint distribution of $\bm{x}_0$ and $\bm{y}$ is then:
\begin{subequations}
\begin{align}
&\begin{bmatrix}
\bm{x}_0 \\
\bm{y}
\end{bmatrix}
=
\begin{bmatrix}
\bm{x}_0 \\
\bm{S} \begin{bmatrix} \bm{z}_{\text{ctx}} \\ \bm{z}_{\text{qry}} \end{bmatrix}
\end{bmatrix}
\\
&\sim \mathcal{N} \left(
\begin{bmatrix}
\bm{\mu} \\
\alpha_t \bm{S} \left( \bm{1}_2 \otimes \bm{I} \right) \boldsymbol{\mu}
\end{bmatrix},
\begin{bmatrix}
\bm{\Sigma}_{xx} & \alpha_t \left( \bm{1}_2^\top \otimes \bm{\Sigma} \right) \bm{S}^\top \\
\alpha_t \bm{S} \left( \bm{1}_2 \otimes \bm{\Sigma} \right) & \bm{S} \left( \alpha_t^2 \left( \bm{1}_2 \bm{1}_2^\top \otimes \bm{\Sigma} \right) + \sigma_t^2 \bm{I}\right) \bm{S}^\top
\end{bmatrix}
\right)
\end{align}
\end{subequations}
Thus, the conditional expectation is:
\begin{equation}
\scalebox{0.95}{$\mathbb{E}\left[\bm{x}_0 \mid \bm{y}\right] = \bm{\mu} + \alpha_t \left(\bm{1}_2^\top \otimes \bm{\Sigma}\right) \bm{S}^\top \left[\bm{S} \left( \alpha_t^2 \left(\bm{1}_2 \bm{1}_2^\top \otimes \bm{\Sigma}\right) + \sigma_t^2 \bm{I} \right) \bm{S}^\top\right]^{-1} \left(\bm{y} - \alpha_t \bm{S} \left( \bm{1}_2 \otimes \bm{I} \right) \boldsymbol{\mu} \right)$}
\end{equation}
Therefore, DPF's optimal solution depends on the specific context and query masks chosen, and may predict values that differ from target predictions. In contrast, our proposed method has a theoretical guarantee: with sufficient sampling of context masks, the model's output will converge to the desired objective (see Theorem~\ref{thm: pomct}).

\section{Multi-step sampling} \label{app: Multi-step sampling}
In the following, we will consider two specific settings of the mask $\bm{M}$ and diffusion time $t$, and use these constructions to approximate $\mathbb{E} \left[ \bm{x}_0 \mid \bm{x}_t, \bm{x}_{\text{obs}}, \bm{M} \right]$.

\paragraph{Diffusion expectation approximation.} We randomly generate multiple masks $\{ \bm{M}_{\text{rnd}}^{(i)} \}_{i=1}^K$, where $\bm{M}_{\text{rnd}}$ follows the same marginal distribution as $\bm{M}_{\text{ctx}}$ in the training process, but not necessarily being a subset of $\bm{M}$ and take the average across all samples yields:
\begin{equation}
    \mathbb{E} \left[ \bm{x}_0 \mid \bm{x}_t \right] \approx \frac{1}{K} \sum_{k=1}^K \bm{x}_{\bm{\theta}} \left( t, \bm{M}_{\text{rnd}}^{(k)} \odot \bm{x}_{t}, \bm{M}_{\text{rnd}}^{(k)} \right)
\end{equation}
This Monte Carlo estimation demonstrates that our model, despite being trained exclusively on masked data, can recover the full data distribution. The averaging process allows us to obtain the same distributional modeling capability as standard diffusion models trained on complete datasets.

\paragraph{Imputation expectation approximation.} Given partially observed samples $\bm{x}_{\text{obs}}$, we apply the forward diffusion process to a small timestamp $t=\delta$ and generate random masks $\{ \bm{M}_{\text{ctx}}^{(k)} \}_{k=1}^K \subseteq \bm{M}$ to approximate:
\begin{equation}
    \mathbb{E} \left[ \bm{x}_0 \mid \bm{x}_{\text{obs}}, \bm{M} \right] \approx \mathbb{E} \left[ \bm{x}_0 \mid \bm{x}_{\text{obs}, \delta}, \bm{M} \right] \approx \frac{1}{K} \sum_{k=1}^K \bm{x}_{\bm{\theta}} \left( \delta, \bm{M}_{\text{ctx}}^{(k)} \odot \bm{x}_{\text{obs}, \delta}, \bm{M}_{\text{ctx}}^{(k)} \right)
\end{equation}
The optimal denoiser $\mathbb{E} \left[ \bm{x}_0 \mid \bm{x}_t, \bm{x}_{\text{obs}}, \bm{M} \right]$ requires the expectation of $\bm{x}_0$ conditional on all three information sources: the noisy state $\bm{x}_t$, the clean observations $\bm{x}_{\text{obs}}$, and the observation mask $\bm{M}$. Our approach decomposes this complex conditioning into two manageable components: the \emph{diffusion expectation} $\mathbb{E} \left[ \bm{x}_0 \mid \bm{x}_t \right]$ captures the denoising information from the current noisy state, while the \emph{imputation expectation} $\mathbb{E} \left[ \bm{x}_0 \mid \bm{x}_{\text{obs}}, \bm{M} \right]$ incorporates the structural information from the observed values and their locations. We then heuristically combine these two sources of information through a weighted average: 
\begin{equation}
    \begin{aligned}
    & \hat{\bm{x}_{\bm{\theta}}} \left( t, \bm{x}_t, \bm{x}_{\text{obs}}, \bm{M} \right) := \mathbb{E} \left[ \bm{x}_0 \mid \bm{x}_t, \bm{x}_{\text{obs}}, \bm{M} \right] \approx \omega_t \mathbb{E} \left[ \bm{x}_0 \mid \bm{x}_t \right] + \left( 1 - \omega_t \right) \mathbb{E} \left[ \bm{x}_0 \mid \bm{x}_{\text{obs}}, \bm{M} \right] \\
    &\approx \omega_t \mathbb{E}_{\bm{M}_{\text{rnd}}} \left[ \bm{x}_{\bm{\theta}} \left( t, \bm{M}_{\text{rnd}} \odot \bm{x}_{t}, \bm{M}_{\text{rnd}} \right) \right] + \left( 1 - \omega_t \right) \mathbb{E}_{\bm{M}_{\text{ctx}} \subseteq \bm{M}} \left[ \bm{x}_{\bm{\theta}} \left( \delta, \bm{M}_{\text{ctx}} \odot \bm{x}_{\text{obs}, \delta}, \bm{M}_{\text{ctx}} \right) \right]
    \end{aligned}
    \label{eq: final imputation expectation}
\end{equation}
where $\omega_t$ is a monotonically increasing weight function that transitions from $0$ to $1$ as the diffusion process progresses and $\delta$ is a sufficiently small positive number. 

We present our proposed sampling algorithm in Alg.~\ref{alg: sampling}. At the implementation level, we precompute the \emph{imputation expectation} once before the denoising process begins. During each denoising step, we approximate the \emph{diffusion expectation} by sampling a single random mask $\bm{M}_{\text{rnd}}$ and making one model evaluation. This single-sample approximation is analogous to using a batch size of 1 in stochastic gradient descent, where we accept the variance from using only one sample in exchange for computational efficiency. 

\begin{wrapfigure}{r}{0.48\textwidth}
\begin{minipage}[t]{0.48\textwidth}
\begin{algorithm}[H]
\caption{Diffusion-based sampling for data imputation}
\label{alg: sampling}
\begin{algorithmic}[1]
\REQUIRE partially observed data $\bm{x}_{\text{obs}}$, mask $\bm{M}$, trained model $\bm{x}_{\bm{\theta}}$
\ENSURE imputed complete data $\bm{x}_0$

\STATE initialize: $\bm{x}_T \sim \mathcal{N}(\bm{0}, \bm{I})$

\FOR{diffusion steps from $s$ to $t$}
    \STATE $\bm{\epsilon}_{\text{unobs}} \leftarrow \frac{\bm{x}_s - \alpha_s \hat{\bm{x}_{\bm{\theta}}} \left( s, \bm{x}_s, \bm{x}_{\text{obs}}, \bm{M} \right)}{\sigma_s}$ (Eq.~\ref{eq: final imputation expectation})
    
    \STATE $\bm{\epsilon}_{\text{obs}} \leftarrow \frac{\bm{x}_s - \alpha_s \bm{x}_{\text{obs}}}{\sigma_s}$
    
    \STATE $\bm{\epsilon}_{\text{full}} \leftarrow \bm{M} \odot \bm{\epsilon}_{\text{obs}} + (1-\bm{M}) \odot \bm{\epsilon}_{\text{unobs}}$
    
    \STATE $\bm{x}_t \leftarrow \text{DiffusionODE}(s, t, \bm{x}_s, \bm{\epsilon}_{\text{full}})$
\ENDFOR
\STATE $\bm{x}_0 \leftarrow \bm{M} \odot \bm{x}_{\text{obs}} + (1 - \bm{M}) \odot \bm{x}_t$
\RETURN $\bm{x}_0$
\end{algorithmic}
\end{algorithm}
\end{minipage}
\end{wrapfigure}

Following \cite{lugmayr2022repaint}, our sampling procedure applies different denoising strategies to observed and unobserved regions. For unobserved elements, we estimate the noise using our trained model, while for observed elements, we directly compute the noise from the known clean observations. This approach ensures that the observed values remain consistent with their true underlying data throughout the denoising process. A more advanced setting proposed in \cite{huang2024diffusionpde} uses guided diffusion sampling that starts from Gaussian noise and iteratively denoises it while being guided by two loss terms: an observation loss (matching sparse measurements) and a PDE loss (satisfying the governing equation), ultimately generating complete solutions that are consistent with both the partial observations and the underlying physics. However, we did not implement this approach in our work, and combining our mask-based denoising strategy with physics-informed guided diffusion remains an interesting direction for future research.

\section{Proofs} \label{app: proofs}
This section provides detailed mathematical proofs for the main theoretical results presented in the paper. We begin by establishing key assumptions and notation that will be used throughout the proofs. 

\begin{assumption}[Uncorrelated decomposition] \label{assm: Uncorrelated 
decomposition}
    Decompose the model output as $\bm{x}_{\bm{\theta}}\left( t, \mathrm{ctx} \right) = \mathbb{E} \left[ \bm{x}_0 \mid \mathrm{ctx} \right] + \bm{b}\left( \mathrm{ctx} \right) + \bm{\epsilon}_{\mathrm{bias}}\left( \mathrm{ctx} \right)$. Given a context $\mathrm{ctx}$, the following three components are mutually uncorrelated: 
    \begin{itemize}
        \item data component: $\mathbb{E}[\bm{x}_0 \mid \mathrm{ctx}] - \mathbb{E}[\mathbb{E}[\bm{x}_0 \mid \mathrm{ctx}]]$,
        \item bias component: $\bm{b}(\mathrm{ctx}) - \mathbb{E}[\bm{b}(\mathrm{ctx})]$, 
        \item random error: $\bm{\epsilon}(\mathrm{ctx})$.
    \end{itemize} 
    We also assume that $\bm{\epsilon}(\mathrm{ctx}^{(i)})$ and $\bm{\epsilon}(\mathrm{ctx}^{(j)})$ are independent for $i \neq j$.
\end{assumption}

For notation simplicity, we denote 
\begin{subequations}
    \begin{align}
        \text{obs} &= \left[ \bm{M} \odot \bm{x}_{\text{obs}, t}, \bm{M} \right] \\
        \text{ctx} &= \left[ \bm{M}_{\text{ctx}} \odot \bm{x}_{\text{obs}, t}, \bm{M}_{\text{ctx}} \right] \\
        \text{qry} &= \left[ \bm{M}_{\text{qry}} \odot \bm{x}_{\text{obs}, t}, \bm{M}_{\text{qry}} \right]
    \end{align}
\end{subequations}
when it is clear from the context.

\subsection{Analysis of model outputs under optimal loss conditions} \label{app: Analysis of model outputs under optimal loss conditions}
We begin the proof with a foundational lemma that establishes the key relationship between the optimal model output and the conditional expectations.

\begin{lemma}[Optimal Function for Element-wise Weighted MSE]
\label{lemma:optimal_g}
Let $\bm{X}$, $\bm{Y}$, and $\bm{Z}$ be random vectors in $\mathbb{R}^d$ such that the relevant second moments are finite. Let $\bm{g}: \text{Space}(\bm{Y}) \to \mathbb{R}^d$ be a deterministic function and let the objective function $L(\bm{g})$ be defined as
\begin{equation}
    L(\bm{g}) = \mathbb{E}\left[ \|\bm{Z} \odot \bm{g}(\bm{Y}) - \bm{Z} \odot \bm{X}\|^2 \right],
\end{equation}
where $\odot$ denotes the Hadamard (element-wise) product.
If each component of the vector $\mathbb{E}[\bm{Z} \odot \bm{Z} \mid \bm{Y}]$ is strictly positive almost surely, then the unique function $\bm{g}^*$ that minimizes $L(\bm{g})$ is given by
\begin{equation}
    \bm{g}^*(\bm{Y}) = \frac{\mathbb{E}[\bm{Z} \odot \bm{Z} \odot \bm{X} \mid \bm{Y}]}{\mathbb{E}[\bm{Z} \odot \bm{Z} \mid \bm{Y}]},
\end{equation}
where the division is performed element-wise.
\end{lemma}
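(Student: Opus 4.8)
The plan is to exploit the fact that the element-wise weighted squared error decouples across coordinates, reducing the vector-valued minimization to $d$ independent scalar problems, each of which I solve by conditioning on $\bm{Y}$ and completing the square in the free value $g_i(\bm{Y})$.

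First I would expand the Hadamard structure of the objective. Writing $Z_i, X_i, g_i$ for the $i$-th components, the integrand is $\|\bm{Z}\odot\bm{g}(\bm{Y}) - \bm{Z}\odot\bm{X}\|^2 = \sum_{i=1}^d Z_i^2\,(g_i(\bm{Y}) - X_i)^2$, so by linearity $L(\bm{g}) = \sum_{i=1}^d \mathbb{E}[Z_i^2(g_i(\bm{Y})-X_i)^2]$. Each summand depends only on its own component function $g_i$, so it suffices to minimize each term separately over $g_i$. I would then apply the tower property to each term, writing $\mathbb{E}[Z_i^2(g_i(\bm{Y})-X_i)^2] = \mathbb{E}\bigl[\,\mathbb{E}[Z_i^2(g_i(\bm{Y})-X_i)^2 \mid \bm{Y}]\,\bigr]$, and note that since $g_i(\bm{Y})$ is $\sigma(\bm{Y})$-measurable it acts as a fixed constant $c = g_i(\bm{y})$ inside the inner conditional expectation at $\bm{Y}=\bm{y}$.

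The core computation is to expand the inner conditional expectation as a quadratic in $c$: $c^2\,\mathbb{E}[Z_i^2\mid\bm{Y}] - 2c\,\mathbb{E}[Z_i^2 X_i\mid\bm{Y}] + \mathbb{E}[Z_i^2 X_i^2\mid\bm{Y}]$. By the hypothesis that each component of $\mathbb{E}[\bm{Z}\odot\bm{Z}\mid\bm{Y}]$ is strictly positive almost surely, the leading coefficient $\mathbb{E}[Z_i^2\mid\bm{Y}]$ is positive, so this parabola has a unique minimizer $c^*(\bm{y}) = \mathbb{E}[Z_i^2 X_i\mid\bm{Y}]/\mathbb{E}[Z_i^2\mid\bm{Y}]$, obtained by setting the derivative to zero. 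Collecting the coordinates then yields precisely the claimed element-wise ratio $\bm{g}^*(\bm{Y}) = \mathbb{E}[\bm{Z}\odot\bm{Z}\odot\bm{X}\mid\bm{Y}]\,/\,\mathbb{E}[\bm{Z}\odot\bm{Z}\mid\bm{Y}]$, and the finite-second-moment assumption guarantees all the expectations appearing here are well defined and finite.

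The step that needs the most care, and the main obstacle, is justifying that minimizing the inner conditional expectation pointwise in $\bm{y}$ actually produces the global minimizer over all measurable $\bm{g}$. The hard part will be the measurable-selection argument: because $g_i$ ranges over all measurable functions of $\bm{Y}$, I can independently assign $c^*(\bm{y})$ at almost every $\bm{y}$, and the resulting $\bm{g}^*$ is measurable as a ratio of (measurable) conditional expectations with an almost-surely positive denominator. I would close this gap by observing that for any competitor $\bm{g}$, the inner conditional expectation at each $\bm{y}$ is bounded below by the value attained at $c^*(\bm{y})$; taking outer expectations preserves the inequality and gives $L(\bm{g}) \ge L(\bm{g}^*)$, with equality if and only if $g_i = c^*$ almost surely, which establishes both optimality and uniqueness.
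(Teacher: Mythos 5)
Your proposal is correct and follows essentially the same route as the paper's proof: decompose the weighted squared error coordinate-wise, condition on $\bm{Y}$ via the tower property, and minimize the resulting convex quadratic in $g_i(\bm{y})$, whose leading coefficient $\mathbb{E}[Z_i^2 \mid \bm{Y}]$ is positive by hypothesis. Your added care about measurable selection and the equality case for uniqueness is a sound refinement of a step the paper treats implicitly, but it does not change the argument.
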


\begin{proof}
The objective function $L(\bm{g})$ can be decomposed by writing the squared Euclidean norm as a sum over its components.
\begin{subequations}
\begin{align}
    L(\bm{g}) &= \mathbb{E}\left[ \|\bm{Z} \odot (\bm{g}(\bm{Y}) - \bm{X})\|^2 \right] \\
         &= \mathbb{E}\left[ \sum_{i=1}^d \left(\bm{Z}_i (\bm{g}_i(\bm{Y}) - \bm{X}_i)\right)^2 \right] \\
         &= \sum_{i=1}^d \mathbb{E}\left[ \bm{Z}_i^2 (\bm{g}_i(\bm{Y}) - \bm{X}_i)^2 \right] \label{eq:proof_decomp}
\end{align}
\end{subequations}
The final step follows from the linearity of expectation. The total loss is a sum of non-negative terms, so $L(\bm{g})$ is minimized if and only if each term in the summation is minimized independently. Let $L_i(\bm{g}_i) = \mathbb{E}[\bm{Z}_i^2 (\bm{g}_i(\bm{Y}) - \bm{X}_i)^2]$ be the $i$-th term. The optimization problem is thus reduced to finding the function $\bm{g}_i$ that minimizes $L_i$ for each component $i \in \{1, \dots, d\}$.
By the law of total expectation, $L_i(\bm{g}_i)$ can be written as:
\begin{equation}
    L_i(\bm{g}_i) = \mathbb{E}_{\bm{Y}}\left[ \mathbb{E}\left[ \bm{Z}_i^2 (\bm{g}_i(\bm{Y}) - \bm{X}_i)^2 \mid \bm{Y} \right] \right]
\end{equation}
The outer expectation is minimized by minimizing the inner conditional expectation for any given realization $\bm{y}$ from the space of $\bm{Y}$. For a fixed $\bm{y}$, let $\bm{v}_i = \bm{g}_i(\bm{y})$ be a deterministic scalar. The inner expectation becomes:
\begin{equation}
    \mathbb{E}\left[ \bm{Z}_i^2 (\bm{v}_i - \bm{X}_i)^2 \mid \bm{Y}=\bm{y} \right] = \mathbb{E}\left[ \bm{Z}_i^2 (\bm{v}_i^2 - 2\bm{v}_i\bm{X}_i + \bm{X}_i^2) \mid \bm{Y}=\bm{y} \right]
\end{equation}
Applying the linearity of conditional expectation yields a quadratic function of $\bm{v}_i$:
\begin{equation}
    \bm{v}_i^2 \mathbb{E}[\bm{Z}_i^2 \mid \bm{Y}=\bm{y}] - 2\bm{v}_i \mathbb{E}[\bm{Z}_i^2 \bm{X}_i \mid \bm{Y}=\bm{y}] + \mathbb{E}[\bm{Z}_i^2 \bm{X}_i^2 \mid \bm{Y}=\bm{y}]
\end{equation}
This is a convex quadratic in $\bm{v}_i$, since its leading coefficient $\mathbb{E}[\bm{Z}_i^2 \mid \bm{Y}=\bm{y}]$ is strictly positive by hypothesis. The unique minimum is found by setting the derivative with respect to $\bm{v}_i$ to zero:
\begin{equation}
    2\bm{v}_i\mathbb{E}[\bm{Z}_i^2 \mid \bm{Y}=\bm{y}] - 2\mathbb{E}[\bm{Z}_i^2 \bm{X}_i \mid \bm{Y}=\bm{y}] = 0
\end{equation}
Solving for $\bm{v}_i$ gives the optimal value for the component function at $\bm{y}$:
\begin{equation}
    \bm{v}_i = \frac{\mathbb{E}[\bm{Z}_i^2 \bm{X}_i \mid \bm{Y}=\bm{y}]}{\mathbb{E}[\bm{Z}_i^2 \mid \bm{Y}=\bm{y}]}
\end{equation}
This establishes the optimal form for each component $\bm{g}_i^*(\bm{y})$ of the function $\bm{g}^*(\bm{y})$. Since this holds for all $\bm{y}$, the optimal function $\bm{g}_i^*$ for the $i$-th component is:
\begin{equation}
    \bm{g}_i^*(\bm{Y}) = \frac{\mathbb{E}[\bm{Z}_i^2 \bm{X}_i \mid \bm{Y}]}{\mathbb{E}[\bm{Z}_i^2 \mid \bm{Y}]}
\end{equation}
Assembling the components for $i=1, \dots, d$ into a single vector equation gives the expression for the optimal function $\bm{g}^*$:
\begin{equation}
    \bm{g}^*(\bm{Y}) = \frac{\mathbb{E}[\bm{Z} \odot \bm{Z} \odot \bm{X} \mid \bm{Y}]}{\mathbb{E}[\bm{Z} \odot \bm{Z} \mid \bm{Y}]}
\end{equation}
where the division is understood to be element-wise.
\end{proof}

Having established the fundamental relationship between optimal model outputs and query probabilities in the lemma, we now proceed to prove our main theorem. 

\osucmwqi*
\begin{proof} \label{proof: training loss}
    Given the training algorithm in Alg.~\ref{alg: training}, we have
    \begin{subequations}
        \begin{align}
        \bm{\theta}^{*} &= \argmin_{\bm{\theta}} \mathbb{E}_{t, \bm{x}_{\text{obs}}, \left( \bm{M}_{\text{ctx}}, \bm{M}_{\text{qry}} \subseteq \bm{M} \right) }\left[ \| \bm{M}_{\text{qry}} \odot  \left(\bm{x}_{\bm{\theta}} \left( t, \text{ctx} \right) - \bm{x}_{\text{obs}} \right) \|^2\right] \\
        &= \argmin_{\bm{\theta}} \mathbb{E}_{t} \mathbb{E}_{\bm{x}_{\text{obs}}, \left( \bm{M}_{\text{ctx}}, \bm{M}_{\text{qry}} \subseteq \bm{M} \right)} \left[\| \bm{M}_{\text{qry}} \odot  \left(\bm{x}_{\bm{\theta}} \left( t, \text{ctx} \right) - \bm{M} \odot \bm{x}_0 \right) \|^2\right] \\
        &= \argmin_{\bm{\theta}} \mathbb{E}_{t} \mathbb{E}_{\bm{x}_{\text{obs}},\left( \bm{M}_{\text{ctx}}, \bm{M}_{\text{qry}} \subseteq \bm{M} \right)} \left[\| \bm{M}_{\text{qry}} \odot  \bm{x}_{\bm{\theta}} \left( t, \text{ctx} \right) - \bm{M}_{\text{qry}} \odot \bm{x}_0 \|^2\right]
        \end{align}
    \end{subequations}
    \textbf{(i) Optimal solution:} When optimized, given $\forall \bm{M}_{\text{ctx}} \subseteq \bm{M}$, for any index $i$ such that $P((\bm{M}_{\text{qry}})_i = 1 \mid \bm{M}_{\text{ctx}}) > 0$ under the sampling distribution, we have
    \begin{equation}
        \bm{M}_{\text{qry}} \odot  \bm{x}_{\bm{\theta}} \left( t, \text{ctx} \right) = \mathbb{E} \left[ \bm{M}_{\text{qry}} \odot \bm{x}_0 \mid \text{ctx}, \bm{M}_{\text{qry}} \right] = \bm{M}_{\text{qry}} \odot \mathbb{E} \left[ \bm{x}_0 \mid \text{ctx} \right]
    \end{equation}
    Thus, $\bm{M}_{\text{qry}} \odot \left( \bm{x}_{\bm{\theta}} \left( t, \text{ctx} \right) - \mathbb{E} \left[ \bm{x}_0 \mid \text{ctx} \right] \right) = \bm{0}$ for any $\bm{M}_{\text{qry}}$ in the support of the sampling distribution given $\bm{M}_{\text{ctx}}$. 

    \textbf{Case 1:} When $P((\bm{M}_{\text{qry}})_i = 1 \mid \bm{M}_{\text{ctx}}) > 0$. By applying Lemma~\ref{lemma:optimal_g} with $\bm{Z} = \bm{M}_{\text{qry}}$, $\bm{Y} = \text{ctx}$, $\bm{X} = \bm{x}_0$, and $\bm{g}(\cdot) = \bm{x}_{\bm{\theta}}(t, \cdot)$, the optimal solution is:
    \begin{equation}
        \bm{x}_{\bm{\theta}}^*(t, \text{ctx}) = \frac{\mathbb{E}[\bm{M}_{\text{qry}} \odot \bm{M}_{\text{qry}} \odot \bm{x}_0 \mid \text{ctx}]}{\mathbb{E}[\bm{M}_{\text{qry}} \odot \bm{M}_{\text{qry}} \mid \text{ctx}]}
    \end{equation}
    Since $\bm{M}_{\text{qry}}$ is a binary mask where $(\bm{M}_{\text{qry}})_i \in \{0, 1\}$, we have $(\bm{M}_{\text{qry}})_i \odot (\bm{M}_{\text{qry}})_i = (\bm{M}_{\text{qry}})_i$. Thus:
    \begin{equation}
        \bm{x}_{\bm{\theta}}^*(t, \text{ctx}) = \frac{\mathbb{E}[\bm{M}_{\text{qry}} \odot \bm{x}_0 \mid \text{ctx}]}{\mathbb{E}[\bm{M}_{\text{qry}} \mid \text{ctx}]}
    \end{equation}
    For component $i$ where $P((\bm{M}_{\text{qry}})_i = 1 \mid \bm{M}_{\text{ctx}}) > 0$:
    \begin{subequations}
        \begin{align}
            (\bm{x}_{\bm{\theta}}^*(t, \text{ctx}))_i &= \frac{\mathbb{E}[(\bm{M}_{\text{qry}})_i (\bm{x}_0)_i \mid \text{ctx}]}{\mathbb{E}[(\bm{M}_{\text{qry}})_i \mid \text{ctx}]} \\
            &= \frac{\mathbb{E}[(\bm{M}_{\text{qry}})_i (\bm{x}_0)_i \mid \text{ctx}]}{P((\bm{M}_{\text{qry}})_i = 1 \mid \bm{M}_{\text{ctx}})} \\
            &= \frac{\mathbb{E}[(\bm{x}_0)_i \mathbf{1}_{(\bm{M}_{\text{qry}})_i = 1} \mid \text{ctx}]}{P((\bm{M}_{\text{qry}})_i = 1 \mid \bm{M}_{\text{ctx}})} \\
            &= \mathbb{E}[(\bm{x}_0)_i \mid \text{ctx}, (\bm{M}_{\text{qry}})_i = 1] \\
            &= \mathbb{E}[(\bm{x}_0)_i \mid \text{ctx}]
        \end{align}
    \end{subequations}
    where the last equality holds because $\bm{x}_0$ is independent of $\bm{M}_{\text{qry}}$ given $\text{ctx}$.
    
    \textbf{Case 2:} When $P((\bm{M}_{\text{qry}})_i = 1 \mid \bm{M}_{\text{ctx}}) = 0$. In this case, $(\bm{M}_{\text{qry}})_i = 0$ almost surely given $\bm{M}_{\text{ctx}}$. The contribution of index $i$ to the loss function is:
    \begin{subequations}
        \begin{align}
            &\quad \mathbb{E}[|(\bm{M}_{\text{qry}})_i (\bm{x}_{\bm{\theta}}(t, \text{ctx}))_i - (\bm{M}_{\text{qry}})_i (\bm{x}_0)_i|^2 \mid \text{ctx}] \nonumber \\
            &= \mathbb{E}[|0 \cdot (\bm{x}_{\bm{\theta}}(t, \text{ctx}))_i - 0 \cdot (\bm{x}_0)_i|^2 \mid \text{ctx}] \\
            &= 0
        \end{align}
    \end{subequations}
    Therefore, $(\bm{x}_{\bm{\theta}}(t, \text{ctx}))_i$ does not affect the loss function and can take any arbitrary value.
    
    Thus, we have
    \begin{equation}
        \left( \bm{x}_{\bm{\theta}} \left( t, \text{ctx} \right) \right)_i = \begin{cases}
            \mathbb{E} \left[ \left( \bm{x}_0 \right)_i \mid \text{ctx} \right], &P( \left(\bm{M}_{\text{qry}}\right)_i = 1 \mid \bm{M}_{\text{ctx}}) > 0 \\
            \text{an arbitrary value}, &P( \left(\bm{M}_{\text{qry}}\right)_i = 1 \mid \bm{M}_{\text{ctx}}) = 0
        \end{cases}
    \end{equation}
    If the union of all possible query mask $\bm{M}_{\text{qry}}$ supports covers all spatial dimensions, we have $\bigcup_{\bm{M}_{\text{qry}} \text{ possible}} \text{supp}(\bm{M}_{\text{qry}}) = \{ 1, \ldots, \dim(\bm{x}_0) \}$. Thus, $\forall i, P( \left(\bm{M}_{\text{qry}}\right)_i = 1) > 0$ and
    \begin{equation}
        \bm{x}_{\bm{\theta}} \left( t, \text{ctx} \right) = \mathbb{E} \left[ \bm{x}_0 \mid \text{ctx} \right]
    \end{equation}
    \textbf{(ii) Gradient magnitude scaling:} The gradient of the loss with respect to the $i$-th output component is:
    \begin{equation}
    \frac{\partial \mathcal{L}}{\partial (\bm{x}_{\bm{\theta}})_i} = 2(\bm{M}_{\text{qry}})_i \cdot ((\bm{x}_{\bm{\theta}})_i - (\bm{x}_{\text{obs}})_i)
    \end{equation}
    Taking expectation over the sampling distribution:
    \begin{subequations}
        \begin{align}
        \mathbb{E}\left[\left(\frac{\partial \mathcal{L}}{\partial (\bm{x}_{\bm{\theta}})_i}\right)^2\right] &= \mathbb{E}\left[ 4(\bm{M}_{\text{qry}})_i^2 \cdot ((\bm{x}_{\bm{\theta}})_i - (\bm{x}_{\text{obs}})_i)^2\right] \\
        &= 4 \mathbb{E}\left[(\bm{M}_{\text{qry}})_i \cdot ((\bm{x}_{\bm{\theta}})_i - (\bm{x}_{\text{obs}})_i)^2 \right] \\
        &= 4p_i \mathbb{E}\left[ ((\bm{x}_{\bm{\theta}})_i - (\bm{x}_{\text{obs}})_i)^2 \Big| (\bm{M}_{\text{qry}})_i = 1\right]
        \end{align}
    \end{subequations}
    Let $C_i := \mathbb{E}\left[ ((\bm{x}_{\bm{\theta}})_i - (\bm{x}_{\text{obs}})_i)^2 \Big| (\bm{M}_{\text{qry}})_i = 1\right]$. Then:
    \begin{equation}
    \mathbb{E}\left[\left(\frac{\partial \mathcal{L}}{\partial (\bm{x}_{\bm{\theta}})_i}\right)^2\right] = 4p_i C_i
    \end{equation}
    establishing linear scaling with $p_i$.
    
    \textbf{(iii) Parameter update frequency:} At each training step, the gradient contribution from dimension $i$ is:
    \begin{equation}
    \left( \frac{\partial \mathcal{L}}{\partial \bm{\theta}} \right)_i = \frac{\partial \mathcal{L}}{\partial (\bm{x}_{\bm{\theta}})_i} \frac{\partial (\bm{x}_{\bm{\theta}})_i}{\partial \bm{\theta}}
    \end{equation}
    
    This contribution is non-zero if and only if $(\bm{M}_{\text{qry}})_i = 1$, which occurs with probability $p_i$. Therefore:
    \begin{equation}
    P(\text{dimension } i \text{ contributes to parameter update}) = P((\bm{M}_{\text{qry}})_i = 1 \mid \bm{M}_{\text{ctx}}) = p_i
    \end{equation}
\end{proof}

\subsection{Partially observed mask convergence theorem} \label{app: Proofs of partially observed mask convergence theorem}
\pomct*
\begin{proof}
    Define random variables $\bm{\mu} = \mathbb{E} \left[ \bm{x}_0 \mid \text{obs} \right]$ and $\bm{\mu}_{\text{c}} = \mathbb{E} \left[ \bm{x}_0 \mid \text{ctx} \right]$. Our goal is to compute $\mathbb{E} \left[ \|\bm{\mu} - \bm{\mu}_\text{c}\|^2 \right]$.
    
    Applying the Law of Total Variance, we have
    \begin{subequations}
        \begin{align}
            \Var \left[ \bm{x}_0 \mid \text{ctx} \right] &= \mathbb{E} \left[ \Var \left[ \bm{x}_0 \mid \text{obs} \right] \mid \text{ctx} \right] + \Var \left[ \mathbb{E} \left[ \bm{x}_0 \mid \text{obs} \right] \mid \text{ctx} \right] \\
            &= \mathbb{E} \left[ \Var \left[ \bm{x}_0 \mid \text{obs} \right] \mid \text{ctx} \right] + \Var \left[ \bm{\mu} \mid \text{ctx} \right]
        \end{align}
    \end{subequations}
    Rearranging:
    \begin{equation}
    \Var \left[ \bm{\mu} \mid \text{ctx} \right] = \Var \left[ \bm{x}_0 \mid \text{ctx} \right] - \mathbb{E} \left[ \Var \left[ \bm{x}_0 \mid \text{obs} \right] \mid \text{ctx} \right] \label{eq: law of total variance}
    \end{equation}
    Then:
    \begin{subequations}
        \begin{alignat}{2}
            & \qquad \mathbb{E} \left[ \|\bm{\mu} - \bm{\mu}_\text{c}\|^2 \right] \nonumber \\
            &= \mathbb{E} \left[ \mathbb{E} \left[ \|\bm{\mu} - \bm{\mu}_\text{c} \|^2 \mid \text{ctx} \right] \right] & \text{Law of total expectation} \\
            &= \mathbb{E} \left[ \mathbb{E} \left[ \|\bm{\mu} - \mathbb{E} \left[ \bm{x}_0 \mid \text{ctx} \right] \|^2 \mid \text{ctx} \right] \right] & \text{By definition of } \bm{\mu}_\text{c} \\
            &= \mathbb{E} \left[ \mathbb{E} \left[ \|\bm{\mu} - \mathbb{E} \left[ \mathbb{E} \left[ \bm{x}_0 \mid \text{obs} \right] \mid \text{ctx} \right] \|^2 \mid \text{ctx} \right] \right] & \text{Tower rule}\\
            &= \mathbb{E} \left[ \mathbb{E} \left[ \|\bm{\mu} - \mathbb{E} \left[ \bm{\mu} \mid \text{ctx} \right] \|^2 \mid \text{ctx} \right] \right] & \text{By definition of } \bm{\mu} \\
            &= \mathbb{E} \left[ \Var \left[ \bm{\mu} \mid \text{ctx} \right] \right] & \text{Definition of conditional variance} \\
            &= \mathbb{E} \left[ \Var \left[ \bm{x}_0 \mid \text{ctx} \right] - \mathbb{E} \left[ \Var \left[ \bm{x}_0 \mid \text{obs} \right] \mid \text{ctx} \right] \right] & \text{\eqref{eq: law of total variance}} \\
            &= \mathbb{E} \left[ \Var \left[ \bm{x}_0 \mid \text{ctx} \right] \right] - \mathbb{E} \left[ \Var \left[ \bm{x}_0 \mid \text{obs} \right] \right] & \text{Linearity of expectation}
        \end{alignat}
    \end{subequations}

Further define $\bm{\mu}_k = \bm{x}_{\bm{\theta}}\left( t, \text{ctx}^{(k)} \right) = \mathbb{E} \left[ \bm{x}_0 \mid \text{ctx}^{(k)} \right] + \bm{b}\left( \text{ctx}^{(k)} \right) + \bm{\epsilon}_{\text{bias}}\left( \text{ctx}^{(k)} \right)$, where $\bm{b}$ is the systematic bias and $\bm{\epsilon}_{\text{bias}}$ is the random error with $\mathbb{E}\left[ \bm{\epsilon}_{\text{bias}}\right] = \bm{0}$. The ensemble average is:
\begin{equation}
\hat{\bm{\mu}}_K = \frac{1}{K} \sum_{k=1}^K \bm{\mu}_k = \frac{1}{K} \sum_{k=1}^K \mathbb{E}[\bm{x}_0 \mid \text{ctx}^{(k)}] + \frac{1}{K} \sum_{k=1}^K \bm{b}(\text{ctx}^{(k)}) + \frac{1}{K} \sum_{k=1}^K \bm{\epsilon}(\text{ctx}^{(k)})
\end{equation}
Computing the expectation:
\begin{subequations}
\begin{align}
\mathbb{E}[\hat{\bm{\mu}}_K] &= \mathbb{E}\left[\frac{1}{K} \sum_{k=1}^K \bm{\mu}_k\right] & \\
&= \frac{1}{K} \sum_{k=1}^K \mathbb{E}[\mathbb{E}[\bm{x}_0 \mid \text{ctx}^{(k)}] + \bm{b}(\text{ctx}^{(k)}) + \bm{\epsilon}(\text{ctx}^{(k)})] \\
&= \mathbb{E}[\mathbb{E}[\bm{x}_0 \mid \text{ctx}]] + \mathbb{E}[\bm{b}(\text{ctx})] + \mathbb{E}[\bm{\epsilon}(\text{ctx})] \\
&= \mathbb{E}[\mathbb{E}[\bm{x}_0 \mid \text{ctx}]] + \mathbb{E}[\bm{b}(\text{ctx})] 
\end{align}
\end{subequations}
The bias of the ensemble estimator is:
\begin{equation}
\operatorname{Bias}(\hat{\bm{\mu}}_K) = \mathbb{E}[\hat{\bm{\mu}}_K] - \mathbb{E}[\bm{x}_0 \mid \text{obs}] = \mathbb{E}[\mathbb{E}[\bm{x}_0 \mid \text{ctx}]] - \mathbb{E}[\bm{x}_0 \mid \text{obs}] + \mathbb{E}[\bm{b}(\text{ctx})]
\end{equation}
For the variance, decompose each component:
\begin{subequations}
\begin{align}
\bm{\mu}_k - \mathbb{E}[\bm{\mu}_k] &= \mathbb{E}[\bm{x}_0 \mid \text{ctx}^{(k)}] + \bm{b}(\text{ctx}^{(k)}) + \bm{\epsilon}(\text{ctx}^{(k)}) - \mathbb{E}[\mathbb{E}[\bm{x}_0 \mid \text{ctx}]] - \mathbb{E}[\bm{b}(\text{ctx})] \\
&= \underbrace{(\mathbb{E}[\bm{x}_0 \mid \text{ctx}^{(k)}] - \mathbb{E}[\mathbb{E}[\bm{x}_0 \mid \text{ctx}]])}_{\text{data variation}} + \underbrace{(\bm{b}(\text{ctx}^{(k)}) - \mathbb{E}[\bm{b}(\text{ctx})])}_{\text{bias variation}} + \underbrace{\bm{\epsilon}(\text{ctx}^{(k)})}_{\text{random error}}
\end{align}
\end{subequations}
The variance of the ensemble average:
\begin{subequations}
\begin{align}
\Var(\hat{\bm{\mu}}_K) &= \mathbb{E}\left[\left\|\hat{\bm{\mu}}_K - \mathbb{E}[\hat{\bm{\mu}}_K]\right\|^2\right] \\
&= \mathbb{E}\left[\left\|\frac{1}{K} \sum_{k=1}^K (\bm{\mu}_k - \mathbb{E}[\bm{\mu}_k])\right\|^2\right] \\
&= \frac{1}{K^2} \mathbb{E}\left[\sum_{k=1}^K \|\bm{\mu}_k - \mathbb{E}[\bm{\mu}_k]\|^2 + 2\sum_{i<j} \langle \bm{\mu}_i - \mathbb{E}[\bm{\mu}_i], \bm{\mu}_j - \mathbb{E}[\bm{\mu}_j] \rangle\right]
\end{align}
\end{subequations}
Since $\text{ctx}^{(1)}, \ldots, \text{ctx}^{(K)}$ are conditionally independent given $\bm{x}_{\text{obs}, t}$ and $\bm{M}$, the data variations and bias variations are independent across different $k$. Additionally, under the assumption that $\bm{\epsilon}(\text{ctx}^{(i)})$ and $\bm{\epsilon}(\text{ctx}^{(j)})$ are independent for $i \neq j$:
\begin{equation}
\mathbb{E}[\langle \bm{\mu}_i - \mathbb{E}[\bm{\mu}_i], \bm{\mu}_j - \mathbb{E}[\bm{\mu}_j] \rangle] = 0 \quad \forall i \neq j
\end{equation}
Therefore:
\begin{subequations}
\begin{align}
\Var(\hat{\bm{\mu}}_K) &= \frac{1}{K^2} \sum_{k=1}^K \mathbb{E}[\|\bm{\mu}_k - \mathbb{E}[\bm{\mu}_k]\|^2] \\
&= \frac{1}{K} \mathbb{E}\left[\left\|(\mathbb{E}[\bm{x}_0 \mid \text{ctx}] - \mathbb{E}[\mathbb{E}[\bm{x}_0 \mid \text{ctx}]]) + (\bm{b}(\text{ctx}) - \mathbb{E}[\bm{b}(\text{ctx})]) + \bm{\epsilon}(\text{ctx})\right\|^2\right] \\
&= \frac{1}{K} \left(\Var[\mathbb{E}[\bm{x}_0 \mid \text{ctx}]] + \Var[\bm{b}(\text{ctx})] + \Var[\bm{\epsilon}]\right)
\end{align}
\end{subequations}
Here, we used the independence between the three components to separate the variances. Combining bias and variance using the bias-variance decomposition:
\begin{subequations}
\begin{align}
& \qquad \mathbb{E}\left[\left\|\hat{\bm{\mu}}_K - \mathbb{E}[\bm{x}_0 \mid \text{obs}]\right\|^2\right] \nonumber \\
&= \|\operatorname{Bias}(\hat{\bm{\mu}}_K)\|^2 + \Var(\hat{\bm{\mu}}_K) \\
&= \left\|\mathbb{E}[\mathbb{E}[\bm{x}_0 \mid \text{ctx}]] - \mathbb{E}[\bm{x}_0 \mid \text{obs}] + \mathbb{E}[\bm{b}(\text{ctx})]\right\|^2 \nonumber \\
&\qquad + \frac{1}{K} \left(\Var[\mathbb{E}[\bm{x}_0 \mid \text{ctx}]] + \Var[\bm{b}(\text{ctx})] + \Var[\bm{\epsilon}]\right)
\end{align}
\end{subequations}
Taking the limit as $K \to \infty$:
\begin{equation}
\lim_{K \to \infty} \mathbb{E}\left[\left\|\hat{\bm{\mu}}_K - \mathbb{E}[\bm{x}_0 \mid \text{obs}]\right\|^2\right] = \left\|\mathbb{E}[\mathbb{E}[\bm{x}_0 \mid \text{ctx}]] - \mathbb{E}[\bm{x}_0 \mid \text{obs}] + \mathbb{E}[\bm{b}(\text{ctx})]\right\|^2
\end{equation}
This establishes that:
\begin{itemize}
    \item The average bias $\mathbb{E}[\bm{b}(\text{ctx})]$ across all contexts is not reduced by ensemble averaging.
    \item The variance of the context-dependent bias $\Var[\bm{b}(\text{ctx})]$ is reduced by a factor of $1/K$.
    \item Both data variance and random error variance are also reduced by $1/K$.
    \item The asymptotic error includes both the data bias and the model's average bias.
\end{itemize}

\end{proof}

\section{Discussion on inapplicability of baselines} \label{app: inapp baselines}
\subsection{Inapplicability of KnewImp} \label{app: inapp knewimp}
We discuss the inapplicability of the primary method proposed in \cite{chen2024rethinking}, Kernelized Negative Entropy-regularized Wasserstein gradient flow Imputation (KnewImp), as a baseline for our high-dimensional PDE dynamics task.

\paragraph{Principle.}
KnewImp is an approach explicitly designed for the imputation of numerical tabular datasets. The method reformulates the imputation problem within the framework of Wasserstein Gradient Flow (WGF). Its core contribution is to derive a closed-form, implementable imputation procedure by optimizing a novel Negative Entropy-Regularized (NER) cost functional within a Reproducing Kernel Hilbert Space (RKHS).

The final imputation procedure involves simulating an ODE $\frac{\mathrm{d}\bm{X}^{(\text{miss})}}{\mathrm{d}\tau} = u(\bm{X}^{(\text{joint})}, \tau)$, where the velocity field $u(\bm{X}^{(\text{joint})}, \tau)$ is defined using a kernel function:
\begin{equation}
u(\bm{X}^{(\text{joint})},\tau) = \mathbb{E}_{r(\tilde{\bm{X}}^{(\text{joint})},\tau)}\left\{
    \begin{matrix}
        -\lambda\nabla_{\tilde{\bm{X}}^{(\text{miss})}}\mathcal{K}(\bm{X}^{(\text{joint})},\tilde{\bm{X}}^{(\text{joint})}) \\
        + [\nabla_{\tilde{\bm{X}}^{(\text{miss})}}\log\hat{p}(\tilde{\bm{X}}^{(\text{joint})})]^{\top}\mathcal{K}(\bm{X}^{(\text{joint})},\tilde{\bm{X}}^{(\text{joint})})
    \end{matrix}
\right\}
\end{equation}
This velocity field depends on two components: (1) a kernel function $\mathcal{K}$, which the authors specify is a Radial Basis Function (RBF) kernel, $\mathcal{K}(\bm{X},\tilde{\bm{X}}) := \exp(-\frac{||\bm{X}-\tilde{\bm{X}}||^{2}}{2h^{2}})$, and (2) an estimated score function $\nabla_{\bm{X}^{(\text{miss})}}\log\hat{p}(\bm{X}^{(\text{joint})})$, which is trained separately using DSM~\citep{song2020score}.

\paragraph{Reason for Unsuitability.}
This method is mathematically and computationally infeasible for our task for two primary reasons:
\begin{itemize}
    \item \textbf{Mathematical Infeasibility (Curse of Dimensionality):} The entire derivation of the implementable velocity field hinges on the use of an RBF kernel. This kernel's computation is based on the squared Euclidean distance $||\bm{X}-\tilde{\bm{X}}||^{2}$ between data points. Our PDE dynamics data has a dimensionality of $D = 100 \times 64 \times 64 = 409,600$. In such an extremely high-dimensional space, the concept of Euclidean distance becomes meaningless; all data points tend to become equidistant from one another. This ``curse of dimensionality'' would cause the RBF kernel to lose all discriminative power, rendering the velocity field calculation mathematically unstable and uninformative. The KnewImp method is fundamentally structured for low-dimensional tabular data, where distance metrics remain meaningful.

    \item \textbf{Prohibitive Computational Cost (VRAM and Time):} The method's two-stage process scales intractably with dimension $D$.
    \begin{itemize}
        \item The ``Estimate'' phase requires training a score network $\nabla_{\bm{X}^{(\text{joint})}}\log\hat{p}(\bm{X}^{(\text{joint})})$ via DSM. Training any neural architecture (U-Net, Transformer, etc.) to model the score of a $D=409,600$ dimensional vector would require prohibitive amounts of VRAM just to store the activations and gradients for a single batch.
        \item The ``Impute'' phase requires simulating an ODE. Each step of this simulation necessitates a full computation of the velocity field. This computation involves a Monte Carlo estimation over the dataset, with each sample calculation requiring an expensive kernel evaluation in $D$-dimensional space.
    \end{itemize}
    This combination of a memory-intensive score network and a computationally-intensive, kernel-based ODE simulation makes the KnewImp approach computationally infeasible for our high-dimensional spatiotemporal task.
\end{itemize}

\subsection{Inapplicability of Score Matching with Missing Data} \label{app: inapp score matching missing data}
We discuss the inapplicability of the two primary methods proposed in \cite{givens2025score} as baselines for our high-dimensional PDE generation task.

\subsubsection{Method 1: marginal importance weighting (Marg-IW)}
\paragraph{Principle.} This approach (Algorithm 1) adapts the score matching objective to work with missing data by defining a marginal Fisher divergence. It then estimates the intractable marginal scores by applying importance weighting (IW) to approximate the high-dimensional integral over the missing coordinates using Monte Carlo samples.

\paragraph{Reason for Unsuitability.} This method is unsuitable as it is not designed for high-dimensional data. The authors of the original paper explicitly state that the IW approach ``will struggle in higher dimensional scenarios'' and primarily demonstrate its efficacy in ``lower dimensional settings''. Given the dimensionality of our task, the variance and bias from the IW estimator would render the optimization intractable.

\subsubsection{Method 2: marginal variational score matching (Marg-Var)}
\paragraph{Principle.} This more complex approach (Algorithm 2) avoids direct IW estimation by first taking the gradient of the loss objective (a ``gradient-first'' approach). It then introduces a secondary variational neural network ($p_{\phi}^{\prime}$) to approximate the conditional expectations and covariances over the missing data. The training involves a nested optimization, where the main score model ($\bm{s}_{\bm{\theta}}$) and the variational ``helper'' model ($p_{\phi}^{\prime}$) are updated iteratively.

\paragraph{Reason for Unsuitability.} This method is computationally infeasible for our task due to prohibitive VRAM and time costs. Our backbone model (analogous to $\bm{s}_{\bm{\theta}}$) already consumes $\sim$70GB of VRAM for a single forward-backward pass with a batch size of 8 on an 80GB A800 GPU. The Marg-Var algorithm would introduce, at a minimum:
\begin{itemize}
    \item \textbf{Dual Model Memory Cost:} The algorithm requires maintaining two large neural networks—the primary score model $\bm{s}_{\bm{\theta}}$ and the variational helper $p_{\phi}^{\prime}$, simultaneously in VRAM. This alone would exceed the 80GB capacity.
    \item \textbf{Peak Gradient Memory Cost:} The gradient calculation for $\bm{s}_{\bm{\theta}}$ (Eq.~10/11) is exceptionally complex. It requires computing expectations and covariances that involve components from \textit{both} models, necessitating that the computational graphs of both networks are active for the joint gradient estimation. This leads to a peak VRAM usage far exceeding the simple sum of the two models.
    \item \textbf{Multiplied Training Time Cost:} The algorithm employs a nested optimization loop. For \textit{each} gradient step of the main model $\bm{s}_{\bm{\theta}}$, the helper model $p_{\phi}^{\prime}$ must be trained for $L$ steps (e.g., $L=10$ in the paper's experiments). This $L$-fold multiplication of an already lengthy training step makes the method impractical for large-scale generative modeling.
\end{itemize}

\subsubsection{Dimensionality Mismatch}
The core issue is the extreme discrepancy in data dimensionality. The experiments in \cite{givens2025score} are conducted on tasks with dimensions of 10 (Gaussian), up to 50 (Non-Gaussian), 100 (S\&P 100), and 106 (Yeast). Our PDE dataset has a dimensionality of $100 \times 64 \times 64 = 409,600$. This is more than three orders of magnitude larger than the highest-dimensional task (106-dim) on which the Marg-Var method was validated. The computational and statistical challenges of this scale are not addressed by these methods.

\section{Supplementary Experiments}

\subsection{Dataset settings} \label{app: dataset settings}

\paragraph{Shallow Water and Advection datasets.} We evaluate our approach on two fundamental geophysical PDE systems with distinct characteristics:
\begin{subequations}
   \begin{align}
       \text{Shallow Water:} \quad &\partial_t u = fv - g\partial_x h, \quad \partial_t v = -fu - g\partial_y h, \quad \partial_t h = -H(\partial_x u + \partial_y v) \\
       \text{Advection:} \quad &\partial_t u(t, x) + \beta \partial_x u(t, x) = 0
   \end{align}
\end{subequations}
where for the shallow water system, $u$ and $v$ represent velocity components, $h$ denotes height field, $f$ is the Coriolis parameter, $g$ is gravitational acceleration, and $H$ is mean depth. For the advection equation, $u$ represents a scalar field being transported and $\beta$ is the advection velocity. We generate synthetic solutions by randomly sampling physical parameters, along with randomized initial conditions to ensure dataset diversity.
Each dataset contains 5k training, 1k validation, and 1k test samples with $32 \times 32$ spatial resolution and 50 temporal frames. We generate synthetic solutions by randomly sampling physical parameters and initial conditions. 
For evaluation, we employ complementary metrics to comprehensively assess reconstruction quality across different physical aspects. For the Shallow Water dataset, we evaluate PDE feasibility loss, which measures how well the reconstructed solutions satisfy the underlying shallow water equations by computing the residual error when the reconstructed fields are substituted into the governing PDEs. For the Advection dataset, we reconstruct complete initial conditions from partial observations using our trained model, then propagate these reconstructions forward using traditional PDE solvers (finite difference schemes) to generate temporal sequences. Performance is measured by computing MSE between our reconstructed solutions and ground truth sequences over all 50 time steps. This dual evaluation strategy demonstrates our method's superiority across multiple physically meaningful criteria.

\paragraph{Navier-Stokes dataset.}
We evaluate our approach on the incompressible Navier-Stokes equations for isotropic turbulence. The governing equations are:
\begin{subequations}
   \begin{align}
    \partial_t \bm{u} + (\bm{u} \cdot \nabla)\bm{u} &= -\nabla p + \nu \nabla^2 \bm{u} + \bm{f}, \\
    \nabla \cdot \bm{u} &= 0
    \end{align}
\end{subequations}
where $\bm{u} = (u, v)$ represents the velocity field, $p$ is pressure, $\nu$ is kinematic viscosity, and $\bm{f}$ is external forcing. The data are generated using either pseudo-spectral solvers with 4th-order Runge-Kutta or higher-order Finite Volume IMEX methods. Initial conditions with varying peak wavenumbers eventually evolve to exhibit the Kolmogorov energy cascade. 
The dataset contains 1152 samples with a spatial resolution of $64 \times 64$ and temporal sequences of 100 frames. 
For evaluation, the model generates complete field reconstructions. Performance is measured by computing MSE between reconstructed fields and ground truth across all spatial locations and time steps.

\paragraph{ERA5 dataset.}
We evaluate our approach on the ERA5 reanalysis dataset, which provides comprehensive atmospheric and surface meteorological variables. ERA5 represents the fifth generation atmospheric reanalysis from the European Centre for Medium-Range Weather Forecasts (ECMWF), combining model data with observations through data assimilation to produce a globally complete and consistent dataset. We utilize one year of hourly data sampled at a spatial resolution of $103 \times 120$ grid points in latitude-longitude coordinates, creating temporal sequences of dimension $(365 \times 24, 103, 120)$. The data are segmented into 3-hour windows, and we only select 20\% / 10\% / 1\% observed pixels for each sample. Our experiments incorporate nine essential atmospheric variables. These variables capture both surface conditions and atmospheric dynamics critical for weather prediction tasks. Performance is evaluated by computing reconstruction errors across all spatial locations and temporal frames.

\subsection{Mask selection implementation}
We now present concrete algorithmic implementations of our strategic context-query partitioning framework for both pixel-level and block-wise missing patterns. Algorithm~\ref{alg:pixel_partition} details the pixel-level procedure, where each observed location in the observation mask $\bm{M}$ is independently selected as context or query through Bernoulli sampling with ratios $r_{\text{ctx}}$ and $r_{\text{qry}}$, ensuring that $\bm{M}_{\text{ctx}} \subseteq \bm{M}$ and $\bm{M}_{\text{qry}} \subseteq \bm{M}$. For the more realistic block-wise scenario depicted in Algorithm~\ref{alg:block_partition}, we operate on spatial blocks. This block-based sampling preserves spatial continuity while guaranteeing that every observable dimension maintains a positive probability of being selected as query, directly implementing the principle from Theorem~\ref{thm: osucmwqi}. Both procedures maintain the crucial property that $P((\bm{M}_{\text{qry}})_i=1 \mid \bm{M}_{\text{ctx}}) > 0$ for all unobserved dimensions, thereby enabling effective learning from incomplete training data.

\begin{algorithm}[H]
\caption{Pixel-Level Context-Query Partitioning}
\label{alg:pixel_partition}
\begin{algorithmic}[1]
\REQUIRE Observation mask $\bm{M} \in \{0,1\}^d$, context ratio $r_{\text{ctx}} \in (0,1)$, query ratio $r_{\text{qry}} \in (0,1)$
\ENSURE Context mask $\bm{M}_{\text{ctx}}$, query mask $\bm{M}_{\text{qry}}$
\STATE Initialize $\bm{M}_{\text{ctx}} \gets \mathbf{0}$, $\bm{M}_{\text{qry}} \gets \mathbf{0}$
\FOR{each spatial index $i \in \{1,\dots,d\}$}
    \IF{$\bm{M}_i = 1$} 
        \STATE Sample $u \sim \mathrm{Uniform}(0,1)$
        \IF{$u < r_{\text{ctx}}$}
            \STATE $(\bm{M}_{\text{ctx}})_i \gets 1$ 
        \ENDIF
        \STATE Sample $v \sim \mathrm{Uniform}(0,1)$
        \IF{$v < r_{\text{qry}}$}
            \STATE $(\bm{M}_{\text{qry}})_i \gets 1$
        \ENDIF
    \ENDIF
\ENDFOR
\RETURN $\bm{M}_{\text{ctx}}, \bm{M}_{\text{qry}}$
\end{algorithmic}
\end{algorithm}

\begin{algorithm}[H]
\caption{Block-Wise Context-Query Partitioning (Integer-based)}
\label{alg:block_partition}
\begin{algorithmic}
\REQUIRE Observation mask grid $M_{\text{grid}} \in \{0,1\}^{3 \times 3}$ (e.g., 5 observed blocks out of 9), integer $k_{\text{ctx}}$ (number of context blocks, e.g., 4), integer $k_{\text{qry}}$ (number of query blocks, e.g., 1)
\ENSURE Context mask $M_{\text{ctx}}$, query mask $M_{\text{qry}}$
\STATE $\mathcal{B}_{\text{obs}} \gets \{b \mid (M_{\text{grid}})_b = 1\}$ 
\STATE Sample $\mathcal{B}_{\text{ctx}} \subseteq \mathcal{B}_{\text{obs}}$ uniformly with $|\mathcal{B}_{\text{ctx}}| = k_{\text{ctx}}$ 
\STATE Sample $\mathcal{B}_{\text{qry}} \subseteq \mathcal{B}_{\text{obs}}$ uniformly with $|\mathcal{B}_{\text{qry}}| = k_{\text{qry}}$ 
\STATE $M_{\text{ctx}} \gets \text{BlocksToMask}(\mathcal{B}_{\text{ctx}})$
\STATE $M_{\text{qry}} \gets \text{BlocksToMask}(\mathcal{B}_{\text{qry}})$ 
\RETURN $M_{\text{ctx}}, M_{\text{qry}}$
\end{algorithmic}
\end{algorithm}

\subsection{Analysis of MissDiff baseline and data matching adaptation} \label{app: missdiff analysis}

A notable aspect of our experimental setup is the adaptation of the MissDiff baseline~\citep{ouyang2023missdiff} from its original noise matching objective to a data matching framework. This modification was empirically necessary, as the original objective proved ineffective in our experimental context. This adaptation facilitates a meaningful and fair comparison by ensuring the baseline can operate effectively on our challenging datasets.

\paragraph{Initial failure of the noise matching objective.}
An initial evaluation of MissDiff with its original noise matching objective showed that the training loss failed to decrease at all in our experimental setting. This failure is attributed to a fundamental difference between the data domains: the tabular data used in the original MissDiff paper and the spatiotemporal data used in our work.
\begin{itemize}
    \item \textbf{Tabular data (original MissDiff domain):} The MissDiff paper focused on tabular data with relatively moderate missing ratios. In this context, each entry often represents an independent feature. Missing one entry means completely losing information about that specific feature.

    \item \textbf{Spatiotemporal data (our domain):} Our PDE datasets involve spatiotemporal fields (e.g., images/videos) characterized by much higher missing data ratios (down to 1\% observed data). Critically, in these physical fields, a missing pixel does not represent the loss of an independent feature. Due to the inherent spatial smoothness and continuity of physical systems, neighboring pixels carry highly correlated information.
\end{itemize}

\paragraph{Implications for diffusion objectives.}
This fundamental data difference has profound implications for the suitability of noise matching versus data matching:
\begin{itemize}
    \item \textbf{Data matching (our adaptation):} This objective (predicting $\bm{x}_0$) can effectively leverage the spatial smoothness priors. Even from sparse observations, the model can learn to interpolate and predict reasonable values for missing regions by exploiting the correlated context.

    \item \textbf{Noise matching (original MissDiff):} This objective (predicting $\bm{\epsilon}$) requires the model to predict fine-grained noise patterns. This task demands much denser observations to capture the necessary local structure. At extreme sparsity (e.g., 1\% observed), the noise prediction task becomes ill-posed. There is simply insufficient local context to distinguish signal from noise, making the learning target ambiguous.
\end{itemize}
Our empirical findings show that in our setting, the original noise matching objective led to MissDiff completely failing to learn (e.g., outputting all zeros). The adaptation to a data matching framework allows MissDiff to produce meaningful predictions by leveraging the smoothness priors inherent in physical systems. Therefore, this modification was essential for a valid and fair comparison. Without this adaptation, MissDiff would be unable to generate meaningful results in our experimental scenarios, rendering the comparison ineffective.

\subsection{Ablation study} \label{app: ablation}
We conduct ablation studies to validate the effectiveness of key components in our proposed method.
\paragraph{Test-time gap introduced by replacing $\bm{M}_{\text{ctx}}$ with $\bm{M}$.} Our sampling procedure requires multiple context masks $\bm{M}_{\text{ctx}}$ to estimate $\mathbb{E} \left[ \bm{x}_0 \mid \bm{x}_{\text{obs}, t}, \bm{M} \right] \approx \frac{1}{K} \sum_{k=1}^K \bm{x}_{\bm{\theta}} \left( t, \bm{M}_{\text{ctx}}^{(k)} \odot \bm{x}_{t}, \bm{M}_{\text{ctx}}^{(k)} \right)$. This ablation study compares our method against directly computing $\mathbb{E} \left[ \bm{x}_0 \mid \bm{x}_{\text{obs}, t}, \bm{M} \right]$ using $\bm{x}_{\bm{\theta}} \left( t, \bm{M} \odot \bm{x}_{t}, \bm{M}\right)$. The direct approach creates a distributional mismatch: during training, the model's input mask follows the distribution of $\bm{M}_{\text{ctx}}$, but during sampling, the input becomes $\bm{M}$. This mismatch degrades model performance. Tab.~\ref{tab: test time gap} presents experimental results comparing both methods.
\begin{table}[ht]
\centering
\caption{Performance comparison of two approaches: (1) imputation with multiple time sampling of $\bm{M}_{\text{ctx}}$ followed by ensemble prediction (Theorem~\ref{thm: pomct}), versus (2) directly using $\bm{M}$ as $\bm{M}_{\text{ctx}}$, which creates a distributional mismatch between training and testing inputs.}
\begin{tabular}{ccccc}
\toprule
\multicolumn{1}{c}{\multirow{2}{*}{\textbf{Method}}} & \multicolumn{2}{c}{\textbf{Shallow Water}} & \multicolumn{2}{c}{\textbf{Advection}} \\
\cmidrule(lr){2-3} \cmidrule(lr){4-5}
& 80\% & 30\% & 80\% & 30\% \\
\midrule
Ours ($\bm{M}$)              & 2.3983 $\pm$ {\scriptsize 0.7880} & 2.6717 $\pm$ {\scriptsize 1.4731} & 0.1320 $\pm$ {\scriptsize 0.0155} & 0.1655 $\pm$ {\scriptsize 0.0537} \\
Ours ($\bm{M}_{\text{ctx}}$) & \textbf{0.1878 $\pm$ {\scriptsize 0.0054}} & \textbf{0.7379 $\pm$ {\scriptsize 0.1101}} & \textbf{0.1035 $\pm$ {\scriptsize 0.0008}} & \textbf{0.1189 $\pm$ {\scriptsize 0.0069}} \\ \bottomrule
\end{tabular}
\label{tab: test time gap}
\end{table}

\paragraph{Backbone architecture.} To demonstrate the generalizability of our method across different architectures, we evaluate both our proposed approach and baseline methods using two distinct backbones: Karras UNet~\citep{karras2024analyzing} and Fourier Neural Operator (FNO)~\citep{li2020fourier}. For the FNO implementation, we concatenate diffusion time embeddings along the channel dimension. Results are presented in Tab.~\ref{tab: ablation backbone}. Our findings show that U-Net and FNO achieve comparable performance on the Shallow Water and Advection datasets, while U-Net outperforms FNO on the Navier-Stokes and ERA5 datasets, where FNO fails to generate reasonable samples. 
\begin{table}[ht]
\centering
\caption{Performance comparison across backbone architectures. Results for our method and baselines using Karras UNet~\citep{karras2024analyzing} and FNO~\citep{li2020fourier} backbones across two PDE datasets.}
\begin{tabular}{cccccc}
\toprule
\multirow{2}{*}{\textbf{Method}} & \multirow{2}{*}{\textbf{Backbone}} & \multicolumn{2}{c}{\textbf{Shallow Water}} & \multicolumn{2}{c}{\textbf{Advection}} \\
\cmidrule(lr){3-4} \cmidrule(lr){5-6}
& & 80\% & 30\% & 80\% & 30\% \\
\midrule
\multirow{2}{*}{MissDiff} & UNet & 0.3963 $\pm$ {\scriptsize 0.0617} & 1.2570 $\pm$ {\scriptsize 0.2146} & \textbf{0.1030 $\pm$ {\scriptsize 0.0004}} & 0.1197 $\pm$ {\scriptsize 0.0096} \\
\cmidrule(lr){2-6}
& FNO                            & 0.2917 $\pm$ {\scriptsize 0.1683} & 0.7525 $\pm$ {\scriptsize 0.1529} & 0.1375 $\pm$ {\scriptsize 0.0063} & 0.4816 $\pm$ {\scriptsize 0.0187} \\ \midrule
\multirow{2}{*}{Ours} & UNet     & 0.3279 $\pm$ {\scriptsize 0.0655} & 0.9292 $\pm$ {\scriptsize 0.1963} & 0.1035 $\pm$ {\scriptsize 0.0008} & \textbf{0.1189 $\pm$ {\scriptsize 0.0069}} \\
\cmidrule(lr){2-6}
& FNO                            & \textbf{0.1869 $\pm$ {\scriptsize 0.0015}} & \textbf{0.7379 $\pm$ {\scriptsize 0.1101}} & 0.1240 $\pm$ {\scriptsize 0.0040} & 0.3527 $\pm$ {\scriptsize 0.0620} \\
\bottomrule
\end{tabular}
\label{tab: ablation backbone}
\end{table}

\paragraph{Context and query mask ratio selection.} We conduct an ablation study examining how different choices of context and query mask ratios affect model performance. The results are presented in Table~\ref{tab: ablation context and query mask ratio}. We evaluate ratios ranging from 0.5 to 1.0 to understand the trade-offs between information availability and parameter update frequency identified in our theoretical analysis. As expected from our theoretical framework, intermediate ratios (0.5-0.9) achieve optimal performance by balancing the information gap and parameter update frequency trade-offs. Notably, when both context and query ratios are set to 100\%, our proposed method reduces to the MissDiff baseline, providing a direct comparison point that validates our experimental setup.
\begin{table}[ht]
\centering
\caption{Performance comparison of context and query mask ratio.}
\begin{tabular}{ccccc}
\toprule
\multirow{2}{*}{\textbf{Context Ratio}} & \multirow{2}{*}{\textbf{Query Ratio}} & \multicolumn{3}{c}{\textbf{Navier-Stokes}} \\ \cmidrule(lr){3-5} 
&             & 80\%            & 60\%            & 20\%              \\ \midrule
50\% & 50\%   & 0.2383          & 0.5338          & 2.0924            \\ 
70\% & 70\%   & \textbf{0.2076} & 0.5336          & \textbf{2.0336}   \\ 
90\% & 90\%   & 0.2252          & \textbf{0.5251} & 2.1309            \\ 
70\% & 100\%  & 0.2103          & 0.5276          & 2.1178            \\ 
100\% & 100\% & 0.2444          & 0.7023          & 2.5599            \\ 
\bottomrule
\end{tabular}
\label{tab: ablation context and query mask ratio}
\end{table}

\paragraph{Optimal denoiser approximation.} We approximate the optimal denoiser $\mathbb{E} \left[ \bm{x}_0 \mid \bm{x}_t, \bm{x}_{\text{obs}}, \bm{M} \right]$ through a weighted combination of diffusion expectation $\mathbb{E} \left[ \bm{x}_0 \mid \bm{x}_t \right]$ and imputation expectation $\mathbb{E} \left[ \bm{x}_0 \mid \bm{x}_{\text{obs}}, \bm{M} \right]$ using empirical weight $\omega_t$ (\eqref{eq: final imputation expectation}). We investigate different weighting strategies to understand their impact on reconstruction quality during the multi-step generation process (200 steps). The results can be seen in Tab.~\ref{tab: ablation denosing expectation}. 
\begin{table}[ht]
\centering
\caption{Impact of weighting strategies on optimal denoiser approximation.}
\begin{tabular}{cccc}
\toprule
\multicolumn{1}{c}{\multirow{2}{*}{\textbf{Method}}} & \multicolumn{3}{c}{\textbf{Navier-Stokes}} \\ \cmidrule(lr){2-4} 
\multicolumn{1}{c}{}  & 80\%   & 60\%   & 20\%              \\ \midrule
w/o $\omega_t$          & 0.2334 $\pm$ {\scriptsize 0.0115}	& 0.5649 $\pm$ {\scriptsize 0.0329} & 3.3820 $\pm$ {\scriptsize 0.1704} \\ 
$\omega_t = t$          & \textbf{0.2331 $\pm$ {\scriptsize 0.0117}} & \textbf{0.5633 $\pm$ {\scriptsize 0.0332}} & \textbf{3.1881 $\pm$ {\scriptsize 0.2170}} \\ 
$\omega_t = t^2$        & 0.2334 $\pm$ {\scriptsize 0.0116} & 0.5647 $\pm$ {\scriptsize 0.0333} & 3.3557 $\pm$ {\scriptsize 0.1973} \\
\bottomrule
\end{tabular}
\label{tab: ablation denosing expectation}
\end{table}

\paragraph{Influence of ensemble size $K$.}
Tab.~\ref{tab: ablation ensemble size k} shows that increasing $K$ consistently improves performance (see Theorem.~\ref{thm: pomct}). We use $K=10$ by default to balance efficiency and accuracy.

\textbf{Training cost:} Our training procedure has a comparable computational cost to baseline diffusion methods (MissDiff, AmbientDiff) since the network architecture, input/output dimensions, and number of training steps are the same. The main difference is in our context-query partitioning strategy during training, which adds negligible overhead.

\textbf{Inference cost:} The additional computational cost comes from sampling:
\begin{itemize}
    \item For \emph{single sample generation} (common in scientific applications): The $K$ forward passes can be executed in parallel since they are independent. Wall-clock time increases sub-linearly with $K$ rather than $K$-fold. For example, on an A800 GPU, $K=10$ requires $3.36\times$ the time of $K=1$ for 50-frame $32\times32$ sequences, and $8.32\times$ for 100-frame $64\times64$ sequences. The overhead depends on hardware parallelization efficiency and batch size.
    
    \item For \emph{batch generation} of multiple samples: The computational cost scales approximately $K$ times compared to baselines. This represents a fundamental trade-off: our method enables learning from realistically incomplete data, a necessity in many scientific domains where complete measurements are physically impossible.
\end{itemize}

\begin{table}[ht]
\centering
\caption{Impact of ensemble size $K$ on Navier-Stokes imputation. Errors decrease with larger $K$ but with diminishing returns. Time cost is measured for single-sample forward passes on a single GPU.}
\begin{tabular}{ccccc}
\toprule
\multicolumn{1}{c}{\multirow{2}{*}{\textbf{Ensemble size} $K$}} & \multicolumn{3}{c}{\textbf{Navier-Stokes} ($\times 10^{-3}$)} & \multirow{2}{*}{\textbf{Time Ratio}} \\ \cmidrule(lr){2-4} 
\multicolumn{1}{c}{}  & 80\%   & 60\%   & 20\%   &  \\ \midrule
$K=1$   & 0.2239 & 0.5652 & 2.1446  & 1.00$\times$ \\
$K=2$   & 0.2147 & 0.5475 & 2.0822  & 1.81$\times$ \\
$K=3$   & 0.2119 & 0.5418 & 2.0640  & 2.65$\times$ \\
$K=5$   & 0.2094 & 0.5371 & 2.0462  & 4.26$\times$ \\
$K=10$  & 0.2076 & 0.5337 & 2.0343  & 8.32$\times$ \\
$K=20$  & 0.2068 & 0.5320 & 2.0277  & 16.48$\times$ \\
$K=50$  & 0.2062 & 0.5308 & 2.0240  & 41.06$\times$ \\
\bottomrule
\end{tabular}
\label{tab: ablation ensemble size k}
\end{table}

\subsection{Complete results} \label{app: Complete results}
We provide the complete experimental results, including standard deviations, to demonstrate the statistical significance and variance of our findings. 

\begin{table}[ht]
\centering
\caption{Performance comparison on PDE imputation tasks. Each sample represents a temporal sequence of 50 frames, each with $32 \times 32$ spatial resolution. Results show the MSE between the reconstructed and the ground truth solutions from the PDE solver, averaged over all timesteps.}
\resizebox{\textwidth}{!}{
\begin{tabular}{cccccc}
\toprule
\multicolumn{2}{c}{\multirow{2}{*}{\textbf{Method}}} & \multicolumn{2}{c}{\textbf{Shallow Water} (feasibility loss, $\times 10^{-8}$)} & \multicolumn{2}{c}{\textbf{Advection} (simulation MSE, $\times 10^{-1}$)} \\ \cmidrule(lr){3-4} \cmidrule(lr){5-6} 
                                         && 80\%                & 30\%                & 80\%                & 30\%   \\ \midrule
\multicolumn{2}{c}{Temporal Consistency}  & 3.0248              & 4.2742              & 0.5097              & 0.6911        \\ 
\multicolumn{2}{c}{Fast Marching}         & 2.5931              & 8.8631              & 0.2127              & 0.5222        \\ 
\multicolumn{2}{c}{Navier-Stokes}         & 0.7045              & 2.8244              & 0.1350              & 0.4805        \\ 
\multicolumn{2}{c}{MissDiff}              & 0.2917 $\pm$ {\scriptsize 0.1683} & 0.7527 $\pm$ {\scriptsize 0.1530} & \textbf{0.1030 $\pm$ {\scriptsize 0.0004}} & 0.1197 $\pm$ {\scriptsize 0.0096} \\ 
\multicolumn{2}{c}{AmbientDiff}           & 0.1927 $\pm$ {\scriptsize 0.0050} & 0.7504 $\pm$ {\scriptsize 0.1119} & 0.1039 $\pm$ {\scriptsize 0.0009} & 0.1219 $\pm$ {\scriptsize 0.0075} \\ \midrule
\multirow{2}{*}{\hspace{1.5em} \textbf{Ours}} & 1 step    & 0.1878 $\pm$ {\scriptsize 0.0054} & \textbf{0.7379 $\pm$ {\scriptsize 0.1101}} & 0.1035 $\pm$ {\scriptsize 0.0008}  & \textbf{0.1189 $\pm$ {\scriptsize 0.0069}} \\ 
                      & 200 steps & \textbf{0.1869 $\pm$ {\scriptsize 0.0015}} & 0.7502 $\pm$ {\scriptsize 0.1120} & 0.1037 $\pm$ {\scriptsize 0.0009}  & 0.1231 $\pm$ {\scriptsize 0.0109} \\ 
\bottomrule
\end{tabular}
}
\label{tab: PDE dataset}
\end{table}

\subsection{Visualization of generated samples} \label{app: generated sample visualization}

\begin{figure}[ht]
    \centering
    \includegraphics[width=\linewidth]{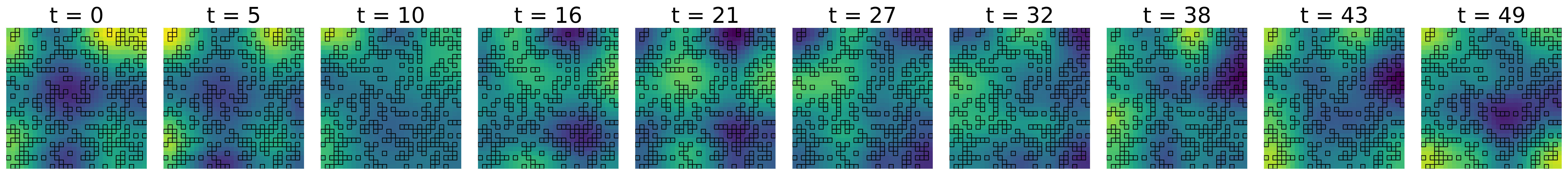} \\
    \includegraphics[width=\linewidth]{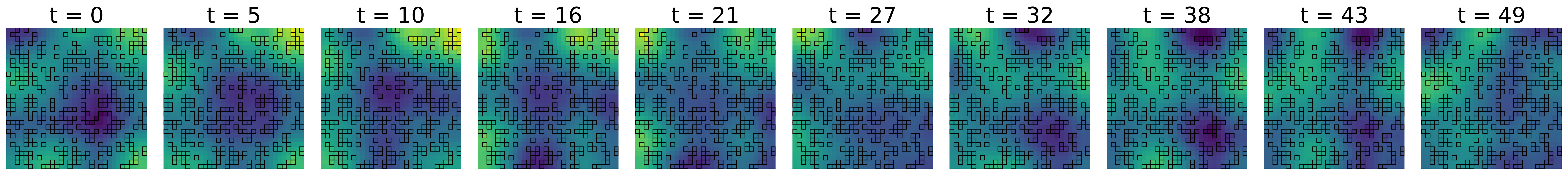}
\caption{Imputed results on 2D Shallow Water dataset where 30\% of the original data points are available for training.}
\end{figure}
\begin{figure}[ht]
    \centering
    \includegraphics[width=\linewidth]{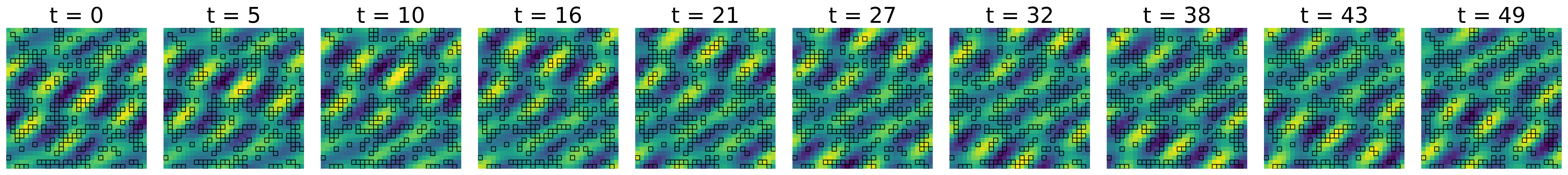} \\
    \includegraphics[width=\linewidth]{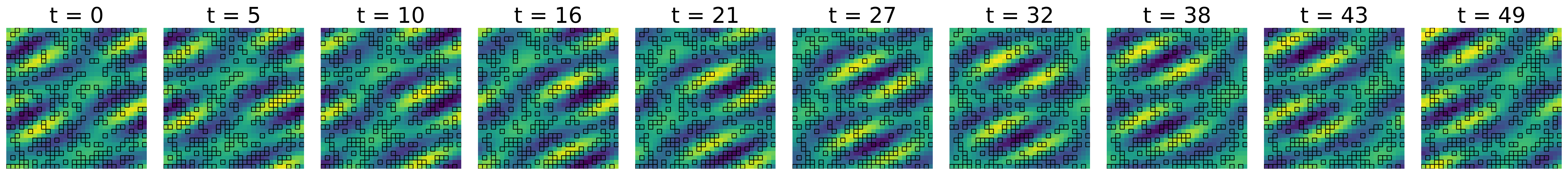}
\caption{Imputed results on 2D Advection dataset where 30\% of the original data points are available for training.}
\end{figure}

\begin{figure}[ht]
    \centering
    \includegraphics[width=\linewidth]{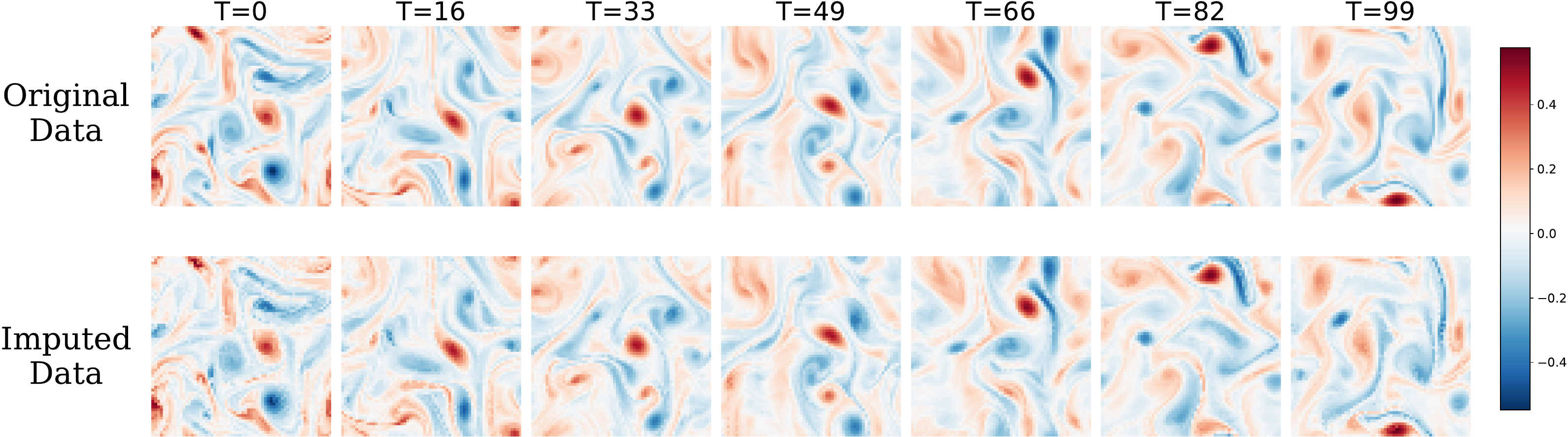} \\
    \vspace{20pt}
    \includegraphics[width=\linewidth]{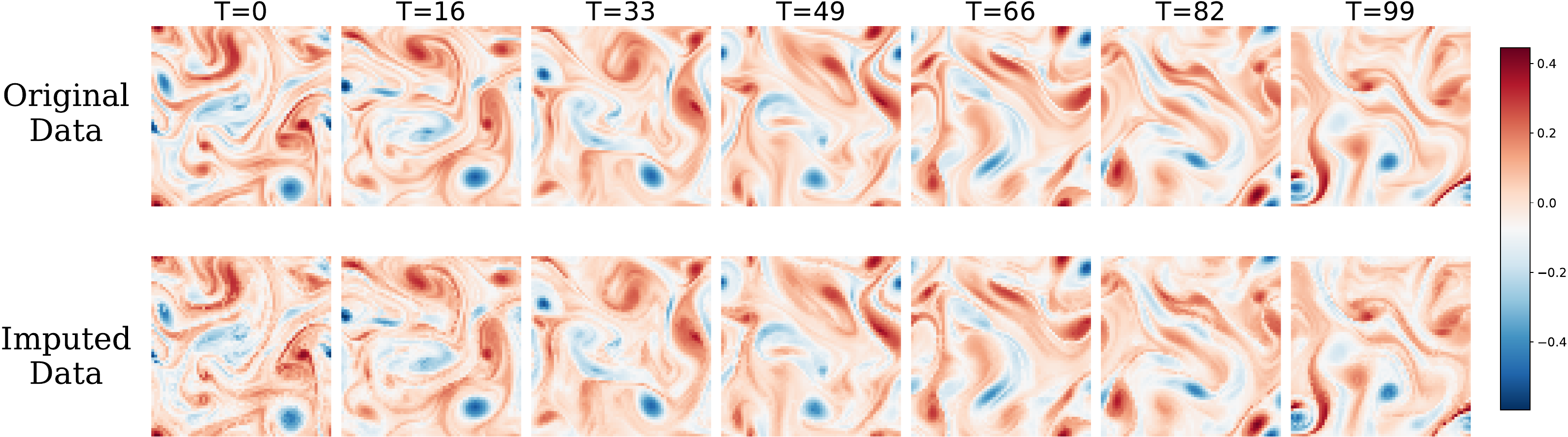}
\caption{Sample imputation results on 2D Navier-Stokes dataset where 80\% of the original data points are available for training.}
\end{figure}
\begin{figure}[ht]
    \centering
    \includegraphics[width=\linewidth]{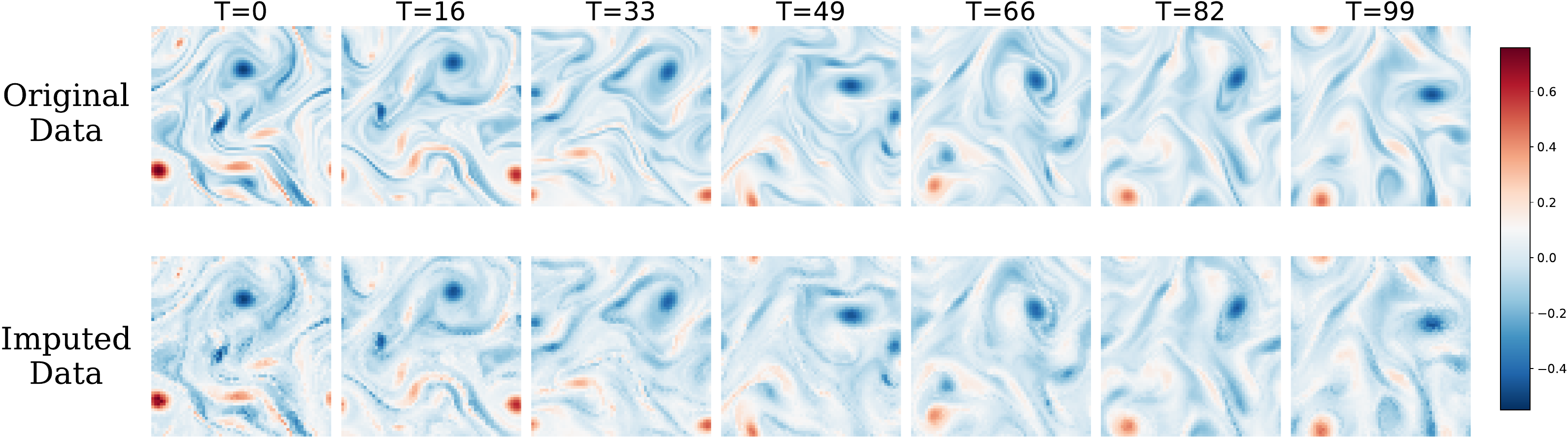} \\
    \vspace{20pt}
    \includegraphics[width=\linewidth]{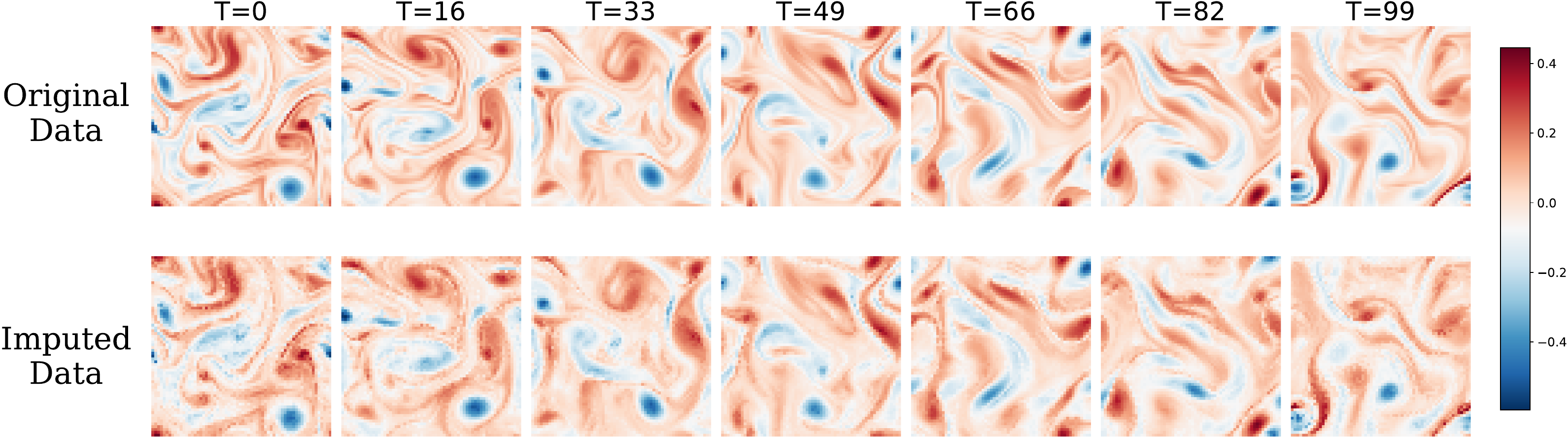}
\caption{Sample imputation results on 2D Navier-Stokes dataset where 60\% of the original data points are available for training.}
\end{figure}
\begin{figure}[ht]
    \centering
    \includegraphics[width=\linewidth]{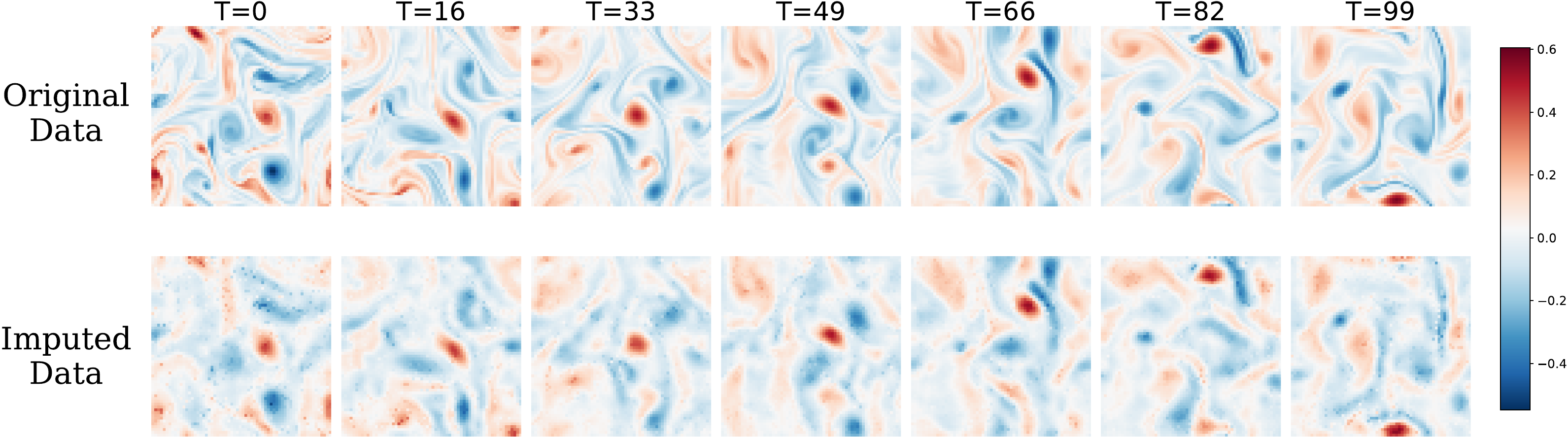} \\
    \vspace{20pt}
    \includegraphics[width=\linewidth]{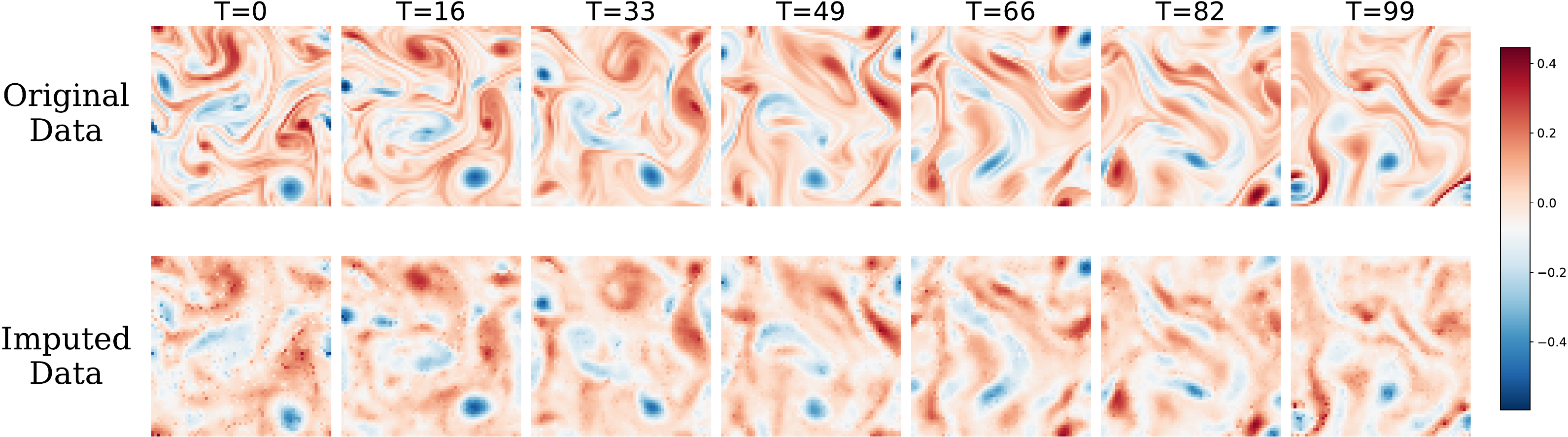}
    \caption{Sample imputation results on 2D Navier-Stokes dataset where 20\% of the original data points are available for training.}
\end{figure}

\begin{figure}[t]
    \centering
    \hfill
    \includegraphics[width=\textwidth]{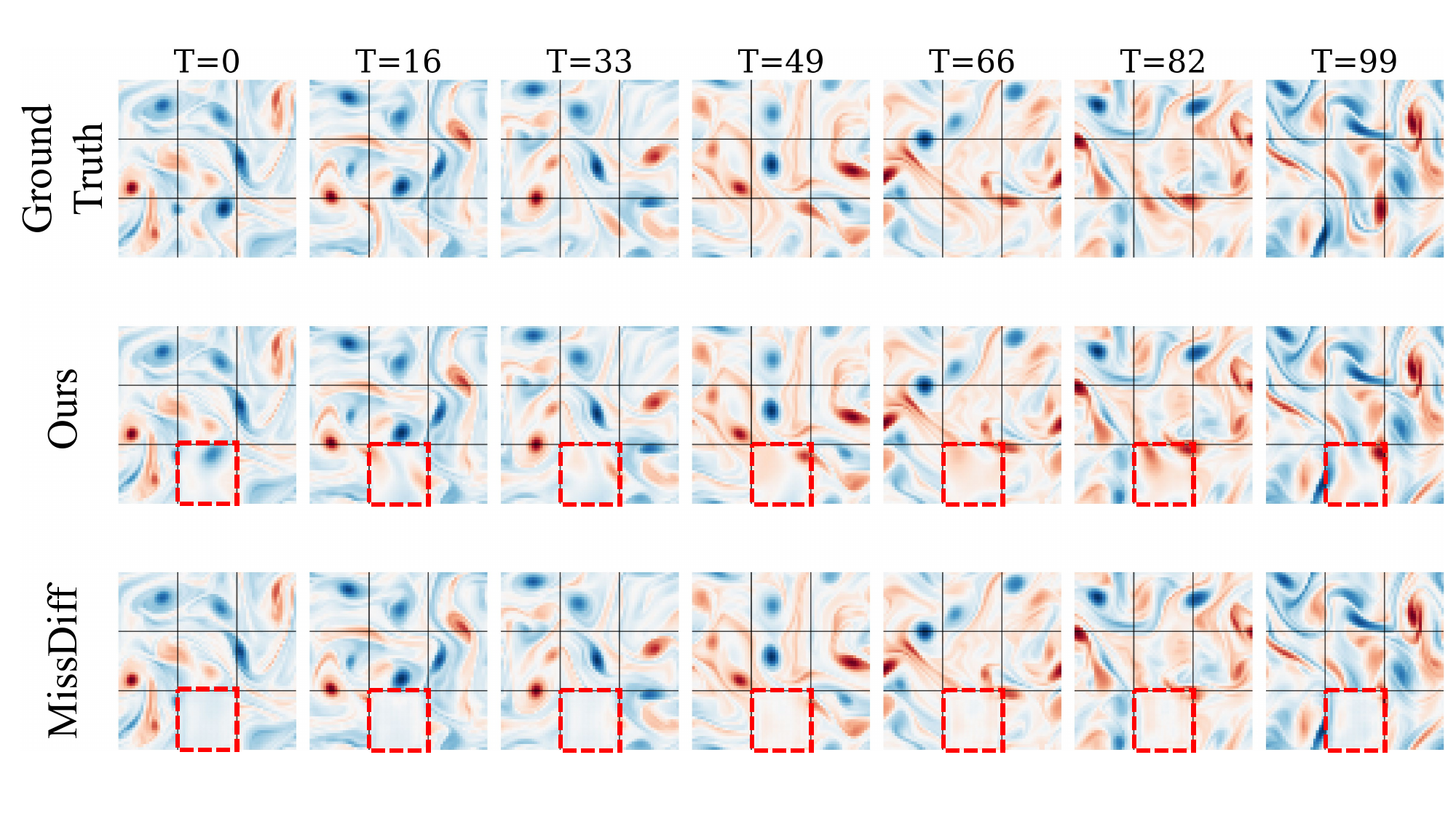} 
    \caption{Comparison of original and imputed data from the Navier-Stokes dataset (one missing block). Each sample consists of 100 frames at $64 \times 64$ resolution.}
\end{figure}

\begin{figure}[t]
    \centering
    \hfill
    \includegraphics[width=\textwidth]{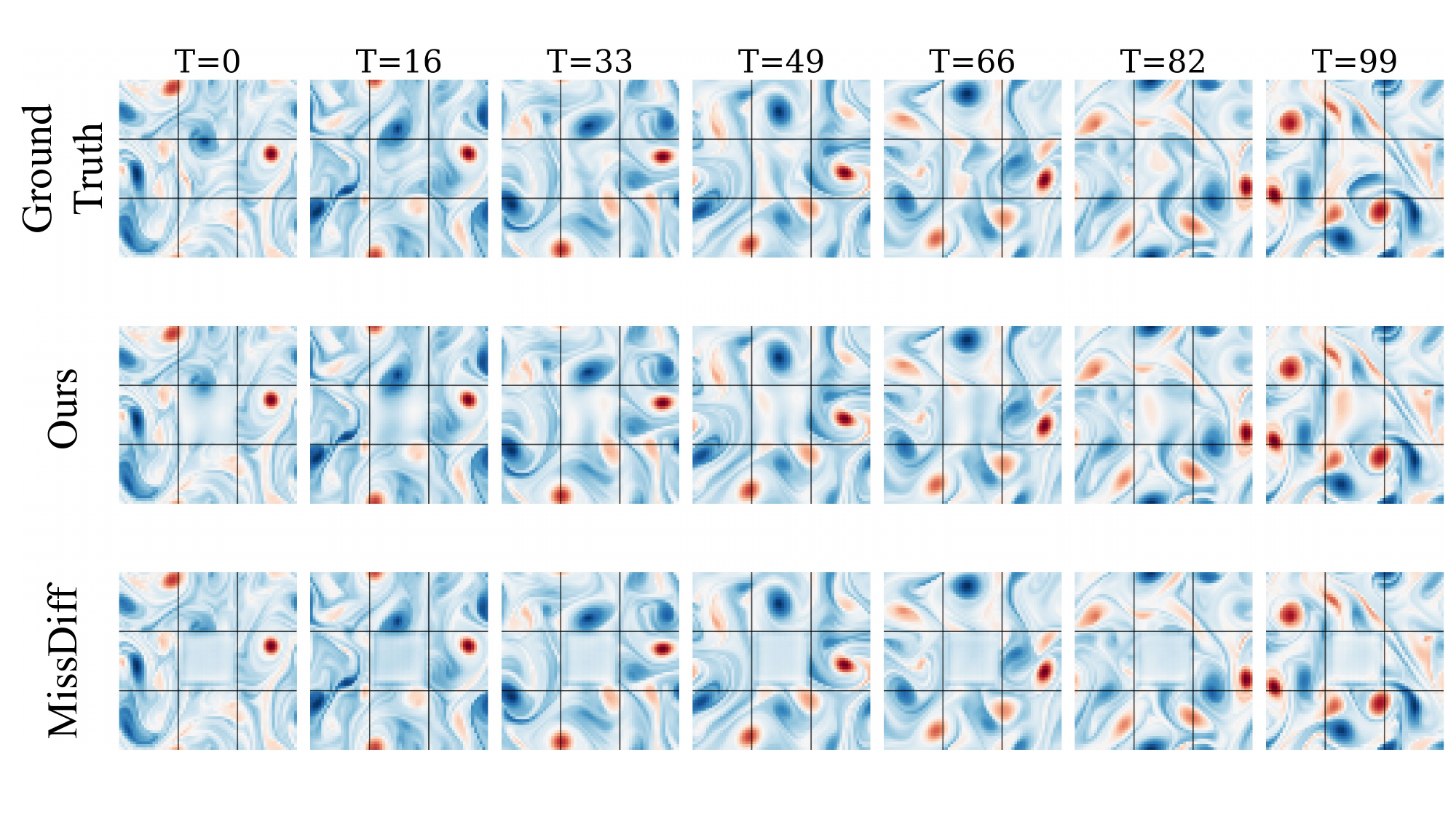} 
    \caption{Imputation results for our method vs. MissDiff. The imputed block is the center one. The baseline method shows characteristic white central regions with no meaningful generation, confirming our theoretical prediction (Theorem.~\ref{thm: osucmwqi}) of learning failure in these areas.}
    \label{fig: baseline fails}
\end{figure}

\begin{figure}[ht]
    \centering
    \begin{subfigure}{0.48\linewidth}
        \centering
        \includegraphics[width=\linewidth]{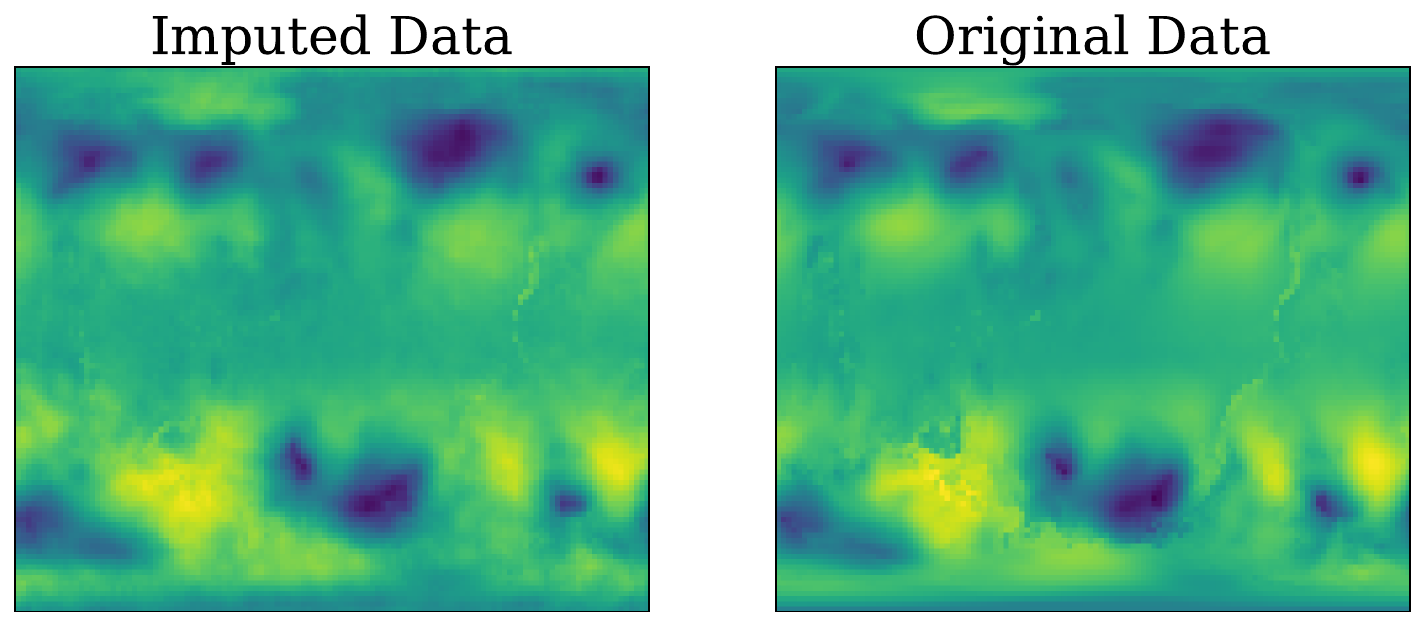}
        \caption{Mean sea level pressure (msl)}
    \end{subfigure}
    \hfill
    \begin{subfigure}{0.48\linewidth}
        \centering
        \includegraphics[width=\linewidth]{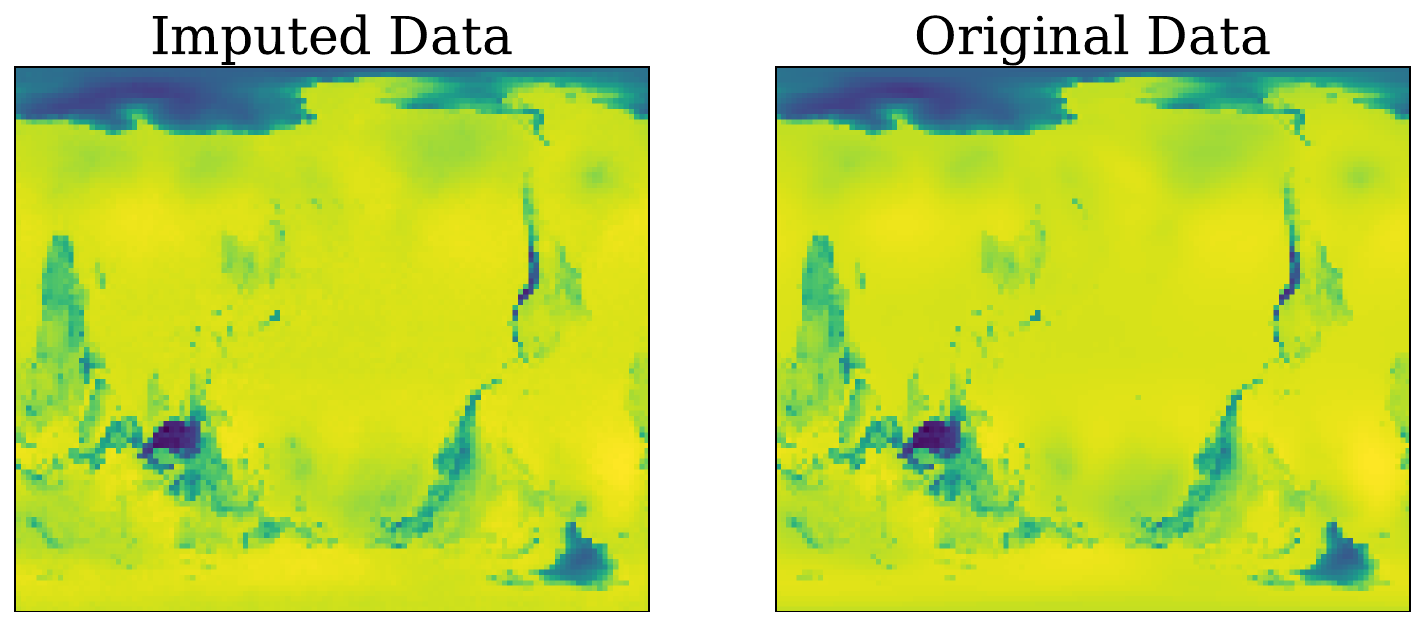}
        \caption{Surface pressure (sp)}
    \end{subfigure}

    \vspace{0.5em}

    \begin{subfigure}{0.48\linewidth}
        \centering
        \includegraphics[width=\linewidth]{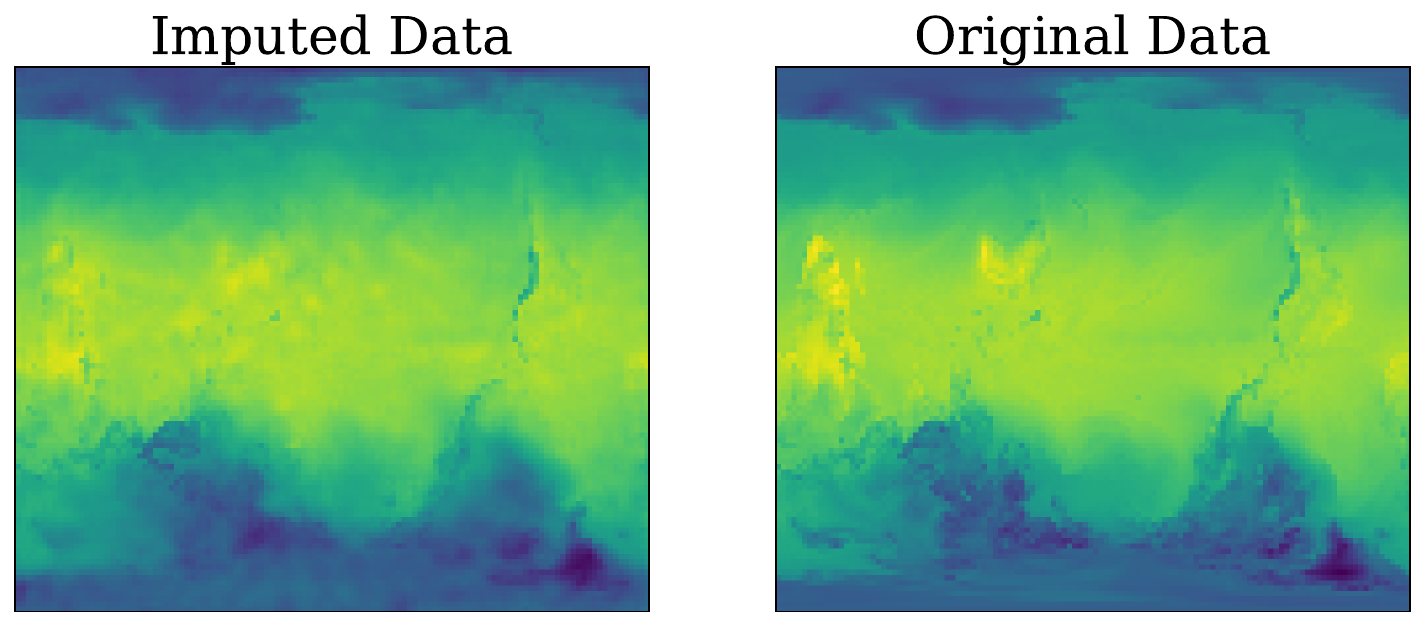}
        \caption{2-meter temperature (t2m)}
    \end{subfigure}
    \hfill
    \begin{subfigure}{0.48\linewidth}
        \centering
        \includegraphics[width=\linewidth]{figs/ecmwf/ecmwf_mask_20/tcwv.pdf}
        \caption{Total column water vapor (tcwv)}
    \end{subfigure}

    \vspace{0.5em}

    \begin{subfigure}{0.48\linewidth}
        \centering
        \includegraphics[width=\linewidth]{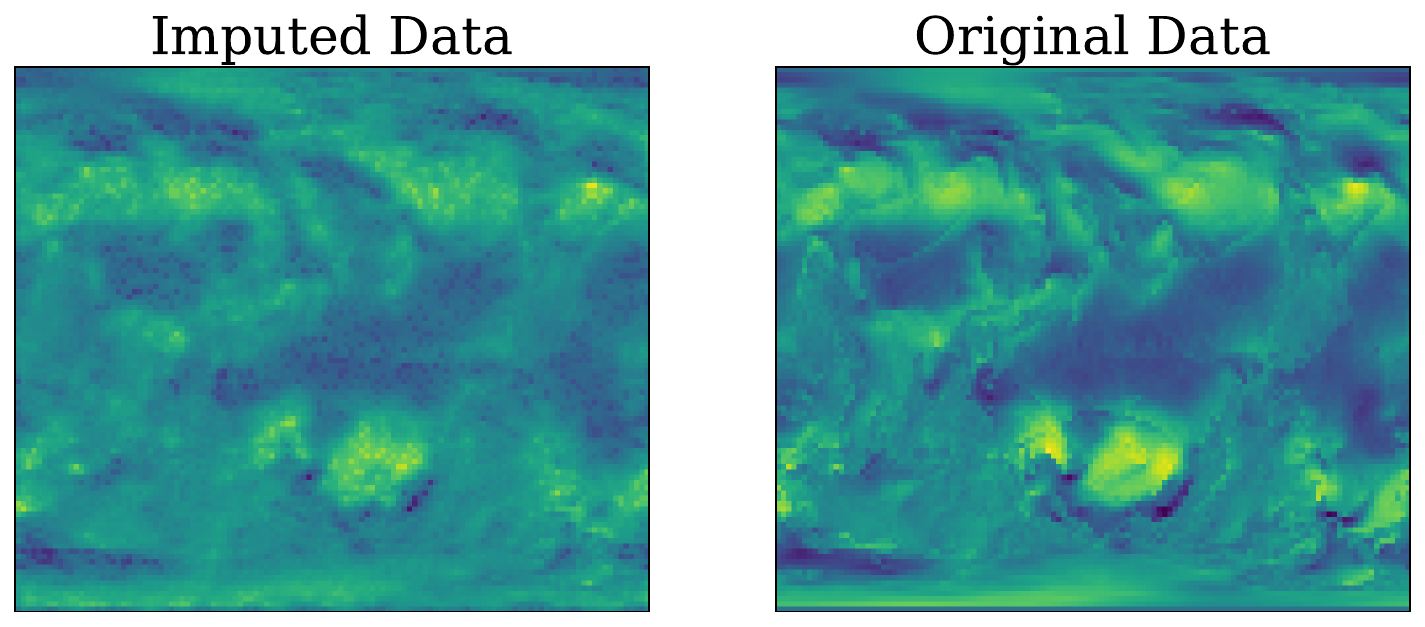}
        \caption{100-meter U-wind component (u100)}
    \end{subfigure}
    \hfill
    \begin{subfigure}{0.48\linewidth}
        \centering
        \includegraphics[width=\linewidth]{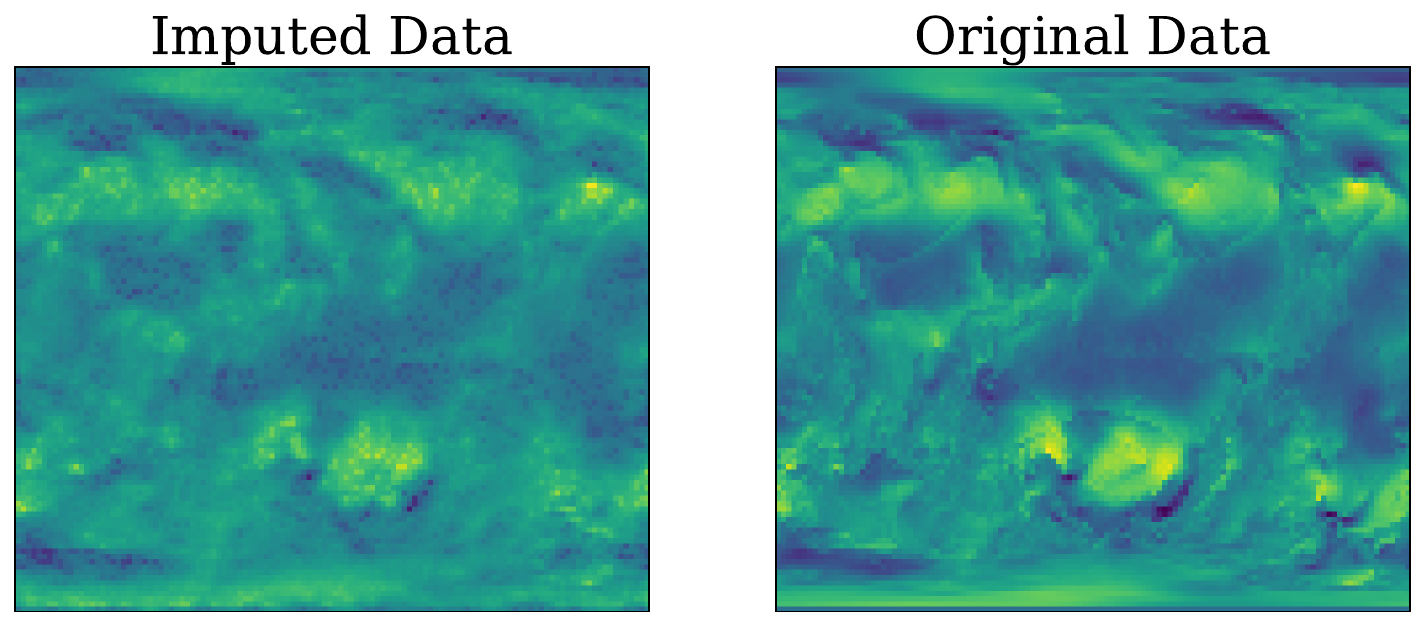}
        \caption{10-meter V-wind component (v10)}
    \end{subfigure}

    \vspace{0.5em}

    \begin{subfigure}{0.48\linewidth}
        \centering
        \includegraphics[width=\linewidth]{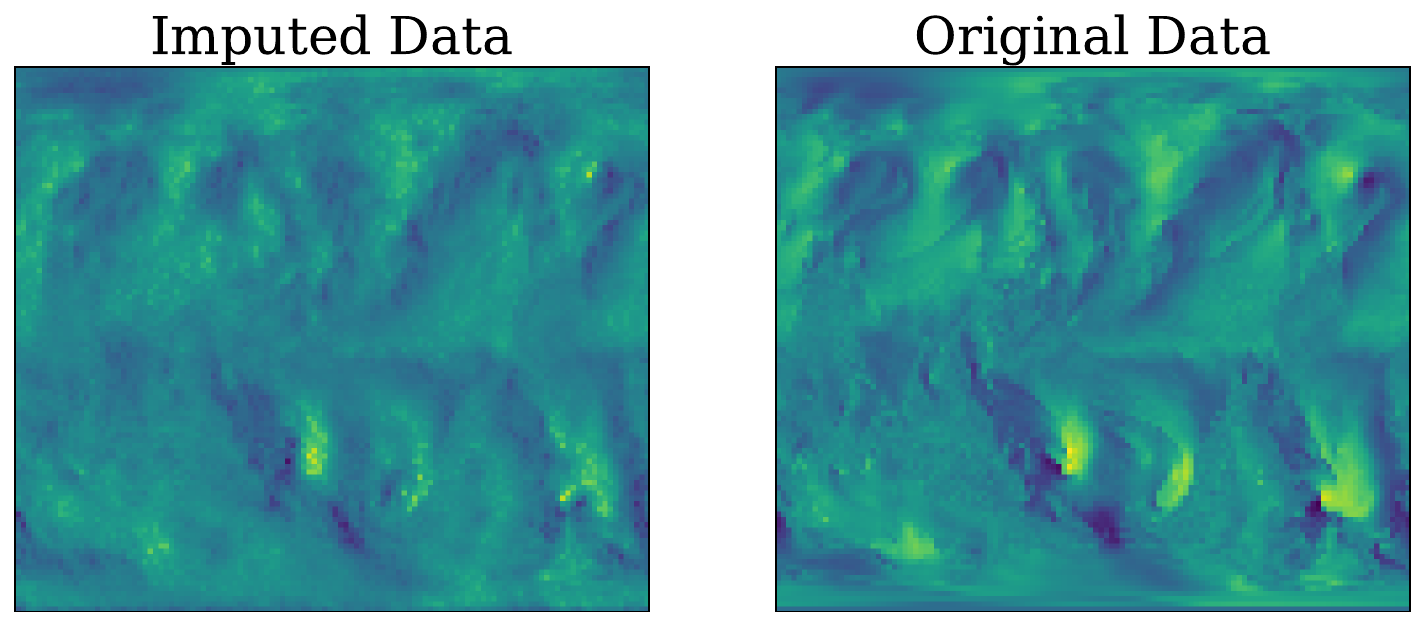}
        \caption{100-meter V-wind component (v100)}
    \end{subfigure}
    \hfill
    \begin{subfigure}{0.48\linewidth}
        \centering
        \includegraphics[width=\linewidth]{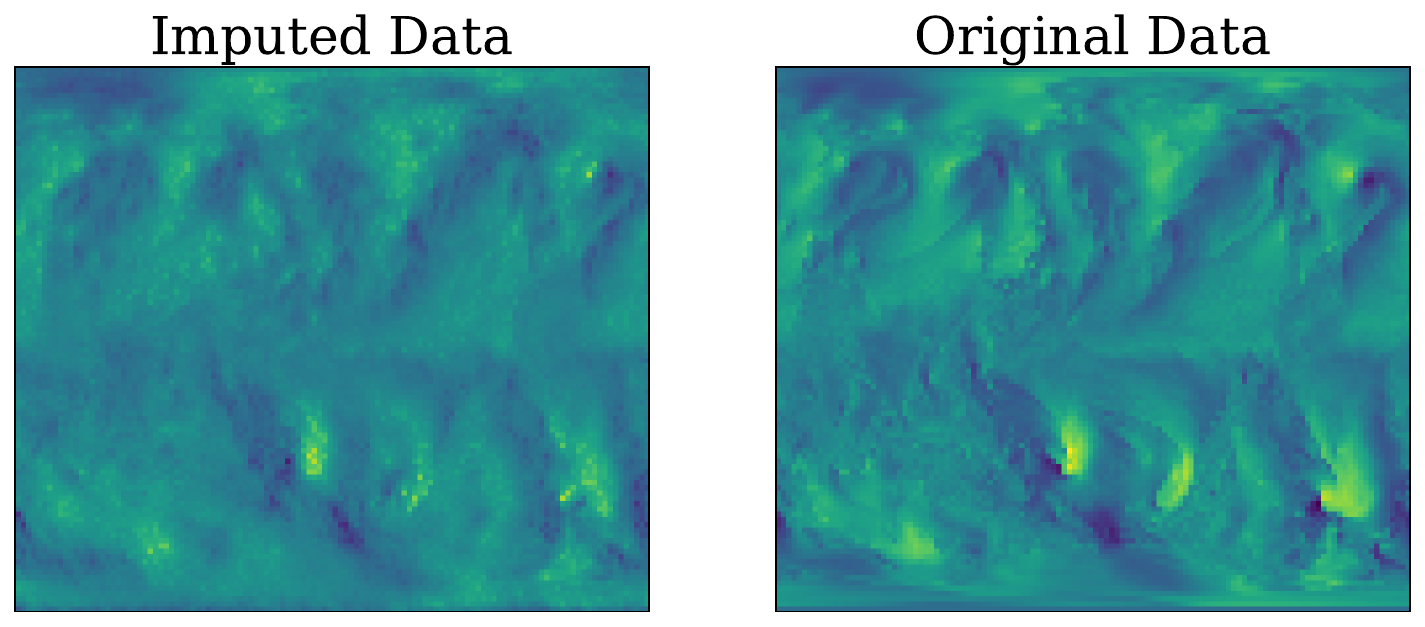}
        \caption{Geopotential (z)}
    \end{subfigure}

    \caption{Imputation results on the ERA5 dataset with 20\% observed points. 
    Each subfigure shows a different atmospheric variable. 
    The left column of each subfigure contains the imputed/reconstructed data, and the right column shows the original data.}
    \label{fig: ecmwf all}
\end{figure}

\section{Limitations and Future Work} \label{app: limitations}
Our work represents a first step towards systematically incorporating mask distribution priors into the training of generative models for incomplete data. A primary assumption in our current framework is that the mask distribution $p_{\text{mask}}(\bm{M})$ is known \emph{a priori} and is independent of the data $\bm{x}_0$. However, in certain real-world scenarios, the missingness mechanism can be data-dependent (e.g., a weather station failing due to the direct impact of a typhoon it is measuring) or follow complex patterns that are unknown. Our current methodology does not explicitly address these more complex cases. We believe that extending this framework to handle unknown or data-dependent mask distributions is a significant and important direction for future research.

On the theoretical front, our analysis provides guarantees for the asymptotic convergence of our training paradigm. We acknowledge that this analysis does not extend to a non-asymptotic regime. A more comprehensive theoretical investigation, such as deriving finite sample complexity bounds or formally quantifying the approximation error introduced by the neural network architecture, is considerably challenging. Such an analysis would need to account for the complex interplay between the diffusion process, the context-query sampling strategy, and the function approximator. We leave this rigorous theoretical extension as an important open problem for future work.

\section{LLM usage statement} 
We used large language models (Claude) to assist with manuscript preparation in the following capacities: (1) improving the clarity and grammatical correctness of our writing through proofreading and copy-editing suggestions, (2) formatting LaTeX code for tables and equations, (3) reviewing mathematical proofs for logical consistency and clarity, and (4) identifying and correcting typographical errors throughout the manuscript.

\end{document}